\documentclass{article}

\usepackage{caption}
\usepackage{subcaption}
\usepackage{enumitem}
\usepackage{xcolor}
\usepackage{arxiv}

\usepackage[utf8]{inputenc} 
\usepackage[T1]{fontenc}    
\usepackage{hyperref}       
\usepackage{url}            
\usepackage{booktabs}       
\usepackage{amsfonts}       
\usepackage{nicefrac}       
\usepackage{microtype}      
\usepackage{lipsum}		
\usepackage{graphicx}
\usepackage{natbib}
\usepackage{doi}
\newenvironment{proofsketch}[1][Proof sketch]{\begin{proof}[#1]}{\end{proof}}
\usepackage{caption}
\usepackage{subcaption}
\usepackage{graphicx}
\usepackage{amssymb}
\usepackage{algorithm}
\usepackage{algpseudocode}

\usepackage{caption}

 \usepackage{amsthm}

\usepackage{algpseudocode}
\usepackage{statmath}

\newtheorem{theorem}{Theorem}
\newtheorem{corollary}{Corollary}
\newtheorem{lemma}{Lemma}

\newtheorem{assumption}{Assumption}[section]
\newtheorem{remark}{Remark}[section]

 \usepackage{amsmath}

\title{	
A Theoretical Framework for LLM Fine-tuning Using Early Stopping for Non-random Initialization}

\author{ Zexuan Sun \\
	 Department of Statistics\\
  University of Wisconsin, Madison\\
  Madison, WI 53706 \\
  \texttt{zexuan.sun@wisc.edu} \\
	\And
	 Garvesh Raskutti \\
  Department of Statistics\\
  University of Wisconsin, Madison\\
  Madison, WI 53706 \\
  \texttt{raskutti@stat.wisc.edu} \\
}

\renewcommand{\shorttitle}{}

\hypersetup{
pdftitle={A template for the arxiv style},
pdfsubject={q-bio.NC, q-bio.QM},
pdfauthor={David S.~Hippocampus, Elias D.~Striatum},
pdfkeywords={First keyword, Second keyword, More},
}

\begin{document}
\maketitle
\begin{abstract}
	 In the era of large language models (LLMs), fine-tuning pretrained models has become ubiquitous. Yet the theoretical underpinning remains an open question. A central question is why only a few epochs of fine-tuning are typically sufficient to achieve strong performance on many different tasks. In this work, we approach this question by developing a statistical framework, combining rigorous early stopping theory with the attention-based Neural Tangent Kernel (NTK) for LLMs, offering  new theoretical insights on fine-tuning practices. 
     Specifically, we formally extend  classical NTK theory \citep{jacot} to non-random (i.e., pretrained) initializations and provide a convergence guarantee for attention-based fine-tuning. 
     One key insight provided by the theory is that the convergence rate with respect to sample size is closely linked to the eigenvalue decay rate of the empirical kernel matrix induced by the NTK.
     We also demonstrate how the  framework can be used to explain task vectors for multiple tasks in LLMs. Finally, experiments with modern language models on real-world datasets provide empirical evidence supporting our theoretical insights.
\end{abstract}

 \keywords{ Early stopping, large language models, Neural Tangent Kernel, task vectors }

\section{Introduction}
With the success of modern large language models (LLMs) (e.g. GPT 
models \citep{ radford2018gpt, brown2020gpt3}), it has become standard practice to address supervised natural language processing (NLP) tasks such as topic classification, textual entailment and others by fine-tuning a pretrained model on a task-specific dataset \citep{llmref1,llmref2}. Despite the strong empirical performance of this approach, our theoretical understanding of fine-tuning remains limited. A widely adopted convention in practice is to initialize with parameters from a pre-trained model and then do gradient updates for only a small number of epochs, typically between two and five, since empirical evidence suggests that longer training yields diminishing returns and may even increase the risk of overfitting \citep{overfit}. This naturally raises the question: why is a few epochs of fine-tuning sufficient? Intuitively, one might argue that pretrained models, having been exposed to vast text corpora, already provide strong representations. Yet a more formal and rigorous interpretation of this phenomenon is still lacking.

The vast number of parameters in modern models motivates the application of Neural Tangent Kernel (NTK) theory \citep{jacot} to the study of fine-tuning \citep{kernellm}. However, NTK theory alone is insufficient to capture the ``early-stopping'' phenomenon that is central to modern practice because it does not explain how changes in the data are captured by the NTK. To address this limitation, we build on theoretical advances in early stopping \citep{esnonpara,sunaistats} and apply to the LLM setting. In particular, \cite{sunaistats} combine early stopping with NTK theory to characterize the performance of wide fully connected neural networks and gradient boosting decision trees in estimating variable importance. This line of work suggests promising extensions to more complex network architectures in the context of modern fine-tuning.

In this paper, we consider a transformer-based network for \textit{regression} learning problems to give a unique perspective of understanding modern fine-tuning for large models. Our major contributions are:
\begin{enumerate}

\item We formally extend conventional infinite-width analysis to a multi-head attention-based network and our theory accounts for a pretrained initialization, which includes the stability of  NTK during training and the linearization of the multi-head attention network (Lemma \ref{linlemma}) in our setup. 
\item We extend the theoretical framework in \cite{sunaistats} to transformer based network fine-tuning, which includes the setting where additional structure can be appended to certain hidden layers of the pretrained model (Theorem   \ref{ftthm} and  \ref{lhthm}). 


\item  Based on the proven theoretical results, we  give formal explanation  of task 
arithmetic \citep{taskvector}, including negation and addition. And we recover the same weight disentanglement condition in \cite{tangenttask}, which is claimed to be the only necessary condition for task vector to work (Section \ref{addsec}) . 

\item We discuss the valuable insights offered by our proposed theoretical framework and connect them to real-world LLM fine-tuning practices (Section \ref{practicalinsight}). We also conduct experiments with modern LLMs, specifically, the GPT-Neo model, to provide empirical evidence supporting our theoretical insights (Section \ref{mainexp}).
\end{enumerate}


\section{RELATED WORK}

\paragraph{Early stopping theory.}
Early stopping has a long history \citep{esold1, esold2}, with significant theoretical developments in classification boosting \citep{esboost1, esbtyu, esboost2}, L2-boosting \citep{bl2, l2boost2}, and gradient-based algorithms in reproducing kernel Hilbert spaces (RKHS) \citep{grdrkhs1, esboost2, esnonpara, esbs, sunaistats}. Building on this line of work, the present paper extends the theoretical framework of early stopping in \cite{sunaistats}, which focused on large fully connected networks, to more complex neural architectures. To the best of our knowledge, this work provides the first theoretical analysis that leverages early stopping theory in the context of fine-tuning.

\paragraph{Kernel view of neural networks.}
Overparameterized models, including deep neural networks, are often analyzed through the Neural Tangent Kernel (NTK) framework \citep{jacot}, which provides guarantees of global convergence \citep{ode, ntkre1, ntkre2, ntkre3} and justifies linear approximations of network behavior \citep{linearnn}, with extensions to convolutional architectures \citep{cntk2}. Subsequent work based on the Tensor Program framework generalized these results to a wide range of neural architectures \citep{yang2019wide,yangtraning,yang2020tensor2}. However, these analyses all rely on random Gaussian initialization of the weights. \cite{kernellm} extend the theory to non-random (i.e., pretrained) initialization under the assumption that the pretraining task is sufficiently related to the downstream task. Specifically, they assume that the gradient of the pretrained network with respect to the downstream task tends to zero as the width approaches infinity. In contrast, our work do not have this assumption and establishes similar theoretical guarantees under sufficiently large width and embedding dimension, relying on different proof techniques.

\paragraph{Task vector theory.} 
Proposed by \cite{taskvector}, task arithmetic has recently emerged as a cost-effective and scalable method for editing pretrained models directly in weight space. Subsequent work has sought to develop a theoretical understanding of this approach, for example by analyzing it through the lens of NTK theory \citep{tangenttask} or by considering a conceptual learning setting in which each task corresponds to a binary classification problem defined by a discriminative pattern \citep{taskvthm}. Notably, the explanation derived from our theoretical results aligns with the \textit{weight disentanglement} condition introduced in \citep{tangenttask}.





\section{BASIC SETUP AND PRELIMS}
\subsection{Statistical framework}

In order to adapt to multiple scenarios, we consider a general learning setup.
Assume that initially, we have source data $\left(\mathbf{X}^{\text{src}}_i, Y^{\text{src}}_i\right), i=1, \ldots, N_1$ from $(X^{\text{src}}, Y^{\text{src}}) \sim P_{\text{src}}$ to train a pretrained model $f_{\text{PT}}$. Next, we have data from our target $\left(\mathbf{X}^{\text{tgt}}_i, Y^{\text{tgt}}_i\right), i=1, \ldots, N_1$ from $(X^{\text{tgt}}, Y^{\text{tgt}}) \sim P_{\text{tgt}}$. The basic idea is using the pretrained model $f_{\text{PT}}$ as a starting point to model our target function $f_{\text{tgt}}$ without the need to train a new model from scratch. More precisely, we plan to use $f_{\text{PT}}$ as initialization and train with $\left(\mathbf{X}^{\text{tgt}}_i, Y^{\text{tgt}}_i\right)$ to get the so-called fine-tuned model $f_{\text{FT}}$. 
We use
$\boldsymbol{X}^{\text{src}}$,
$\boldsymbol{X}^{\text{tgt}}$, 
 $\boldsymbol{Y}^{\text{src}}$
and $\boldsymbol{Y}^{\text{tgt}}$ to denote $({\mathbf{X}^{\text{src}}_1}^{ T}, \dots, {\mathbf{X}^{\text{src}}_{N_1}}^{ T})$, $({\mathbf{X}^{\text{tgt}}_1}^{ T}, \dots, {\mathbf{X}^{\text{tgt}}_{N_2}}^{ T})$,
$(Y^{\text{src}}_1, \dots, Y^{\text{src}}_{N_1})^T$
and $(Y^{\text{tgt}}_1, \dots, Y^{\text{tgt}}_{N_2})^T$, respectively.
 




We assume that the samples take the following form with respect to data $X^{\text{src}}$ and $X^{\text{tgt}}$ :
\begin{equation}
    \begin{aligned}
& Y^{\text{src}}_i=
\mathbb{E}[Y_i^{\text{src}} \mid X_i^{\text{src}}]
+w^{\text{src}}_i \\
& Y^{\text{tgt}}_i=\mathbb{E}\left[Y_i^{\text{tgt}} \mid X_i^{\text{tgt}} \right]+w_i^{\text{tgt}}
\end{aligned}
\end{equation}
where $w^{\text{src}}_i$ and $w^{\text{tgt}}_i$ are independent noise random variables. The true functions $f_{\text{PT}}(X^{\text{src}}) = \mathbb{E}[Y^{\text{src}} \mid X^{\text{src}}]$ and $f_{\text{tgt}}(X^{\text{tgt}}) = \mathbb{E}[Y^{\text{tgt}} \mid X^{\text{tgt}}]$.



To assess how well the approximation $\widehat{f_{\text{tgt}}}$ matches the target function $f_{\text{tgt}}$, we mainly consider  $L^2(P)$-norm:
\begin{equation}
    \left\|\widehat{f_{\text{tgt}}}-f_{\text{tgt}}\right\|_2^2=\mathbb{E}\left[\left(\widehat{f_{\text{tgt}}}(\mathbf{X^{\text{tgt}}})-f_{\text{tgt}}(\mathbf{X^{\text{tgt}}})\right)^2\right]
\end{equation}
which corresponds to the commonly used mean squared error.

\subsection{Network Structure}
\label{nnsetup}
In this paper, we consider network structure with basic transformer blocks\footnote{Specifically, we consider a vanilla transformer decoder without  positional encoding, but layer normalization is considered in the theory.}.   For theoretical convenience, we formulate the learning task as a \textit{regression} problem.


\paragraph{Neural networks:}  Let $f^{l}(x)$ denotes the output of $l^{\text {th }}$ layer for an input $x \in \mathcal{X} \subset \mathbb{R}^{T \times d^0}$, and $g^{l}(x):=\phi\left(f^{l}(x)\right)$ the corresponding post-nonlinearity where $\phi: \mathbb{R} \rightarrow \mathbb{R}$ is the activation function. We assume the network has $L$ hidden layers, making the output $f^{L+1}(x) \in \mathbb{R}$. If $l^{\text {th }}$  layer is an attention layer, the output   $f^{l}(x)  \in \mathbb{R}^{T \times d^{l}}$ 
where $T$ is the sequence length (spatial dimension) and $d^l$ is the 
 embedding dimensions. The network is optimized via gradient descent with learning rate $\epsilon$.


\paragraph{Attention:}   The output of each head $h$ is computed as follows:
\begin{equation}
    f^{l h}(x)=\zeta^{lh}
    g^{l-1}(x) W^{l h, V} \in \mathbb{R}^{T \times d^{l}}
\end{equation}
where
$\zeta^{lh} = \zeta\left(\frac{1}{d^{l, G}} g^{l-1}(x) W^{l h, Q}\left(g^{l-1}(x) W^{l h, K}\right)^{\top}\right)$,
$W^{l h , Q}, W^{l h , K} \in \mathbb{R}^{d^{l-1} \times d^{l, G}}$, $W^{l, V} \in \mathbb{R}^{d^{l-1} \times d^{l}}$, and $\zeta$ is the row-wise softmax function. Rather than using the conventional $\frac{1}{\sqrt{d^{l,G}}}$ scaling, we consider the same scaling as in \cite{yang2019wide}. This scaling is crucial to the theoretical proof, which makes sure the Jacobian stays within the proper order as the network width increases.

Each attention layer contains $d^{l,H}$ heads. The \textit{multi-head} output $f^{l}(x)$ is given by
\begin{equation}
f^{l}(x) = \left[f^{l1}(x), \ldots, f^{l d^{l,H}}(x)\right] W^{l,O},
\end{equation}
where the outputs of the $d^{l,H}$ independently parameterized heads are concatenated into a $T \times (d^{l,H} d^{l})$ matrix and then projected back to dimension $T \times d^{l}$ via $W^{l,O} \in \mathbb{R}^{d^{l,H} d^{l} \times d^{l}}$.

\paragraph{Weight distribution:} 
 We assume random Gaussian
initialization of the weights for the \emph{pretrained model} $f_{\text{PT}}$ only.
Specifically, for $f_{\text{PT}}$, we 
initialize the weights with 
 $W_{i j}^{l h, Q} \sim \mathcal{N}\left(0, \sigma_Q^2 / d^{l-1}\right)$, $W_{ i j}^{l h, K} \sim \mathcal{N}\left(0, \sigma_K^2 / d^{l-1}\right), W_{ i j}^{l h, V} \sim \mathcal{N}\left(0, \sigma_V^2 / d^{l-1}\right)$, and $W_{ i j}^{l, O} \sim \mathcal{N}\left(0, \sigma_O^2 /\left(d^{l, H} d^{l}\right)\right)$, all i.i.d. over the $i, j$ and $l, h$.


\subsection{Reproducing Kernel Hilbert Spaces}
To analyze early stopping in kernel-based methods, we rely on the framework of reproducing kernel Hilbert spaces  \citep{rkhs}. Specifically, consider a Hilbert space $\mathcal{H} \subset L^2(P)$ consisting of functions $g: \mathcal{X} \to \mathbb{R}$ with $\|g\|_{L^2(P)}<\infty$, equipped with an inner product $\langle \cdot, \cdot \rangle_{\mathcal{H}}$ under which $\mathcal{H}$ is complete. The space $\mathcal{H}$ is an RKHS if there exists a symmetric kernel function $\mathbb{K}: \mathcal{X} \times \mathcal{X} \to \mathbb{R}{+}$ such that (a) for each $x \in \mathcal{X}$, the function $\mathbb{K}(\cdot, x)$ belongs to $\mathcal{H}$, and (b) the reproducing property holds: $f(x)=\langle f, \mathbb{K}(\cdot, x)\rangle{\mathcal{H}}$ for all $f \in \mathcal{H}$. Such a kernel function is required to be positive semidefinite.

For a specific kernel $\mathbb{K}$ on data $X$, we define the  entries of the associated \textit{empirical kernel matrix} $K$ as follows:
\begin{equation}
     K(i,j) = \frac{1}{N} \mathbb{K}(\mathbf{X}_i, \mathbf{X}_j).
\end{equation} 
As stated in \cite{sunaistats}, 
The \textit{empirical kernel matrix} is fundamental to our theoretical  analysis, providing the basis for reparameterizing the gradient update equation and deriving the optimal early stopping rule.


\subsection{Neural Tangent Kernel}

The Neural Tangent Kernel (NTK), first introduced by \cite{jacot}, serves as a theoretical framework for analyzing neural networks within the RKHS setting. 
Denote the neural network by $f(\theta, x)$, its \textit{Neural Tangent Kernel} is defined as
\begin{equation}
    \left\langle
     \nabla_{\theta} f\left(\theta, x\right),
     \nabla_{\theta} f\left(\theta, x^{\prime}\right)\right\rangle.
\end{equation}
A large body of theoretical results has been established for neural networks under random normal initialization and in the infinite-width regime. In this setting, it is well known that the NTK remains constant throughout training, and the initial kernel converges in probability to a deterministic limiting kernel (add reference later).
\citep{sunaistats} extend these results to warm-start initialization for  fully connected networks.
For the network considered in this paper, we aim to get similar results, which enables us to build a theoretical framework to explain modern deep learning fine-tuning.

In particular, at each iteration, the network $f_{\tau}$ induces a corresponding NTK, denoted by $\mathbb{K}_{\tau}$. This kernel will converge to a stationary kernel $\mathbb{K}$ as the width goes to infinity. In particular, the functional form is determined by the structure of the network, not by the input training data \citep{sunaistats}.

In this paper, we formally show that for networks including transformer decoders with pre-trained initialization, the NTK remains stable during training when the width and hidden dimension are sufficiently large. The challenging part of extension to including transformer decoder is to deal with the multi-head attention.

Denote the Jacobian of the network evaluated on the data $\boldsymbol{X}^{\text{tgt}}$ as $J(\theta) \in \mathbb{R}^{N_2 \times |\theta|}$. The \textit{empirical kernel matrix} is  can be expressed as 
\begin{equation}
    K =   \frac{1}{N_2} J(\theta)  J(\theta)^{\top}. 
\end{equation}

The calculation of gradient becomes more complicated after introducing attention.
Consider layer $l$ to be the an multi-head attention layer.
We define the following intermediate quantities:
\begin{equation}
    \begin{aligned}
& Q_h=g^{l-1} W^{lh,Q}, \quad K_h= g^{l-1}W^{lh,K}, \quad V_h=g^{l-1} W^{lh,V}, \\
& S_h=\frac{1}{d^{l,G}} Q_h K_h^{\top}, \quad P_h=\operatorname{softmax}_{\mathrm{rows}}(S_h), \quad A_h=P_h V_h,  \\
& \Lambda_h=\operatorname{block\_ diag}\left(\Lambda_{h,0}, \ldots, \Lambda_{h,T-1}\right) \in \mathbb{R}^{T^2 \times T^2}, \\ &\Lambda_{h,i}=\operatorname{diag}\left(P_{h,i:}\right)-P_{h,i:} P_{h,i:}^{\top} . 
\end{aligned}
\label{nta}
\end{equation}
Then the
Jacobian w.r.t. the  attention weights is:
\begin{equation}
    \begin{aligned}
&    J(W^{lh,V})  = {g^{l-1}}^{\top} P_h^{\top} \frac{\partial f^{L+1}}{\partial f^{lh}} {W^{lh,O}}^{\top} \\
&     J(W^{l,O})  = \left[f^{l 1}(x), \ldots, f^{l d^{l, H}}(x)\right]^T 
     \frac{\partial  f^{L+1} }{\partial f^l}  \\
     & J(W^{lh,Q}) = {g^{l-1}}^{\top} \cdot\left(\frac{1}{ d^{l,G}} \cdot
{\frac{\partial P_h}{\partial S_h}}^{\top} \cdot \frac{\partial f^{L+1}}{\partial P_h}
\cdot K^h\right) \\
& J(W^{lh,K}) ={g^{l-1}}^{\top} \cdot\left(\frac{1}{d^{l,G}} \cdot {\frac{\partial f^{L+1}}{\partial P_h}}^{\top} \cdot {\frac{\partial P_h}{\partial S_h}}^{\top}
\cdot Q^h\right)
\label{jatt}
    \end{aligned}
\end{equation}
As the expression shows, the special added structure adds up the mathematical expressions and we need to take care of them during the following theoretical
analysis.


\subsection{Early stopping with data-dependent stopping rule}
\label{stoprulesec}
Early stopping is a key regularization technique in deep learning, typically determined via a hold-out set in practice. Although the exact optimal time is intractable, data-dependent rules provide theoretical insight \citep{esnonpara, sunaistats}.

The rule depends on the \textit{empirical kernel matrix} $K \in \mathbb{R}^{N_2 \times N_2}$ from kernel $\mathbb{K}$ on $X^{\text{tgt}}$, with eigenvalues $\{\widehat{\lambda}_i\}_{i=1}^{N_2}$. Define the \textit{local empirical Rademacher complexity} \citep{esnonpara}:
\begin{equation}
    \widehat{\mathcal{R}}_K(\varrho):=\left[\frac{1}{N} \sum_{i=1}^N \min \left\{\widehat{\lambda}_i, \varrho^2\right\}\right]^{1 / 2}
    \label{localran}
\end{equation}
For noise variance $\sigma>0$, the \textit{critical empirical radius} $\widehat{\varrho}_N$ is the smallest positive solution solving
\begin{equation}
    \widehat{\mathcal{R}}_K(\varrho) \leq \frac{\varrho^2 C_{\mathcal{H}}^2
    }{2 e \sigma}
    \label{epdef}
\end{equation}
with $C_{\mathcal{H}}=\|f_{\text{PT}}-f_{\text{tgt}}\|_{\mathcal{H}}$. 
The stopping threshold $\widehat{T}_{\max}$  is well-defined, and the optimal time $\widehat{T}_{\text{op}}$ minimizes a bound on the empirical $L^2(P)$ norm for $\tau\le \widehat{T}_{\max}$ \citep{sunarxiv}.

\section{THEORETICAL GUARANTEES}
We fist consider an easier case where
the pre-trained model $f_{\text{PT}}$
 has the same architecture as the fine-tuned model 
$f_{\text{FT}}$
, including width, number of layers, and hidden dimension
and establish a convergence result for it (Section \ref{ftsec}). This theoretical result is then used to elucidate the mechanisms underlying the empirical success of modern fine-tuning methods. Subsequently, we extend the analysis to encompass more general fine-tuning regimes that permit the incorporation of additional structural components to better accommodate a range of downstream tasks (Section  \ref{addheadsec}). Finally, under appropriate additional assumptions, we utilize the proposed framework to provide a theoretical explanation of \emph{task arithmetic}~\citep{taskvector}.

\subsection{Assumptions}
\label{assumptions}

Let $\mathcal{H}$ denote the RKHS induced by the stationary kernel $\mathbb{K}$ on $X^{\text{tgt}}$, and 
denote the Hilbert norm in $\mathcal{H}$ by $\| \cdot \|_{\mathcal{H}}$. We require the following basic assumptions for our theory. 
\begin{assumption}
    $f_{\text{PT}} - f_{\text{tgt}}$ belongs to $\mathcal{H}$, i.e., $f_{\text{PT}}- f_{\text{tgt}} \in \text{span} \{ \mathbb{K}(\cdot, X^{\text{tgt}}) \}$.
    \label{inass}
\end{assumption}
This assumption is made for purely theoretical convenience avoiding  the need to  add additional  mis-specification error terms or do projections.

 \begin{assumption}
    The data $\left\{(\mathbf{X}^{\text{src}}_i, Y^{\text{src}}_i)\right\}_{i=1}^{N_1}$, and 
    $\left\{(\mathbf{X}^{\text{tgt}}_i, Y^{\text{tgt}}_i)\right\}_{i=1}^{N_2}$
    are contained in closed and bounded set in $\mathbb{R}^{p+1}$.
    \label{xb}
\end{assumption}
\begin{assumption}
     $w_i^{\text{tgt}}$ are independent zero-mean random variables satisfying the following sub-Gaussianity condition:
     \label{rmdassum}
\begin{equation}
   \mathbb{E}\left[e^{t w_i^{\text{tgt}}}\right] \leq e^{t^2 \sigma^2 / 2}, \text { for all } t \in \mathbb{R} . 
\end{equation}
\end{assumption}
It is well known that boundedness is a sufficient condition for sub-Gaussianity.
\begin{assumption}
    The dropout error does satisfies $\|\boldsymbol{Y^{\text{tgt}}} - f_{\text{PT}}(\boldsymbol{X}^{\text{tgt}}) \|_2 =  O(\sqrt{N})$.
    \label{dropb}
\end{assumption}
This condition effectively ensures that the initialization using the pre-trained model is reasonable
\begin{assumption}
    The empirical kernel matrix induced by the stationary NTK, as width and hidden dimension  $ d^{l} \rightarrow \infty$, $K$ is full rank.
  \label{linea1}  
\end{assumption}
 \begin{assumption}
 The activation function $\phi$ satisfies
\begin{equation}
    |\phi(0)|, \quad\left\|\phi^{\prime}\right\|_{\infty}, \quad \sup _{x \neq \tilde{x}}\left|\phi^{\prime}(x)-\phi^{\prime}(\tilde{x})\right| /|x-\tilde{x}|<\infty.
\end{equation}
 \end{assumption}
 \begin{assumption}
  \label{linea4}
The pretrained model 
$f_{\text{PT}}$
is trained starting from weights drawn from a normal random initialization.
\end{assumption}
     

\subsection{Fine-tuning}
\label{ftsec}
Our analysis relies critically on the linearization of the network under consideration. While such linearization results are well established in the context of over-parameterized networks \citep{linearnn}, our setting differs in that we allow non-random initialization and focus on architectures with transformer blocks. 

As a useful intermediate results, 
we  prove the network linearization rigorously in the pretrained initialization, which is crucial for helping derive the final theoretical bounds. This result can also be applied to interpret the success of task 
arithmetic (Section  \ref{tasksec}). 
\begin{lemma}[Network linearization]
\label{linlemma}
Under the setup in this paper, we are able to linearize the fine-tuned network around the pretrained model $f_{\text{PT}}$, i.e., we have
\begin{equation}
\begin{aligned}
      f_{\text{FT}} &=   f_{\text{PT}} + \Delta \theta \nabla f_{\text{PT}}(\theta) + e \\
      & = f_{\text{PT}}  + \sum_i  \zeta_i \mathbb{K} (\cdot,\mathbf{X}_i ) + e
\end{aligned}
\end{equation}
   where  the $L_2$-norm of the error term $e$ is bounded by
   $\tilde{\mathcal{O}}\left( \frac{1}{{d^l}^{1/2}}\right)$.
\end{lemma}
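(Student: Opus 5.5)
The plan is to follow the standard lazy-training route of \cite{linearnn}, adapted in \cite{sunaistats} to warm starts, replacing the fully connected Jacobian estimates with attention-layer estimates built from the expressions in (\ref{jatt}).

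The central observation is that $f_{\text{PT}}$ is itself obtained by gradient descent from a Gaussian initialization (Assumption \ref{linea4}). Its weights therefore stay within a ball of radius $O(1)$ in parameter norm around a random Gaussian draw $\theta_0$. We may thus treat $\theta_{\text{PT}}=\theta_0+\Delta_{\text{PT}}$ and prove every needed Jacobian estimate uniformly over a ball $B(\theta_0,C)$. This uniform control is what lets us avoid any assumption on the relation between the pretraining and downstream tasks.

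\textbf{Step 1: local Jacobian bounds.} We show that, with high probability over $\theta_0$, for all $\theta,\tilde\theta\in B(\theta_0,C)$,
\begin{equation*}
\|J(\theta)\|_F\le C_1,\qquad \|J(\theta)-J(\tilde\theta)\|_F\le \frac{C_2}{\sqrt{d^l}}\,\|\theta-\tilde\theta\|_2,
\end{equation*}
with Jacobians taken in the NTK parametrization. For fully connected layers this is Lemma 1 of \cite{linearnn}, using the Lipschitz and smoothness conditions on $\phi$ and the boundedness of the data (Assumption \ref{xb}).

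For an attention layer we bound each block of (\ref{jatt}) separately.
\begin{itemize}
\item \emph{Softmax factor.} $P_h$ and $\Lambda_h=\partial P_h/\partial S_h$ have entries bounded by $1$, and they are Lipschitz in $S_h$.
\item \emph{Score matrix.} The $1/d^{l,G}$ scaling of \cite{yang2019wide} makes $S_h=O(1)$, and its perturbation is $O(\|\Delta W\|/\sqrt{d^{l,G}})$, using operator-norm bounds $\|W\|_{op}=O(1)$ for Gaussian matrices on the ball.
\item \emph{Value and output projections.} The $V$ and $O$ blocks are products of $O(1)$ operator-norm matrices with the backpropagated signal $\partial f^{L+1}/\partial f^{l}$. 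That signal is $O(1/\sqrt{d^l})$ per coordinate and Lipschitz by induction over layers.
\end{itemize}
Layer normalization contributes a smooth map with bounded derivative on the bounded image, so it does not change the orders.

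\textbf{Step 2: the parameter path stays in the ball.} Under gradient flow or small-step gradient descent on the squared loss with target data, the residual obeys $\|\boldsymbol{Y}^{\text{tgt}}-f_\tau(\boldsymbol{X}^{\text{tgt}})\|_2=O(\sqrt{N})$ by Assumption \ref{dropb}, and it is non-increasing for small $\epsilon$. Combined with Step 1, this gives $\|\theta_\tau-\theta_{\text{PT}}\|_2=O(1)$ for every $\tau$ up to the stopping time $\widehat{T}_{\max}$. Full rank of the limiting $K$ (Assumption \ref{linea1}) gives positive-definiteness of $K_\tau$ near $K$, which we use if uniformity in $\tau$ is needed.

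\textbf{Step 3: Taylor expansion.} On the segment from $\theta_{\text{PT}}$ to $\theta_{\text{FT}}$, write
\begin{equation*}
f_{\text{FT}}(x)-f_{\text{PT}}(x)-\nabla f_{\text{PT}}(x)\,\Delta\theta=\int_0^1\bigl(\nabla f_{\theta_{\text{PT}}+s\Delta\theta}(x)-\nabla f_{\text{PT}}(x)\bigr)\,\Delta\theta\,ds .
\end{equation*}
The Lipschitz bound from Step 1 makes this at most $\frac{C_2}{\sqrt{d^l}}\|\Delta\theta\|_2^2=\tilde O(1/\sqrt{d^l})$. Since $x$ ranges over a bounded set, the same bound holds in $L_2$.

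\textbf{Step 4: the kernel form.} The gradient-descent increments lie in the row space of the Jacobians on the data, $\Delta\theta=\sum_\tau \epsilon\, J(\theta_\tau)^\top r_\tau$ with $r_\tau$ the residuals. Replacing each $J(\theta_\tau)$ by $J(\theta_{\text{PT}})$ costs $\tilde O(1/\sqrt{d^l})$ by Step 1. Hence $\nabla f_{\text{PT}}(x)\Delta\theta=\sum_i\zeta_i K_{\text{PT}}(x,\mathbf{X}_i)$ up to that error.

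It remains to replace $K_{\text{PT}}$ by the stationary kernel $\mathbb{K}$. At random initialization, $K_{\theta_0}\to\mathbb{K}$ at rate $O(1/\sqrt{d^l})$ by the tensor-program results of \cite{yang2019wide,yang2020tensor2}. Moving from $\theta_0$ to $\theta_{\text{PT}}$ inside the ball changes the kernel by another $O(1/\sqrt{d^l})$, again by Step 1. The coefficients $\zeta_i$ are bounded because the residuals are, so all errors aggregate into $e$.

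\textbf{Main obstacle.} I expect the hardest step to be the uniform Lipschitz bound in Step 1 for the $Q$ and $K$ Jacobians. There the softmax derivative couples the two weight matrices multiplicatively, and perturbations of $W^{Q}$ and $W^{K}$ enter quadratically in $S_h$. My first approach would be to bound $\|\Delta S_h\|$ using operator-norm concentration of Gaussian matrices, which holds uniformly over the ball. I would then exploit that $\Lambda_h$ is globally $2$-Lipschitz in the entries of $S_h$, and track the $1/d^{l,G}$ factor to recover the $1/\sqrt{d^l}$ rate when $d^{l,G}$ and $d^{l,H}$ are taken proportional to $d^l$.
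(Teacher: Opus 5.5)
Your outline matches the paper's skeleton. It has three parts: blockwise local-Lipschitz bounds for the attention and LayerNorm Jacobians on a ball around a Gaussian draw $\theta_0$ (the paper's appendix lemma, built from the same expressions for $J(W^{lh,\bullet})$ and $\partial f^l/\partial g^{l-1}$); the observation that $\theta_{\text{PT}}$ lies in that ball, so the bounds transfer to a neighbourhood of $\theta_{\text{PT}}$; and control of the fine-tuning path. The ball has radius $O(n^{-1/2})$ in the paper's standard parameterization, which is $O(1)$ in yours.

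You differ in the closing step. Following \cite{sunarxiv} in the style of \cite{linearnn}, the paper uses $\lambda_{\min}(K)>0$ to get geometric decay of the residual, and hence $\sup_\tau\|K_{\text{PT}}-K_\tau\|_F=O(n^{-1/2})$ uniformly in $\tau$. It then linearizes via this kernel stability, comparing the trajectory with gradient descent on the model linearized at $\theta_{\text{PT}}$. That model is kernel gradient descent, so the coefficients come out in spectral-filter form, $\zeta\propto K_{\text{PT}}^{-1}\bigl(I-(I-\epsilon K_{\text{PT}})^{\tau}\bigr)\bigl(\boldsymbol{Y}^{\text{tgt}}-f_{\text{PT}}(\boldsymbol{X}^{\text{tgt}})\bigr)$, which is the form the bias/variance analysis behind the main convergence theorem uses. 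Your Taylor expansion along the actual displacement, plus $\Delta\theta=\sum_\tau\epsilon J(\theta_\tau)^\top r_\tau$, is shorter and proves the lemma as stated. Your bounded-horizon Step 2 also avoids the eigenvalue bound for $\tau\le\widehat{T}_{\max}$. However, your $\zeta_i=\sum_\tau\epsilon\, r_{\tau,i}$ depends on the nonlinear residual path, so on its own it would not feed the early-stopping analysis.

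Two claims need repair.

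\emph{Pretrained weights near a Gaussian draw.} That $\theta_{\text{PT}}$ stays within $O(1)$ of $\theta_0$ does not follow from the Gaussian-pretraining-initialization assumption alone. Your bounded-horizon argument cannot supply it either, because pretraining is not stopped early. You need the geometric-decay version of Step 2 run on the source data, which requires a positive-definite source-data NTK. This is exactly the paper's separate theorem on NTK behaviour from random initialization.

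\emph{Rate of kernel convergence.} The tensor-program results of \cite{yang2020tensor2} give almost-sure convergence of the initial NTK to the limiting kernel, but no $O\bigl((d^l)^{-1/2}\bigr)$ rate. So if $\mathbb{K}$ denotes the limiting kernel, your error in the second line is only $o(1)$. The $\tilde{\mathcal{O}}\bigl((d^l)^{-1/2}\bigr)$ rate holds if $\mathbb{K}$ is read as the empirical NTK at $\theta_{\text{PT}}$. That is also all the paper's own argument controls, since it too only has in-probability convergence for that piece.
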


\begin{theorem}
\label{ftthm}
    Consider a network described in Section \ref{nnsetup} and has identical architecture as $f_{\text{PT}}$ , under assumptions  in Section 
\ref{assumptions}, 
for any $\gamma > 0$, 
there exists  sufficient large width and embedding dimension $M$, such that 
the following bounds hold 
with probability at least  $1-\gamma - c_1 \exp(-c_2 N_2 \widehat{\varrho}_{N_2})$
when fine-tuning the pretrained model $f_{\text{PT}}$ on the target dataset $X^{\text{tgt}}$ using gradient descent with learning rate $\epsilon = O(   \frac{1}{d^l}  )  $:
\label{cor1}
\begin{equation}
\begin{aligned}
      \|f_{\widehat{T}_{op}} - f_{\text{tgt}}\|_2^2       &\leq \mathcal{O}\left( N_2^{-\frac{1}{2}}\right).
\end{aligned}
\end{equation}
where $c_1$ and $c_2$ are some universal positive constants.
\end{theorem}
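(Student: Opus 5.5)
The plan is to reduce fine-tuning of the attention network to early-stopped kernel gradient descent in the RKHS $\mathcal{H}$ of the stationary NTK $\mathbb{K}$. Then the early stopping bound of \cite{esnonpara,sunaistats} applies, with the difference $f_{\text{PT}}-f_{\text{tgt}}$ playing the role of the unknown regression function.

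\textbf{Step 1 (kernel dynamics).} Write $\Delta_\tau = f_\tau(\boldsymbol{X}^{\text{tgt}}) - f_{\text{PT}}(\boldsymbol{X}^{\text{tgt}})$. One gradient step with step size $\epsilon$ gives
\begin{equation}
\Delta_{\tau+1} = \Delta_\tau - \epsilon N_2 K_\tau \bigl(f_\tau(\boldsymbol{X}^{\text{tgt}}) - \boldsymbol{Y}^{\text{tgt}}\bigr) + r_\tau ,
\end{equation}
where $r_\tau$ is the second-order Taylor remainder. Lemma \ref{linlemma} gives $f_{\text{FT}} = f_{\text{PT}} + \sum_i \zeta_i \mathbb{K}(\cdot,\mathbf{X}_i) + e$ with $\|e\|_2 = \tilde{\mathcal{O}}(d^{-1/2})$. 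To use it along the whole trajectory I also need kernel stability, $\sup_{\tau \le \widehat{T}_{\max}} \|K_\tau - K\|_{op} \to 0$ in probability as the width grows.

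To get kernel stability I will show two things for the attention Jacobians in \eqref{jatt}:
\begin{enumerate}[label=(\roman*)]
\item At $\theta_{\text{PT}}$ the Jacobian has norm $O(\sqrt{d^l})$ and is locally Lipschitz in a ball of radius $O(1)$ around $\theta_{\text{PT}}$.
\item The weights stay in that ball.
\end{enumerate}
For (i), the $1/d^{l,G}$ scaling of $S_h$ keeps the softmax derivatives $\Lambda_h$ bounded. Boundedness of the data (Assumption \ref{xb}) and the smoothness of $\phi$ control the products of $g^{l-1}$, $Q_h$ and $K_h$. For (ii), Assumption \ref{dropb} bounds the initial residual by $O(\sqrt{N_2})$, and with $\epsilon = O(1/d^l)$ a Gronwall-type induction as in \cite{linearnn} keeps the weights in the ball.

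\textbf{Step 2 (where pretraining enters).} The initialization is not random. I handle this by writing $\theta_{\text{PT}} = \theta_0 + (\theta_{\text{PT}} - \theta_0)$, where $\theta_0$ is Gaussian (Assumption \ref{linea4}). Pretraining is itself in the lazy regime, so $\|\theta_{\text{PT}} - \theta_0\| = O(1)$ with probability $1-\gamma/2$. The kernel at $\theta_{\text{PT}}$ is therefore within $o(1)$ of the kernel at $\theta_0$, and the latter converges to the deterministic $\mathbb{K}$ by the Tensor Program results \citep{yang2019wide}. Assumption \ref{linea1} keeps $K$ full rank, so the perturbations are absorbed for $M$ large.

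\textbf{Step 3 (early stopping bound).} Apply the early stopping theorem of \cite{esnonpara}, in the warm-start version of \cite{sunaistats}, to the idealized recursion with the fixed kernel $K$. Here the target is $f_{\text{tgt}} - f_{\text{PT}} \in \mathcal{H}$ with norm $C_{\mathcal{H}}$ (Assumption \ref{inass}), and the noise is sub-Gaussian (Assumption \ref{rmdassum}). This gives, with probability at least $1 - c_1\exp(-c_2 N_2 \widehat{\varrho}_{N_2}^2)$,
\begin{equation}
\|f_{\widehat{T}_{op}} - f_{\text{tgt}}\|_2^2 \lesssim \widehat{\varrho}_{N_2}^2 .
\end{equation}
The eigenvalues of $K$ satisfy $\sum_i \widehat{\lambda}_i = \mathrm{tr}(K) = O(1)$ because the kernel is bounded. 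Hence $\widehat{\mathcal{R}}_K(\varrho) \le (\mathrm{tr}(K)/N_2)^{1/2} = O(N_2^{-1/2})$, and the critical inequality \eqref{epdef} then forces $\widehat{\varrho}_{N_2}^2 = O(N_2^{-1/2})$, which is the claimed rate.

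\textbf{Step 4 (error propagation).} Finally I bound the gap between the actual network iterates and the idealized kernel iterates up to $\widehat{T}_{op}$. The linearization error $e$ and the drift $K_\tau - K$ accumulate over $\widehat{T}_{op}$ steps, each weighted by $\epsilon$. I choose $M$ large enough that this accumulated error is below $N_2^{-1/2}$ with probability $1-\gamma/2$.

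The main obstacle is the uniform-in-$\tau$ kernel stability for multi-head attention in Step 1 together with the handling of the non-random initialization in Step 2. The softmax Jacobian couples queries and keys quadratically in the weights, so the Lipschitz constant of $J$ requires a careful bound. My first attempt will be to bound $\|\Lambda_h\| \le 1$, then control $\partial P_h / \partial W$ through $\|Q_h\|\,\|K_h\| / d^{l,G} = O(1)$ inside the ball.
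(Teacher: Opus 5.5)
Your outline follows the paper's route. You bound the Jacobian through the attention blocks, use lazy pretraining to place $\theta_{\text{PT}}$ near a Gaussian $\theta_0$, get kernel stability and linearization around $\theta_{\text{PT}}$ with the kernel there converging to $\mathbb{K}$ via Tensor Programs, and finish with the fixed-kernel early-stopping bound plus a kernel-drift term (your Step 4 plays the role of the paper's $D_\tau^2$). Your Step 3 derivation of the rate, $\widehat{\mathcal{R}}_K(\varrho)\le(\mathrm{tr}(K)/N_2)^{1/2}=O(N_2^{-1/2})$ and hence $\widehat{\varrho}_{N_2}^2=O(N_2^{-1/2})$, is correct and more explicit than the paper, which defers it to \cite{sunarxiv}.

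However, Steps 1--2 contain a scaling error that breaks the key inference as written.

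A Jacobian of size $O(\sqrt{d^l})$ with $\epsilon=O(1/d^l)$ is the standard parameterization. There the available estimate is $\frac{1}{\sqrt{n}}\|J(\theta)-J(\tilde\theta)\|_F\le\tilde H\|\theta-\tilde\theta\|_2$ with $\tilde H=O(\log^\alpha n)$, and it holds only on $B(\theta_0,Cn^{-1/2})$. Besides operator-norm bounds, its proof uses $\|\delta^{l+1}W^{l+1\top}\|_\infty=O(\sqrt{\log n/n})$, which comes from Gaussian tails together with the GIA. An $O(1)$ Frobenius perturbation of $W^{l+1}$ destroys this: adding $c\,\delta^{l+1}/\|\delta^{l+1}\|_2$ to one row makes that entry of order one.

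Even granting Lipschitzness on an $O(1)$ ball, $\|\theta_{\text{PT}}-\theta_0\|_2=O(1)$ only yields $\|K_{\theta_{\text{PT}}}-K_{\theta_0}\|\lesssim H\tilde H$, not $o(1)$. Indeed, a rank-one perturbation of Frobenius norm $c$ aligned with $g^{l-1}$ moves $f^l$ by $c\|g^{l-1}\|_2$, a constant fraction of its size. For the same reason, Step 1(i) cannot be obtained directly at the non-random point $\theta_{\text{PT}}$ from boundedness of $\Lambda_h$, the data and $\phi'$.

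The fix is the paper's ball-containment step:
\begin{enumerate}
\item Pretraining with $\epsilon=\eta_0/n$ moves the weights by at most $C_1n^{-1/2}$. Each step has size $O(n^{-1/2})\|r_\tau\|_2$, and the residuals decay geometrically when $\lambda_{\min}>0$.
\item Hence $B(\theta_{\text{PT}},Cn^{-1/2})\subset B(\theta_0,(C+C_1)n^{-1/2})$.
\item So prove the Jacobian bounds at $\theta_0$, on $X^{\text{src}}\cup X^{\text{tgt}}$ so that they serve both the pretraining and fine-tuning phases, and inherit them around $\theta_{\text{PT}}$.
\end{enumerate}
With the radius in Step 1(ii) corrected to $O(n^{-1/2})$, the kernel drift in Step 4 becomes $O(n^{-1/2})$ uniformly in $\tau$.
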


\begin{proofsketch}
The proof of this result relies heavily on the proof strategy in \cite{sunarxiv} for fully connected networks adapted to the attention-based network. Compared with \cite{sunarxiv}, the major difference of our paper is that we consider transformer decoder, which presents significantly more theoretical challenges due to the complexity of the Jacobian matrix discussed earlier.  

One of the key intermediate results we need to prove is the local Lipschitzness  of the Jacobian for the network under random normal initialization, which will later be extended to pre-trained initialization. The challenges come mainly from dealing with attention weights. After adding this complicated module, when considering the Jacobian of other fully connected weights, we need to be careful about the effects brought about by attention weights. In particular for attention weights,  we need to show that 
\begin{equation}
\begin{aligned}
 &\left\| J(W^{lh,\bullet})   \right\|_F \leq O({d^{l}}^{1/2}) \\
& \left\| J(W^{lh,\bullet})(\theta,x) - J(W^{lh,\bullet})(\tilde{\theta},x) \right\|_F \leq 
\tilde{\mathcal{O}}
({d^{l}}^{1/2}) \| \theta - \tilde{\theta}\|_2 
 \end{aligned}
\end{equation}
where $\bullet \in \{Q,K,V,O\}$.

Recall the expression for the Jacobian of attention weights in \ref{jatt},
to get the final bounds, we need to prove the boundness and local lipschitzness of all the elements inside those expressions. And this replies on the induction results from previous and later layers.  After this major step, we need the 
following steps to derive the final bound:
\begin{enumerate}
    \item Based on the local Lipschitzness of the Jacobian, we prove the NTK stability, network linearlization  for pre-trained network $f_{\text{PT}}$ under random random initialization
    and show that the parameter of $f_{\text{PT}}$ is not far away from the random initialized network $f_{\text{random\_init}}$.
    Specifically,
    let 
    $\theta_0$ denote the parameter of $f_{\text{random\_init}}$ and $\theta_{\text{PT}}$ denote that of $f_{\text{PT}}$. We formally show that $\theta_{\text{PT}}$ lies in the neighborhood of 
 $\theta_0$ with controlled radius, which is $B( \theta_0, C_1 n^{-1/2} )$, where $n$ is the width or hidden dimension of the network.
 .

    \item Based on previous step 1, we prove the local Lipschitzness of the Jacobian under the pre-trained initialization. The core idea of to prove this is to use the proved result is step 1, i.e.m the parameter of pre-trained network $\theta_{PT}$ is within the 
$B(\theta_{0},  C_1 {d^{l}}^{-\frac{1}{2}} )$ neighbor of some random weights $\theta_{0}$, then for any $\theta, \tilde{\theta} \in B(\theta_{\text{PT}},  C_2 {d^{l}}^{-\frac{1}{2}} )$, they also belong to a neighbor of  $\theta_{0}$, i.e. , $B(\theta_{0}, C_3 {d^{l}}^{-\frac{1}{2}}  )$ with a larger constant $C_3$. then by the local Lipschitzness of the random weights, we can get the local Lipschitzness  under pre-trained initialization. This process is  visualized  Figure \ref{proofsketchpic}.
\begin{figure}[htp!]

     \centering
     \begin{subfigure}[b]{0.4\textwidth}
    
         \centering
         \includegraphics[width=1\textwidth]{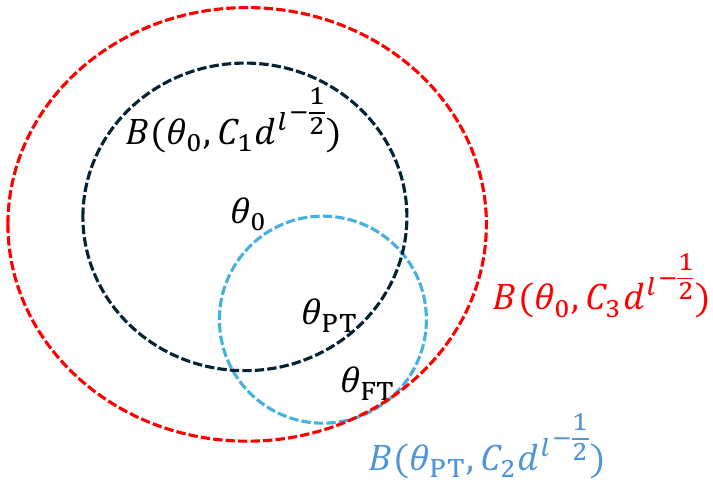}     
     \end{subfigure}
\caption{Proof of Local Lipschitzness around $f_{\text{PT}}$  illustration.
}
 \label{proofsketchpic}
\end{figure}

\item Based on Step 2, we establish NTK stability and network linearization for the pre-trained initialization using a proof strategy similar to that in \cite{sunaistats,sunarxiv}. More formally, we show that the empirical kernel remains nearly constant throughout training; that is,
\begin{equation}
\sup_{\tau}\left\|K_{\text{PT}}-K_{\tau}\right\|_F \leq \mathcal{O}({d^l}^{-\frac{1}{2}})
\end{equation}
Moreover, the fine-tuned network $f_{\text{FT}}$ can be linearized around the pre-trained network $f_{\text{PT}}$ as
\begin{equation}
f_{\text{FT}} \approx f_{\text{PT}} + \Delta \theta \nabla f_{\text{PT}}(\theta).
\end{equation}

\item Based on Step 3, we establish a convergence bound for the optimal stopping rule $\widehat{T}_{\text{op}}$ under gradient descent by leveraging proof techniques from \cite{esnonpara,sunaistats,sunarxiv}, which are specifically developed for the rigorous theoretical analysis of early stopping. Roughly speaking, the proof begins by decomposing the empirical normal of the mean prediction error as
    \begin{equation}
     \left\|f_\tau-f_{\text{tgt}}\right\|_{N_2}^2 \leq B_\tau^2+V_\tau+D_\tau^2
\end{equation}
where $B_\tau^2$ is the bias term, $V_\tau$ is the variance term and 
$D_\tau^2$ is the additional discrepancy term induced by the kernel variation during network training. We then bound each of these three terms at the desired rate, namely $\mathcal{O}({d^l}^{-\frac{1}{2}})$. Finally, we control the difference between the population $2$-norm $\|\cdot\|_2$ and the empirical norm $\|\cdot\|_N$ to obtain the final convergence bound stated in Theorem~\ref{ftthm}.

\end{enumerate}
The technical details of each step are deferred to the supplementary materials.

\end{proofsketch}

\begin{remark}
In the original NTK paper \citep{jacot}, the NTK is analyzed under the assumption of infinite width. In the infinite-width (or infinite hidden-dimension) regime, the change of the NTK during training converges to $0$, and consequently the difference term $D_\tau^2$ in the error bound vanishes. In contrast, our setup does not require such a strong assumption. Instead, we bound the NTK variation during training using concentration inequalities with respect to $d^l$. And we only require $d^l$ to be sufficiently large to ensure that $D_\tau^2$ is bounded by $\mathcal{O}( {d^l}^{-\frac{1}{2}} )$.
\end{remark}


\section{Specific fine-tuning settings}

In this section, we focus on two particular settings where fine-tuning is used in LLMs: linear probing and multitask learning with task arithmetic. 

\subsection{General Fine-tuning and Linear Probing}
\label{addheadsec}
We now turn to a more general fine-tuning setup. Specifically, suppose we retain only the last $k$ layers of the pretrained model as a backbone, denoted with $f_{\text{PT}}^k$ and append a linear head tailored to a new downstream task. The linear head is initialized with random Gaussian weights. During fine-tuning, we update the parameters of both the backbone and the newly added linear head \citep{ftref1,ftref2}. 


 This setting is closer to standard practice, since in most cases the output layer of the pretrained model does not align with that of the target task. The main theoretical challenge, however, lies in the fact that the initialization now combines non-random initialization (from the pre-trained backbone) with random initialization (from the newly added head), a regime for which existing results are not directly applicable.
 Since we only use the first $k$ layer of $f_{\text{PT}}$, the quantity $C_{\mathcal{H}}$ becomes
 \begin{equation}
     \left\| \text{LinearHead}  (f^k_{\text{PT}}) - f_{\text{tgt}}   \right\|_{\mathcal{H}}
 \end{equation}

Then we can achieve similar theoretical result as follows:
\begin{theorem}
\label{lhthm}
Consider a fine-tuning setting described above,
    under assumptions  in Section 
\ref{assumptions}, 
for any $\gamma > 0$, 
there exists  sufficient large width and embedding dimension $d^l$, such that 
the following bounds hold 
with probability at least  $1-\gamma - c_1 \exp(-c_2 N_2 \widehat{\varrho}_{N_2})$,
when fine-tuning \textbf{starting from 
$f_{\text{PT}}^k$ and add an additional linear head}
using gradient descent with learning rate 
$\epsilon = O(   \frac{1}{d^l} )$:
\label{cor1}
\begin{equation}
\begin{aligned}
      \|f_{\widehat{T}_{op}} - f_{\text{tgt}}\|_2^2       &\leq \mathcal{O}\left( N_2^{-\frac{1}{2}}\right).
\end{aligned}
\end{equation}
\end{theorem}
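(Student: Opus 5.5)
The plan is to reduce Theorem \ref{lhthm} to the argument behind Theorem \ref{ftthm}. The only new ingredient is that the initialization is now hybrid: the backbone $f_{\text{PT}}^k$ is pretrained, while the linear head is freshly Gaussian. The core requirement is that the whole parameter vector $\theta_{\text{init}}=(\theta^k_{\text{PT}},\theta_{\text{head}})$ lies in a small ball $B(\theta_0, C\,{d^l}^{-1/2})$ around a purely random Gaussian initialization $\theta_0$ of the \emph{same} architecture (backbone plus head). Once that holds, Steps 2--4 of the proof sketch of Theorem \ref{ftthm} go through verbatim, with $C_{\mathcal{H}}$ replaced by $\|\text{LinearHead}(f^k_{\text{PT}})-f_{\text{tgt}}\|_{\mathcal{H}}$.

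\textbf{Step 1: coupling the initialization.} I construct $\theta_0$ as follows.
\begin{itemize}
\item For the backbone layers, I take the random initialization from which $f_{\text{PT}}$ was pretrained (Assumption \ref{linea4}). By Step 1 of the proof of Theorem \ref{ftthm}, these weights satisfy $\|\theta^k_{\text{PT}}-\theta^k_0\|_2\le C_1{d^l}^{-1/2}$ with high probability. Truncating to the retained layers only shrinks this distance.
\item For the head, I set $\theta_{0,\text{head}}=\theta_{\text{head}}$ itself, which is already Gaussian with the prescribed variance and independent of the backbone.
\end{itemize}
The concatenation $\theta_0$ is then distributed exactly as a random initialization of the appended architecture. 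Moreover $\|\theta_{\text{init}}-\theta_0\|_2\le C_1{d^l}^{-1/2}$, and this holds on an event of probability at least $1-\gamma/2$.

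The one subtlety is that the local Lipschitz and boundedness estimates for the Jacobian at random initialization must hold for the new architecture. Here the head is a fully connected layer on top of the transformer blocks. For it, I reuse the fully connected estimates of \cite{sunaistats}, and for the attention blocks I use the attention bounds $\|J(W^{lh,\bullet})\|_F\le O({d^l}^{1/2})$ together with their Lipschitz counterparts, already established for Theorem \ref{ftthm}.

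\textbf{Step 2: NTK stability and linearization.} By the triangle inequality, any $\theta,\tilde\theta\in B(\theta_{\text{init}},C_2{d^l}^{-1/2})$ lie in $B(\theta_0,C_3{d^l}^{-1/2})$. The Jacobian is therefore locally Lipschitz around $\theta_{\text{init}}$ with constant $\tilde{\mathcal{O}}({d^l}^{1/2})$. With learning rate $\epsilon=O(1/d^l)$ and Assumption \ref{dropb}, now applied to the residual $\boldsymbol{Y}^{\text{tgt}}-\text{LinearHead}(f^k_{\text{PT}})(\boldsymbol{X}^{\text{tgt}})$, I run the standard induction. It shows that the gradient descent iterates stay in this ball, that $\sup_\tau\|K_{\text{init}}-K_\tau\|_F=\mathcal{O}({d^l}^{-1/2})$, and that the linearization of Lemma \ref{linlemma} holds around $\text{LinearHead}(f^k_{\text{PT}})$ with error $\tilde{\mathcal{O}}({d^l}^{-1/2})$.

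I also need $K_{\text{init}}$ to be close to the stationary kernel $\mathbb{K}$ of the appended architecture. This follows from $K_0\to K$ at random initialization together with Lipschitzness of the Jacobian on the ball. Assumption \ref{linea1} then gives full rank of the limit.

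\textbf{Step 3: early stopping bound.} Here I reuse Step 4 of the proof of Theorem \ref{ftthm}. I decompose $\|f_\tau-f_{\text{tgt}}\|_{N_2}^2\le B_\tau^2+V_\tau+D_\tau^2$ and control the three terms as follows.
\begin{itemize}
\item The bias $B_\tau^2$ is governed by the new $C_{\mathcal{H}}$ via Assumption \ref{inass}.
\item The variance $V_\tau$ is controlled through the local Rademacher complexity (\ref{localran}) and the sub-Gaussian noise (Assumption \ref{rmdassum}). This contributes the factor $c_1\exp(-c_2N_2\widehat\varrho_{N_2})$ in the probability.
\item The kernel-drift term $D_\tau^2$ is $\mathcal{O}({d^l}^{-1/2})$ and is made $\le N_2^{-1/2}$ by taking $d^l$ large.
\end{itemize}
At $\widehat{T}_{op}$ this gives an empirical rate of order $\widehat\varrho_{N_2}^2$. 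Bounding the gap between the empirical and population norms and using $\widehat\varrho_{N_2}^2\lesssim N_2^{-1/2}$ (valid for any kernel whose trace is bounded, by Assumption \ref{xb}) yields the claimed $\mathcal{O}(N_2^{-1/2})$. A union bound over the events of Steps 1--3 gives probability $1-\gamma-c_1\exp(-c_2N_2\widehat\varrho_{N_2})$.

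\textbf{Main obstacle.} I expect the main obstacle to be Step 1: making the coupling rigorous when the retained backbone is a truncation of the pretrained network. Concretely, one must show that the $O({d^l}^{-1/2})$ displacement bound of pretraining holds layerwise for the kept layers, and is not just a global bound that is only meaningful through the original output layer. I would first check that the pretraining displacement argument bounds the full parameter distance, which restricts trivially to any subset of layers. Then I would verify that the random head's scaling keeps the appended network's Jacobian at order ${d^l}^{1/2}$, so the constants do not degrade.
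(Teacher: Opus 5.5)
Your proposal is correct and follows the paper's proof essentially step for step. The paper likewise couples the mixed initialization (pretrained backbone plus fresh Gaussian head) to a genuine random initialization of the appended architecture within $O(n^{-1/2})$ via the pretraining displacement bound, applies the random-initialization local Lipschitz lemma on the enlarged ball, and then reruns the NTK stability, linearization and early-stopping arguments of Theorem~\ref{ftthm}. The only cosmetic difference is in how the initial NTK is shown to converge: the paper splits it into head-gradient and backbone-gradient contributions and compares each to its random-initialization counterpart, whereas you get the same conclusion in one stroke from the Lipschitz bound on the full Jacobian over the ball.
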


The overall proof framework is the same as Theorem \ref{ftthm}, but we need to adapt to the mixed initialization.

\begin{remark}
\textbf{Linear probing} freezes the backbone and trains only the head, reducing the problem to passing inputs through the backbone and solving a standard linear regression. In this case, the classical framework of \cite{esnonpara} applies directly, yielding analogous convergence results.
\end{remark}

\subsection{Task Arithmetic}
\label{tasksec}
For a particular task, the task vector $\tau$ is defined as the 
difference between $f_{\text{PT}}$ and $f_{\text{FT}}$ \citep{taskvector}. Specifically,
\begin{equation}
    \tau = \theta_{\text{FT}} - \theta_{\text{PT}}.
\end{equation}
In this section, we demonstrate how are fine-tuning approach yields similar results to prior work done on task arithmetic. We consider two types of task arithmetic: i) negation: users can negate a task vector to degrade performance on certain task, ii) addition: adding task vectors can help build better multi-task models.

\subsubsection{Negation}
To negate a task vector, we simply set the parameter of the negating network $f_{\text{neg}}$ as
\begin{equation}
    \theta_{\text{neg}} = \theta_{\text{FT}} - \tau.
\end{equation}
Applying Lemma \ref{linlemma} and 
Theorem \ref{ftthm}, it is straightforward to obtain the 
following result. 

\begin{corollary}
    For the network considered in this paper, under assumptions in Section 
    \ref{assumptions}
    ,
    for any $\gamma > 0$, there exist 
sufficiently large width and embedding dimension, 
when starting from $f_{\text{PT}}$ and using gradient descent with $\epsilon = O(   \frac{1}{d^l} )$, 
    the expected
    MSE of $f_{\text{neg}}$ is larger than that of 
    $f_{\text{FT}}$ 
    with probability at least
 $1-\gamma - c_5 \exp(-c_6 N_2 \widehat{\varrho}_{N_2})$
    , i.e.
    \begin{equation}
        \mathbb{E}[ MSE(f_{\text{neg}})] -  \mathbb{E}[ MSE(f_{\text{FT}})]  > 0.
    \end{equation}
\end{corollary}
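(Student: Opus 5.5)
We plan to work entirely in the linearized regime supplied by Lemma~\ref{linlemma}, compare the two predictors' $L^2(P)$ distances to $f_{\text{tgt}}$, and note that the noise contribution to the expected MSE is the same for both. On a fresh target draw, $\mathbb{E}[MSE(f)] = \|f - f_{\text{tgt}}\|_2^2 + \mathbb{E}[(w^{\text{tgt}})^2]$. Since the noise term is common to $f_{\text{neg}}$ and $f_{\text{FT}}$, it cancels. It therefore suffices to show $\|f_{\text{neg}} - f_{\text{tgt}}\|_2^2 > \|f_{\text{FT}} - f_{\text{tgt}}\|_2^2$ on the high-probability event of Theorem~\ref{ftthm}.

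Write $g := f_{\text{PT}} - f_{\text{tgt}}$. We use one extra input beyond the stated assumptions: a nondegeneracy condition that the pretrained model does not already solve the target task, i.e.\ $\|g\|_2 \ge c_0 > 0$ for a constant $c_0$ independent of $N_2$ and $d^l$. This is natural given $C_{\mathcal{H}} > 0$, but it is not among the assumptions of Section~\ref{assumptions}. Without it the statement can fail, since $f_{\text{PT}} = f_{\text{tgt}}$ would give zero gap.

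Taken literally, $\theta_{\text{neg}} = \theta_{\text{FT}} - \tau = \theta_{\text{PT}}$, so $f_{\text{neg}} = f_{\text{PT}}$. The claim then reads $\|g\|_2^2 > \|f_{\text{FT}} - f_{\text{tgt}}\|_2^2$. Theorem~\ref{ftthm} bounds the right side by $\mathcal{O}(N_2^{-1/2})$, which is below $c_0^2$ once $N_2$ is large, and the probability statement is inherited directly.

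The intended negation of \citet{taskvector} is presumably $\theta_{\text{neg}} = \theta_{\text{PT}} - \tau$, and we also treat that case. We would first check that $\theta_{\text{PT}} - \tau$ lies in the same ball $B(\theta_{\text{PT}}, C_2 {d^l}^{-1/2})$ as $\theta_{\text{FT}}$. This holds because $\|\!-\!\tau\|_2 = \|\tau\|_2$, and the proof sketch of Theorem~\ref{ftthm} keeps gradient-descent iterates with $\epsilon = O(1/d^l)$ inside that ball up to $\widehat{T}_{op}$. The local Lipschitzness of the Jacobian around $\theta_{\text{PT}}$ (Step~2 of that sketch) then gives the same first-order expansion as in Lemma~\ref{linlemma}, with the sign of $\Delta\theta$ flipped:
\begin{equation*}
f_{\text{neg}} = f_{\text{PT}} - \Delta\theta\,\nabla f_{\text{PT}}(\theta) + e', \qquad \|e'\|_2 = \tilde{\mathcal{O}}({d^l}^{-1/2}).
\end{equation*}
Combining this with Lemma~\ref{linlemma} for $f_{\text{FT}}$ yields $f_{\text{neg}} = 2f_{\text{PT}} - f_{\text{FT}} + e + e'$. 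Hence
\begin{equation*}
f_{\text{neg}} - f_{\text{tgt}} = 2g - h + e + e', \qquad h := f_{\text{FT}} - f_{\text{tgt}}.
\end{equation*}

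By the triangle inequality,
\begin{equation*}
\|f_{\text{neg}} - f_{\text{tgt}}\|_2 \ge 2\|g\|_2 - \|h\|_2 - \tilde{\mathcal{O}}({d^l}^{-1/2}) \ge 2c_0 - \mathcal{O}(N_2^{-1/4}) - \tilde{\mathcal{O}}({d^l}^{-1/2}).
\end{equation*}
Theorem~\ref{ftthm} gives $\|h\|_2 = \mathcal{O}(N_2^{-1/4})$. For $N_2$ and $d^l$ large the right side is at least $c_0$, which strictly exceeds $\|h\|_2$. The probability $1 - \gamma - c_5 \exp(-c_6 N_2 \widehat{\varrho}_{N_2})$ comes from intersecting the event of Theorem~\ref{ftthm} with the linearization events. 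A union bound absorbs these by adjusting $\gamma$ and the constants.

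We expect the main obstacle to be justifying the linearization at the reflected point $\theta_{\text{PT}} - \tau$. Lemma~\ref{linlemma} is stated along the training trajectory, not at arbitrary points of the ball. The fix we would try first is to re-run its Taylor-remainder argument, which bounds the remainder by (Jacobian Lipschitz constant)$\times\|\tau\|_2^2$. That argument uses only ball membership and the Lipschitz bound, not the dynamics, so it should transfer verbatim. The lower bound $\|g\|_2 \ge c_0$ is not a technical gap but an explicit additional hypothesis that the corollary needs.
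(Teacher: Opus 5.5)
Your proposal is correct, given the nondegeneracy hypothesis you flag; the paper's proof also needs it, though only implicitly. It rests on the same two ingredients as the paper. The first is the linearization of Lemma~\ref{linlemma}, applied with opposite signs to $f_{\text{FT}}$ and $f_{\text{neg}}$. Like you, the paper's proof reads negation as $\theta_{\text{PT}} - \tau$, despite the main-text definition $\theta_{\text{FT}} - \tau$. The second is Theorem~\ref{ftthm}, used to say $f_{\text{FT}} \approx f_{\text{tgt}}$.

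The final step differs. The paper expands $MSE(f_{\text{neg}}) - MSE(f_{\text{FT}}) = 4(Y^{\text{test}} - f_{\text{PT}})^{\top}\tau\nabla f_{\text{PT}}$. It then substitutes $Y^{\text{test}} \approx f_{\text{PT}} + \tau\nabla f_{\text{PT}} + w^{\text{test}} + e_3$ and reads off a gap of $4\|\tau\nabla f_{\text{PT}}\|^2$ plus error terms, all of which it declares to be $\tilde{\mathcal{O}}(n^{-1/2})$. You instead cancel the noise, write $f_{\text{neg}} = 2f_{\text{PT}} - f_{\text{FT}} + e + e'$, and apply the reverse triangle inequality.

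The two computations agree. Up to linearization error, both gaps equal $4\|g\|_2^2 - 4\langle g, h\rangle$, because $\tau\nabla f_{\text{PT}} \approx f_{\text{FT}} - f_{\text{PT}} = h - g$. Your version, however, exposes two things the paper's proof hides.
\begin{itemize}
\item The paper's $e_3$ contains $f_{\text{tgt}} - f_{\text{FT}}$. Theorem~\ref{ftthm} controls this only at rate $\mathcal{O}(N_2^{-1/4})$, not $\tilde{\mathcal{O}}(n^{-1/2})$. So large $N_2$ is needed, not just large width.
\item Strict positivity of $4\|\tau\nabla f_{\text{PT}}\|^2$ against the error terms requires $\|\tau\nabla f_{\text{PT}}\|$ to stay bounded away from zero. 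That is equivalent to your $\|g\|_2 \ge c_0$.
\end{itemize}

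You also justify linearizing at the reflected point $\theta_{\text{PT}} - \tau$, via ball membership and a Taylor-remainder bound. The paper skips this by simply invoking the lemma. And you observe that the literal definition $\theta_{\text{FT}} - \tau$ collapses to $f_{\text{PT}}$, which the paper does not address. The paper's route has the modest advantage of giving an explicit expression for the gap in terms of the task vector's functional effect $\tau\nabla f_{\text{PT}}$.
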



\subsubsection{Addition}
\label{addsec}
Suppose we want to use the task vector to build a multi-task model $f_{\text{MultiTask}}$. For simplicity we consider two different tasks. Let the data for these two tasks as $X^{\text{task}_1}$ and $X^{\text{task}_2}$.
We add the below assumption:
\begin{assumption}
\label{orthass}
    the Hilbert space of $\text{span} \{ \mathbb{K}(\cdot, X^{\text{task}_1}) \}$ and  $\text{span} \{ \mathbb{K}(\cdot, X^{\text{task}_2}) \}$ are orthogonal.
\end{assumption}
This assumption seems quite strong, but task vectors are almost orthogonal according to the experiments in 
\cite{taskvector}. Note that this is a sufficient condition for task addition to work, if this do not hold, we cannot argue that task addition will lead to poor performance.

Denote the task vector for task 1 and 2 as $\tau_1$ and $\tau_2$, respectively. We have the following result.
\begin{corollary}
\label{orthcor}
    For the network considered in this paper, under assumption
    \ref{orthass}, and assumptions 
    in Section \ref{assumptions}, we are able to recover the disentanglement condition in \cite{tangenttask}. Specifically, 
    for any $\gamma > 0$, there exists sufficient large width and em-
bedding dimension $d^l$,  such that 
    the following holds with probability at least $1-r$ staring from $f_{\text{PT}}$ with learning rate $\epsilon = O(   \frac{1}{d^l} )$:
    \begin{equation}
    \begin{aligned}
         f_{\text{MultiTask}}  & =  f\left(X,\theta_{\text{PT} }
        + \sum_{i=1}^{2}\tau_i  
       \right) \\
       & =  \sum_{i=2}^2 g_i(X,\tau_i) + f_{\text{PT}} + e
    \end{aligned}
    \end{equation} 
where
$g_i(\mathbf{X}, \tau_i) = 0$ for $\mathbf{X} \notin X^{\text{task}_i}$. And the
$L_2$ norm of the error term $e$ is bounded by $\tilde{\mathcal{O}}\left( \frac{1}{{d^l}^{1/2}}\right)$.
\label{taskcor}
\end{corollary}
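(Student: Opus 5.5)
The plan is to apply Lemma \ref{linlemma} twice: once to each single-task fine-tuning run, and once to the merged parameter $\theta_{\text{PT}}+\tau_1+\tau_2$. Assumption \ref{orthass} will then split the first-order term across tasks.

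\textbf{Step 1 (each task vector is small and lies in the right span).} Fix task $i\in\{1,2\}$. Fine-tune $f_{\text{PT}}$ on $X^{\text{task}_i}$ with learning rate $\epsilon=O(1/d^l)$. Step 2 of the proof of Theorem \ref{ftthm} shows that, with probability at least $1-\gamma/2$, every iterate stays in $B(\theta_{\text{PT}},C_2{d^l}^{-1/2})$; hence $\|\tau_i\|_2\le C_2{d^l}^{-1/2}$. Under gradient descent on the squared loss, every update has the form $J_{\tau'}(X^{\text{task}_i})^{\top}r_{\tau'}$, where $r_{\tau'}$ is the residual vector at iteration $\tau'$. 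Replacing the running Jacobian $J_{\tau'}$ by $J_{\text{PT}}$ costs, by the local Lipschitzness of the Jacobian, at most $\tilde{\mathcal O}({d^l}^{-1/2})\|\theta_{\tau'}-\theta_{\text{PT}}\|$ per step. So, up to an error $\tau_i^{\perp}$ of lower order,
\[
\tau_i = J_{\text{PT}}(X^{\text{task}_i})^{\top}\alpha_i+\tau_i^{\perp}
\]
for some vector $\alpha_i$.

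\textbf{Step 2 (linearize the merged network).} Since $\|\tau_1+\tau_2\|_2\le 2C_2{d^l}^{-1/2}$, the point $\theta_{\text{PT}}+\tau_1+\tau_2$ lies in a slightly enlarged ball around $\theta_{\text{PT}}$. By the argument of Step 2 of the proof of Theorem \ref{ftthm}, this ball is still contained in $B(\theta_0,C_3{d^l}^{-1/2})$, where the Jacobian is locally Lipschitz. The Taylor remainder argument behind Lemma \ref{linlemma} therefore applies verbatim and gives
\[
f(X,\theta_{\text{PT}}+\tau_1+\tau_2)=f_{\text{PT}}(X)+\nabla f_{\text{PT}}(X)^{\top}\tau_1+\nabla f_{\text{PT}}(X)^{\top}\tau_2+e,
\]
with $\|e\|_{L_2}=\tilde{\mathcal O}({d^l}^{-1/2})$.

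\textbf{Step 3 (kernel form and orthogonality).} Insert Step 1 into each first-order term. This gives
\[
\nabla f_{\text{PT}}(X)^{\top}\tau_i=\sum_j\alpha_{i,j}\mathbb K_{\text{PT}}(X,\mathbf X^{\text{task}_i}_j)+\text{error}.
\]
Next replace $\mathbb K_{\text{PT}}$ by the stationary kernel $\mathbb K$. The NTK stability bound, $\sup_\tau\|K_{\text{PT}}-K_\tau\|_F\le \mathcal O({d^l}^{-1/2})$, together with its convergence to the limit as $d^l\to\infty$, controls this replacement. Define
\[
g_i(\cdot,\tau_i)=\sum_j\alpha_{i,j}\mathbb K(\cdot,\mathbf X^{\text{task}_i}_j)\in\text{span}\{\mathbb K(\cdot,X^{\text{task}_i})\}.
\]
By Assumption \ref{orthass}, the two spans are orthogonal. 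Reading $\mathbf X\notin X^{\text{task}_i}$ as $\mathbf X$ belonging to the other task's support, this means $g_i$ vanishes on the other task's inputs, i.e.\ $g_i(\mathbf X,\tau_i)=0$ for $\mathbf X\notin X^{\text{task}_i}$. This is exactly the weight disentanglement condition of \cite{tangenttask}. All remaining errors are absorbed into $e$, and a union bound over the two runs and the kernel concentration events gives the stated probability.

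\textbf{Main obstacle.} The hardest step is bounding the coefficients $\alpha_i$ uniformly in $d^l$. Without this, the Jacobian-drift error and the kernel-replacement error, each $\tilde{\mathcal O}({d^l}^{-1/2})$ times the size of $\alpha_i$, cannot be absorbed into $e$. My first attempt will reuse the reparameterization from the early-stopping analysis of \cite{sunaistats}, in which the accumulated residuals are written through $(I-\epsilon K)^{\tau}$. Full rank of $K$ (Assumption \ref{linea1}) together with the drop-out error bound (Assumption \ref{dropb}) should then give $\|\alpha_i\|=O(\sqrt{N})$ uniformly in $d^l$, so that all errors remain of order $\tilde{\mathcal O}({d^l}^{-1/2})$ for fixed sample size.
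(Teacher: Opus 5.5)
Your proposal is correct and follows the paper's proof. Like the paper, you write each fine-tuned model in kernel form $f_{\text{PT}}+g_i+e_i$ via Lemma \ref{linlemma}, add the two first-order terms at $\theta_{\text{PT}}+\tau_1+\tau_2$, and use Assumption \ref{orthass} with the reproducing property to get $\mathbb{K}(\mathbf{X}_i,\mathbf{X}_j)=0$ across tasks, so each $g_i$ vanishes on the other task's inputs; this is also all the paper establishes for ``$\mathbf{X}\notin X^{\text{task}_i}$''.

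Your Step 2, which checks that the merged parameter still lies in the ball where the Jacobian is locally Lipschitz, makes explicit a linearization the paper silently assumes when it writes $f_{\text{MultiTask}}=f_{\text{PT}}+g_1+g_2+e_1+e_2$. In Step 3, however, $\sup_\tau\|K_{\text{PT}}-K_\tau\|_F$ controls only the kernel drift during fine-tuning, not the gap $K_{\text{PT}}-\mathbb{K}$, and the paper gives no explicit rate for that gap. It is cleaner to take the kernel form with $\mathbb{K}$ directly from Lemma \ref{linlemma}, as the paper does.
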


\section{Practical Insights} \label{practicalinsight}

Although the optimal stopping time $\widehat{T}_{op}$ is not directly computable in practice, the theoretical results provide a principled abstraction that yields concrete and actionable insights for real-world LLM fine-tuning after a closer examination.

\subsection{The theoretical framework 
encourages stopping early}
\label{taskdecaydis}

As for the question \textit{``why we only need a few epochs to get a good performance model?"}.
 We are able to answer it  after reviewing more details of the theoretical optimal stopping time $\widehat{T}_{\text{op}}$. As mentioned in Section \ref{stoprulesec}, we first define a threshold $\widehat{T}_{\max}$. Then 
$\widehat{T}_{\text{op}}$ is obtained by maximizing 
\begin{equation}
    \frac{C}{\epsilon \tau} + g(\tau)
\end{equation}
where $g(\tau)$ is an error term introduced by the accumulated kernel change during training and is non-decreasing  with training epoch $\tau$. The rationale for limiting training to only a few epochs emerges from two aspects of the theory. First, recall that  the constant $C_{\mathcal{H}}$ constrains $\widehat{T}_{\max}$ from being a large number, reflecting the expectation that $f_{\text{PT}}$ is already reasonably close to $f_{\text{tgt}}$. Second, the error term $g(\tau)$ favors fewer iterations, since running too long leads to greater accumulation of errors.

\subsection{Connection to kernel ridge regression
}
Early stopping is a classical method to prevent overfitting. And it has been shown that it is theoretically equivalent to kernel ridge regression in non-parametric regression settings \citep{esnonpara}. More specifically, the running sum of early stopping steps $\epsilon \widehat{T}$ acts the same as the ridge parameter $\lambda$ in the kernel ridge regression 
theoretically.  Later works  \citep{lazyvi,sunaistats,sunarxiv} extend this result to fully connected neural networks. We can apply similar reasoning in our case 
to derive the theoretical equivalence between the early stopping and kernel ridge regression
even though we consider way more complicated network structure. 

Formally, the early stopping can be roughly viewed as solving the below ridge regression problem:
\begin{equation}
\Delta \theta = \arg\min_{\omega }
    \frac{1}{N_2} \sum_{i=1}^{N_2} \left[  Y^{\text{tgt}}_i - f_{\text{PT}}( \mathbf{X}^{\text{tgt}}_i )  
    - w^T \nabla_{\theta} f_{\text{PT}}( \mathbf{X}^{\text{tgt}}_i)
    \right]^2 + \lambda \| w \|^2_2
\end{equation}
where $\lambda = \epsilon \widehat{T}_{\text{op}}$.
Then the fine-tuned model can viewed roughly as
\begin{equation}
    f_{\text{FT}} \approx  f_{\text{PT}} + \Delta \theta \nabla f_{\text{PT}}(\theta).
\end{equation}

In \cite{lazyvi}, the regularization parameter $\lambda$ is chosen via k-fold cross validation. The optimal stopping time 
$\widehat{T}_{\text{op}}$  can only be computed under ideal theoretical settings. So in practice, we recommend using the hold-out method, where the
training data is split into a validation set, and updates
are halted when the validation loss shows no improvement over several iterations.  Holdout early stopping can be loosely viewed as a degenerate case of cross-validation with a single validation split, but unlike k-fold cross validation, the validation data directly affects the training trajectory by determining the stopping time. Even though we adopt the hold-out method in our experiments, the optimal stopping rule proposed in the theoretical framework is  expected to outperform the hold-out method, which is supported by simulations (Figure 3) in \cite{esnonpara}.

\subsection{Task specific  convergence rate}

The convergence rate established in Theorem~\ref{ftthm} should be interpreted as an upper bound. In practice, the actual convergence behavior depends on the specific downstream task and the underlying network architecture \citep{sunarxiv, esnonpara}. In particular, it is closely tied to the eigenvalue decay rate of the population kernel matrix.

According to \cite{sunarxiv}, for neural networks one typically expects a polynomial eigenvalue decay, that is,
\begin{equation}
    \lambda_k \leq C(\frac{1}{k})^{2\beta}\ \  \quad 
    \text{for some } \beta > \frac{1}{2} \text{ constant } C.
    \label{polybeta}
\end{equation}
Under this condition, the resulting convergence rate of the mean prediction error is expected to satisfy
\begin{equation}
\begin{aligned}
      \|f_{\widehat{T}_{op}} - f_{\text{tgt}}\|_2^2       &\leq \mathcal{O}\left( N_2^{-\frac{2\beta}{2\beta + 1 }}\right)
\end{aligned}
\end{equation}
This implies that for sufficiently large $\beta$, the convergence rate approaches the parametric rate of $1/N_2$.

From a practical perspective, this result indicates that when comparing two downstream tasks with different eigenvalue decay rates, the task with faster decay can achieve comparable predictive performance using fewer training samples (up to constant factors). Consequently, the eigenvalue decay rate can serve as an informative measure of task–model compatibility in real-world fine-tuning scenarios. In particular,
 tasks exhibiting faster eigenvalue decay may require fewer labeled samples to reach a desired performance level for a given pre-trained model.  We further illustrate these insights empirically in the 
experiment (Section \ref{eigenexp}  ).


\section{EXPERIMENTS} \label{mainexp}




To support our theoretical results, we conduct experiments using open-source GPT models whose architectures align with our setting. In particular, we use  GPT-Neo-1.3B (16 heads, 128-dimensional embeddings), both pretrained on 825 GiB of text from 22 diverse sources \citep{black2021gptneo,black2022gptneox}. 

We consider several NLP tasks, which cover: 
semantic textual similarity (STS-B \citep{stsb}, STS-2012\citep{sts12}), semantic textual relatedness (SemRel \citep{semrel1,semrel2}
)
natural language inference (RTE), 
sentiment analysis (CR, SentEval-CR \citep{sencr}, Amazon Polarity\citep{amazon1}), paraphrase detection tasks (MRPC \citep{mrpc}). 
STS-B and STS-2012 are regression tasks and others are  classification tasks. So we need to add an additional classification head to the GPTNeo models, so this falls into the specific fine-tuned setting discussed in Section \ref{addheadsec}.

As noted in \cite{sunaistats}, the optimal stopping time $\widehat{T}_{\text{op}}$ is not computable in practice, so we adopt the same hold-out method to decide when to stop.
In our experiments, we first investigate 
relationship between
the eigenvalue decay  and the fine-tuned model performance, and then show that  when task vectors are nearly orthogonal across different tasks, task addition yields a multitask model $f_{\text{MultiTask}}$ with strong performance.

\subsection{Eigenvalue decay rate investigation}
\label{eigenexp}
We illustrate the connection between the eigenvalue decay rate and model performance using three  regression tasks:
SemRel, 
STS-B and STS 2012,  and three classification tasks: RTE, MRPC, and Amazon Reviews. Specifically, we randomly select 100 samples from each fine-tuning dataset and compute the empirical kernel matrix induced by the NTK of $f_{\text{PT}}$. We then fit a least-squares line to the log–log eigenvalue curve to estimate the polynomial decay coefficient $\beta$ in Equation (\ref{polybeta}).

For the regression tasks, the estimated mean decay rates are
$\beta = 0.60 \pm 0.0366$ for SemRel
,
$\beta = 0.58 \pm 0.0126$ for STS-12 and $0.54 \pm 0.0229$ for STS-B. For the classification tasks, the corresponding values are $\beta = 0.62 \pm 0.0076$, $0.55 \pm 0.0029$, and $0.51 \pm 0.01$ for Amazon Reviews, MRPC, and RTE, respectively. The eigenvalue decay curves are reported in the Appendix.
As discussed in Section~\ref{taskdecaydis}, this predicts that fewer training samples are required for SemRel than for STS-12, and fewer for STS-12 than for STS-B, consistent with the fact that STS-12 is a cleaner subset of the data sources aggregated into STS-B. Similar conclusions are expected to hold for the three classification tasks.

We verify this prediction by evaluating fine-tuning performance under varying training sample sizes.
For regression tasks, test performance is measured by MSE loss, while for classification tasks it is measured by test accuracy, as shown in Figure~\ref{figeigncompare}. The F1-score comparison for classification tasks is depicted in Figure~\ref{f1supp} in the Appendix.
For the regression tasks, we observe that SemRel performs best, followed by STS-2012, with STS-B performing the worst.  Across all tasks, Amazon Reviews converges the fastest, reaching near-optimal performance with fewer than 500 training samples, followed by MRPC, while RTE exhibits the slowest convergence and the lowest overall performance.
These results provide empirical support for the claims made in Section~\ref{taskdecaydis}.

\begin{figure}[htp!]
     \centering
     \begin{subfigure}[b]{0.4\textwidth}
         \centering
         \includegraphics[width=1\textwidth]{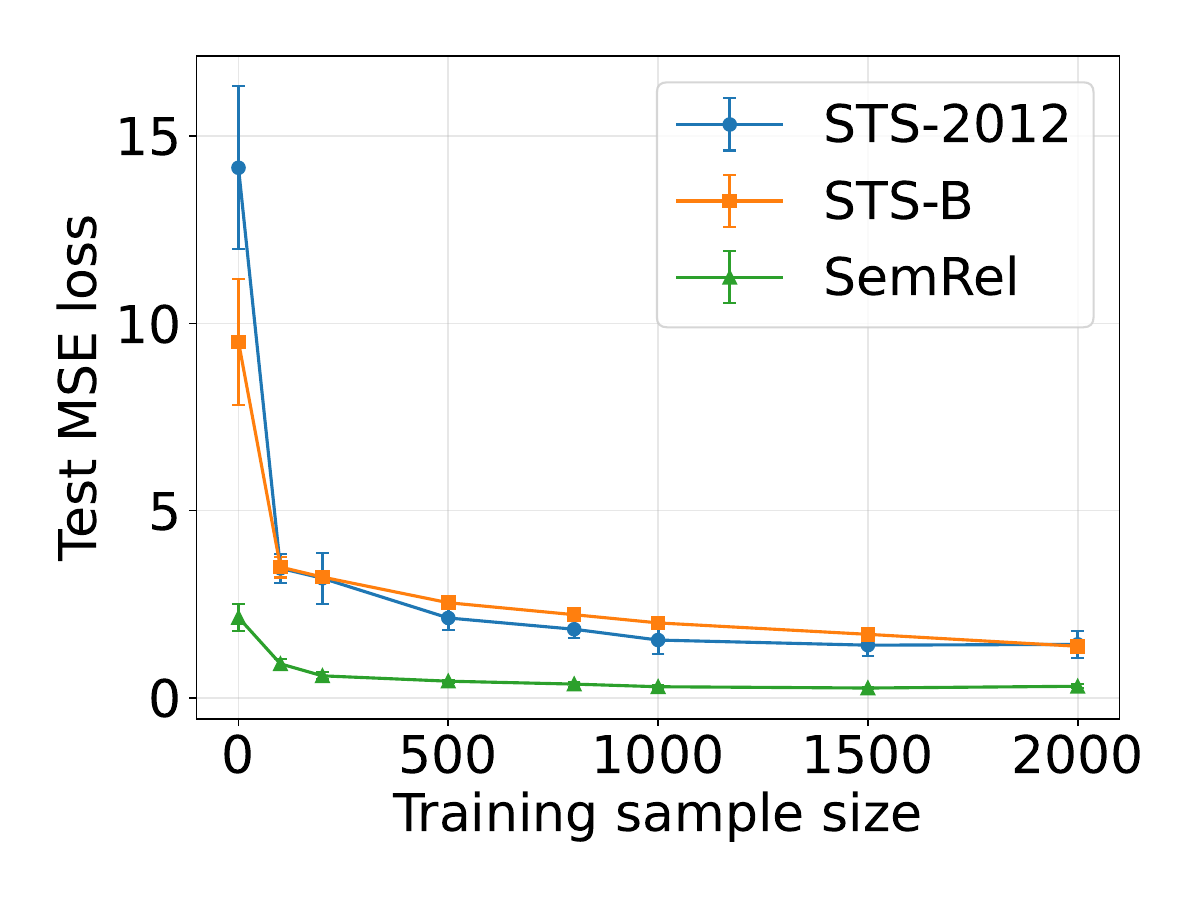}  
         
     \end{subfigure}
     \begin{subfigure}[b]{0.4\textwidth}
         \centering
         \includegraphics[width=1\textwidth]{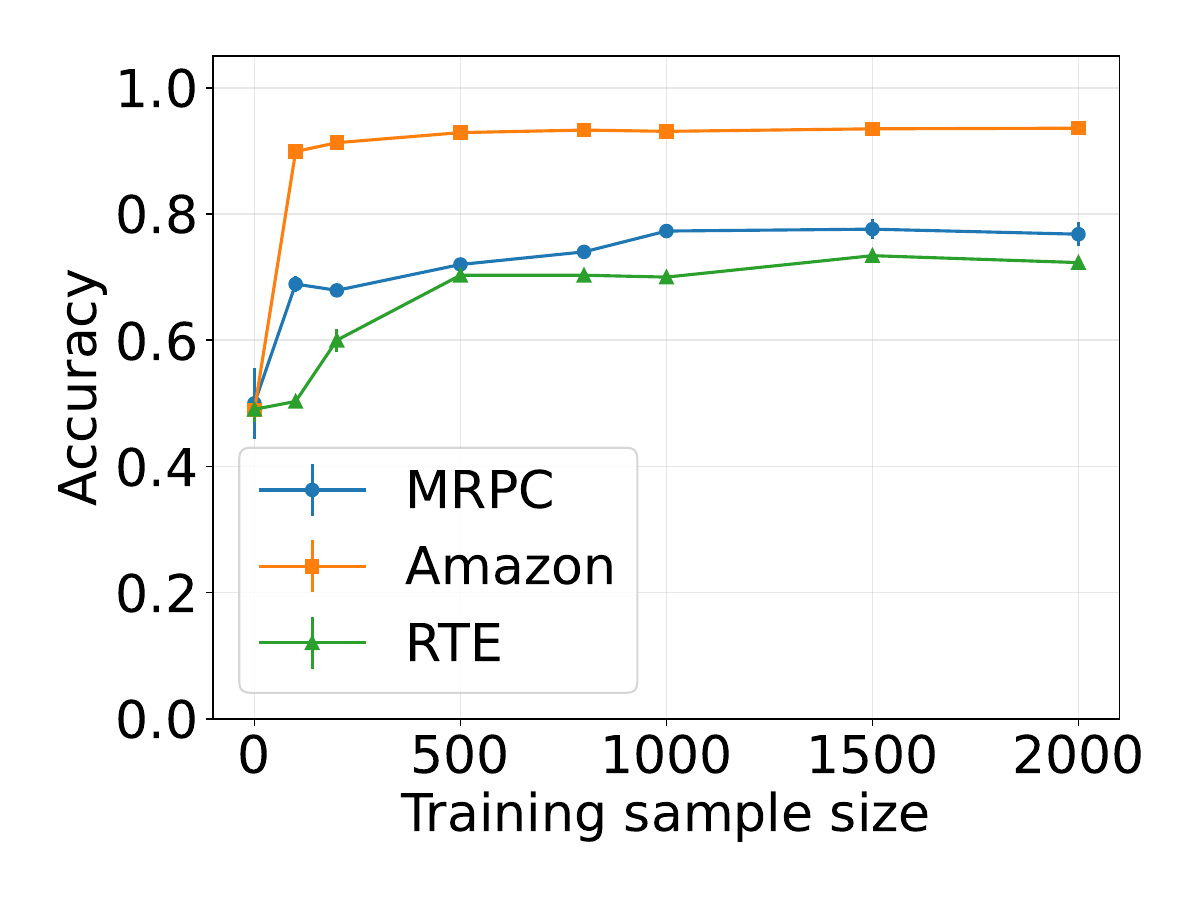}

     \end{subfigure}
\caption{  
Mean learning curves over three random runs: MSE loss for STS-12 and STS-B (left), accuracy for RTE, MRPC, and Amazon Reviews (right). 
RemSem has the largest decay rate ($\beta = 0.60$), followed by STS-12 ($\beta = 0.58$), and STS-B the smallest ($\beta = 0.54$), reflecting the performance ranking. For classification, $\beta=0.62, 0.55, 0.51$ for Amazon Reviews, MRPC, and RTE, respectively, matching their observed performance ranking.
}
\label{figeigncompare}
\end{figure}

\subsection{Task Vector}
\label{taskvecorthoexp}
We primarily use GPT-Neo 1.3B in this experiment, as its setup aligns more closely with our theoretical assumptions. For a task pair $(\text{task}_1, \text{task}_2)$, we first fine-tune the model separately on each task to obtain $f_{\text{FT}_1}$ and $f_{\text{FT}_2}$. From these, we derive the corresponding task vectors $\tau_1$ and $\tau_2$. The multi-task model is then constructed by adding $\tau_1$ and $\tau_2$ to the pre-trained parameters $f_{\text{PT}}$ via task vector addition. We compute the cosine similarity between $\tau_1$ and $\tau_2$ across four datasets  and compare the performance of $f_{\text{MultiTask}}$ against the individually fine-tuned models (Figure \ref{taskfig}).


 \begin{figure}[htp!]
     \centering
     \begin{subfigure}[b]{0.4\textwidth}
         \centering
         \includegraphics[width=1\textwidth]{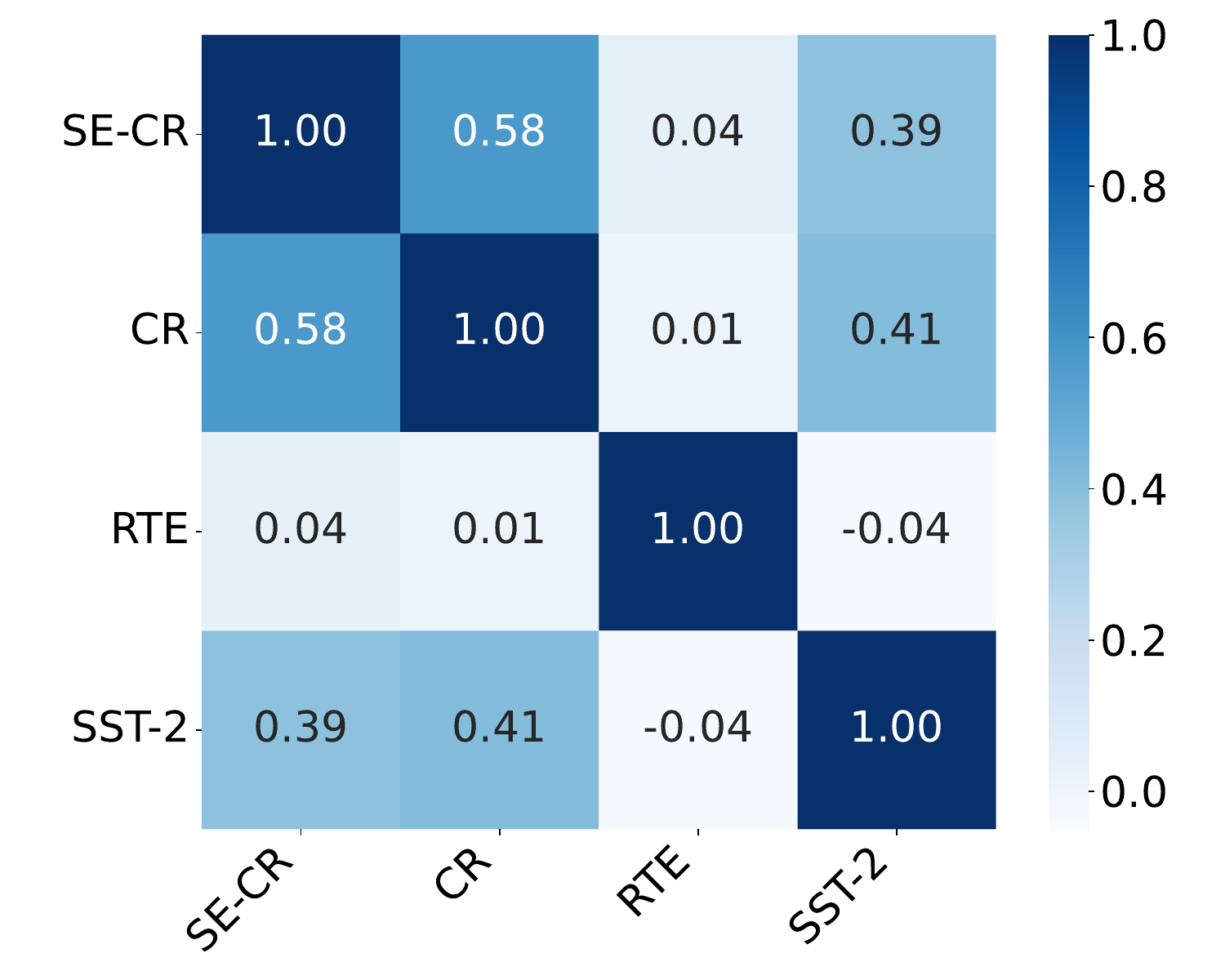}  
     \end{subfigure}
     \begin{subfigure}[b]{0.4\textwidth}
         \centering
         \includegraphics[width=1\textwidth]{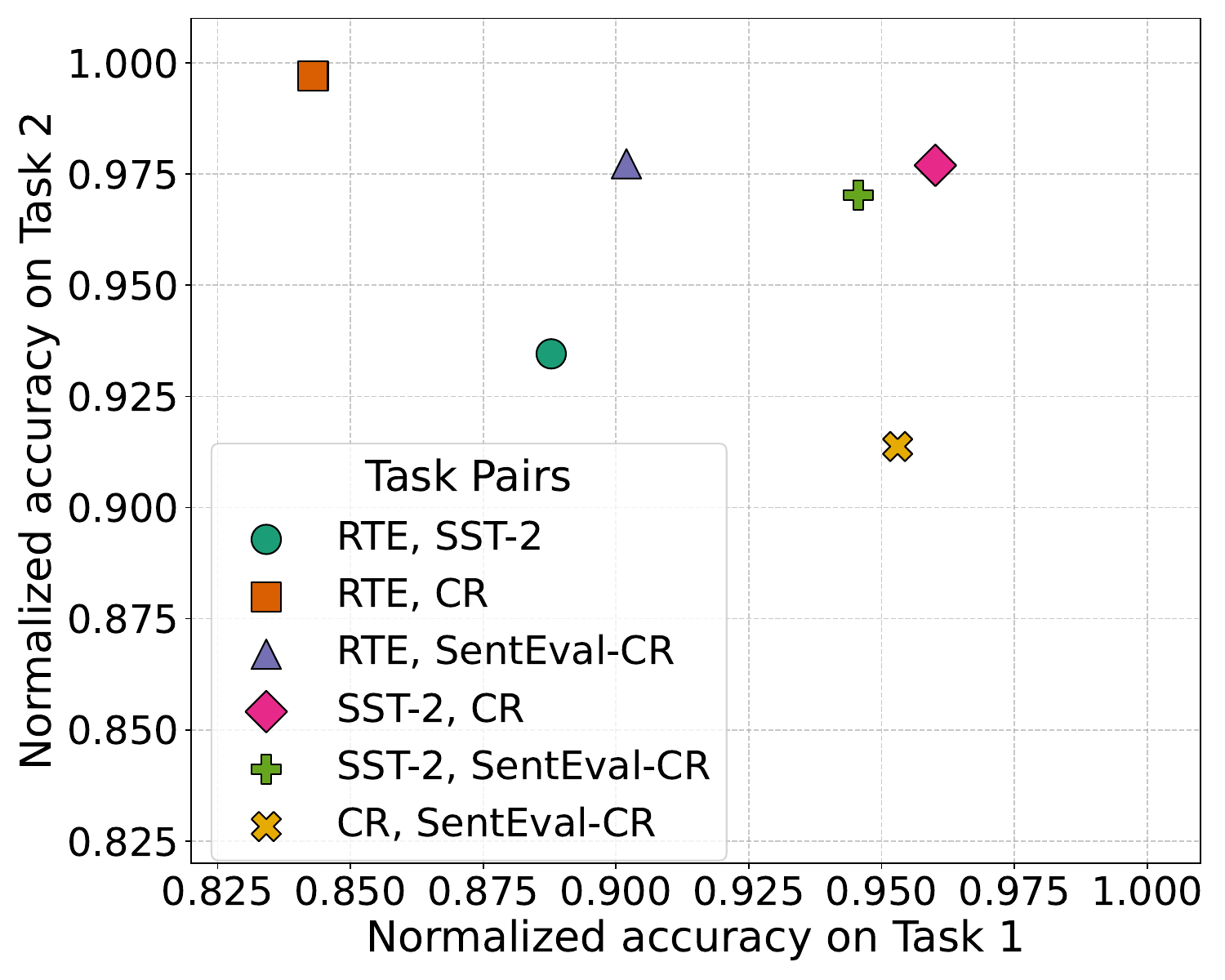}
     \end{subfigure}
\caption{
Task vector cosine similarity heatmap, SE-CR is shorthand of SentiEval-CR (left);  Normalized accuracy of $f_{\text{MultiTask}}$ compared with fine-tuned model (right). Task vector of RTE is almost orthogonal to other task vectors and task addition still works well even when task vectors are correlated. 
}
\label{taskfig}
\end{figure}
We observe that the task vector of RTE is nearly orthogonal to the other task vectors, and $f_{\text{MultiTask}}$  achieves strong performance on individual tasks, which is consistent with Corollary \ref{orthcor}. Another key finding is that even when task vectors are not orthogonal, task addition can still yield good performance for $f_{\text{MultiTask}}$. Importantly, we do not claim that correlation between task vectors or tasks necessarily leads to failure of task addition, so this observation does not contradict our theoretical results.

\section{DISCUSSION}
In this paper, we focus on transformer-based networks and establish a theoretical connection between modern fine-tuning practices and early stopping. Our analysis provides 
valuable practical insights about LLM finetuning  from a rigorous perspective. 
 The theoretical results also offer a more formal interpretation of the effectiveness of task arithmetic. While our work emphasizes a theoretical perspective, experiments with modern LLMs further support the analysis, highlighting the practical relevance of our framework.

The alignment between our theoretical predictions and the widespread success of pre-trained models in practice suggests that the proposed framework has the potential to extend to broader settings. In particular, it may accommodate alternative loss functions, additional components of modern deep learning architectures, and more advanced training techniques.

\bibliographystyle{unsrtnat}
\bibliography{references}

\appendix

\section{Experimental Details}
\subsection{Datasets}
\begin{table}[htp!]
\centering
\begin{tabular}{llllll}
\hline
\textbf{Dataset} & $C$ & \textbf{Train} & \textbf{Test} &  \textbf{Type} & \textbf{Label words} \\
\hline
MRPC  & 2 & 3,668   &   408  & 
paraphrase 
& \{Yes, No\} \\
RTE       & 2 & 2,490   & 277  &  NLI & \{Yes, No\} \\
CR    & 2 & 3,390   & 376 &  sentiment & \{positive, negative\} \\
SentEval-CR  & 2   & 3,010 & 753   & sentiment  &  \{positive, negative\} \\
Amazon Reviews & 2   & 3.6M &  400k   & sentiment  &  \{positive, negative \}\\
\hline
\end{tabular}
\caption{The statistics of the classification datasets we used in our experiments. }
\end{table}

\begin{table}[htp!]
\centering
\begin{tabular}{lllllc}
\hline
\textbf{Dataset}  & \textbf{Train} & \textbf{Test} &  \textbf{Type} & \textbf{Score range} \\
\hline
SST-12  & 2,234   & 3,108  &  similarity  & 0-5 \\
SST-B    & 5,749  & 1,379 & similarity  & 0-5 \\
\hline
\end{tabular}
\caption{The statistics of the regression datasets we used in our experiments. }
\end{table}

The CR  and SentEval-CR datasets are from \cite{sencr},
MRPC is from \citep{mrpc}, RTE is from \cite{rte}
Amazon reviews dataset is from \cite{amazon1}, SST-12 is from 
\cite{sts12}
and  STS-B is from  \cite{stsb}.

\subsection{Computing the Kernel}
We employ functorch \citep{functorch} to compute the NTK for GPT-Neo 125M and GPT-Neo 1.3B, combining backward-mode automatic differentiation to obtain Jacobians with forward-mode differentiation to evaluate Jacobian–vector products \citep{novak2022fast}.

\subsection{Additional experiment results}
The eigenvalue decay curves for different tasks are shown in Figure 
\ref{amazonreviews} -- \ref{semralbeig}.  The process to get the estimated $\beta$   is as follows. Suppose  the eigenvalues follow a power law:
\begin{equation}
    \lambda_k  \approx  \lambda_1 k^{-2\beta}
\end{equation}
taking logs:
\begin{equation}
    \log(\lambda_k / \lambda_1) \approx -2\beta \log k 
\end{equation}
we then run a least square estimate on the log ration $\log(\lambda_k / \lambda_1)$ and $\log k$ to estimate the slope $-2\beta$. 

The f1 score comparison for the three classification tasks are shown in Figure \ref{f1supp}. We can see that we observe similar pattern as the accuracy learning curve. 
\begin{figure}[htp!]
     \centering
     \begin{subfigure}[b]{0.3\textwidth}
         \centering
         \includegraphics[width=1\textwidth]{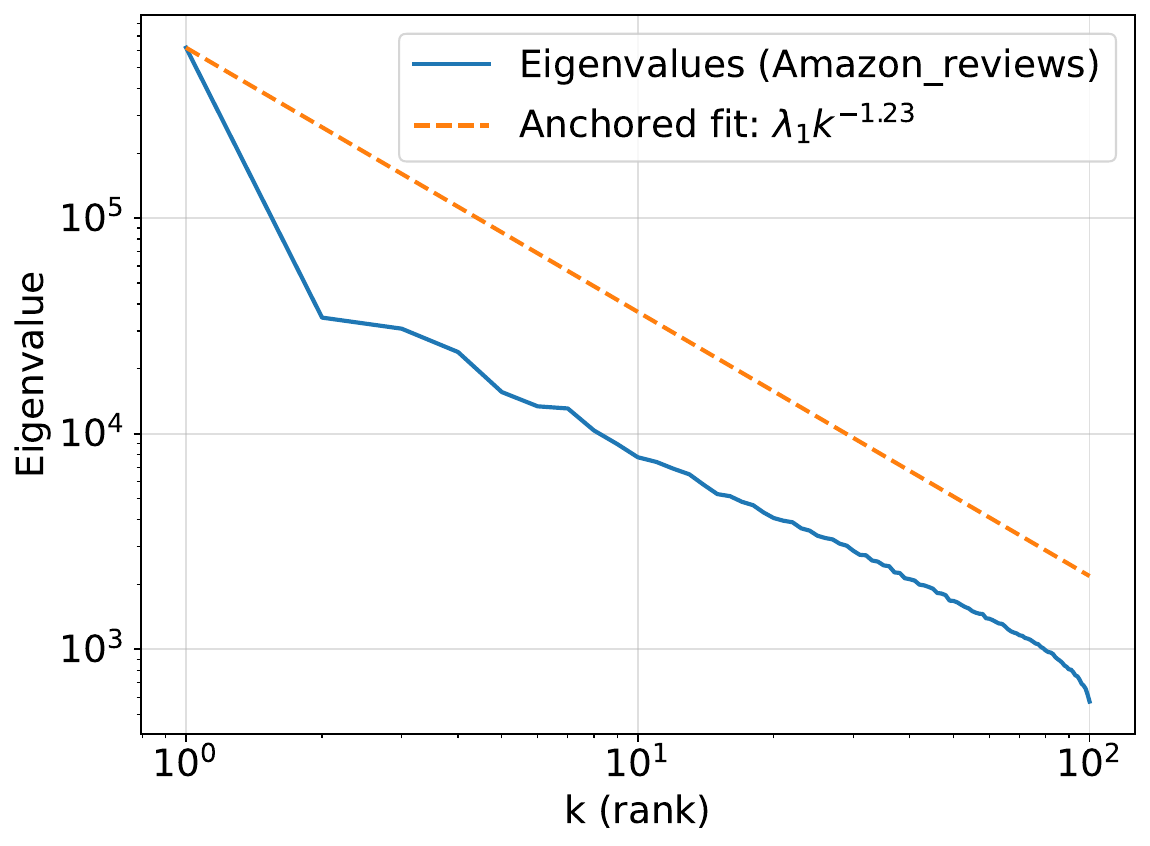}   
     \end{subfigure}
     \begin{subfigure}[b]{0.3\textwidth}
         \centering
         \includegraphics[width=1\textwidth]{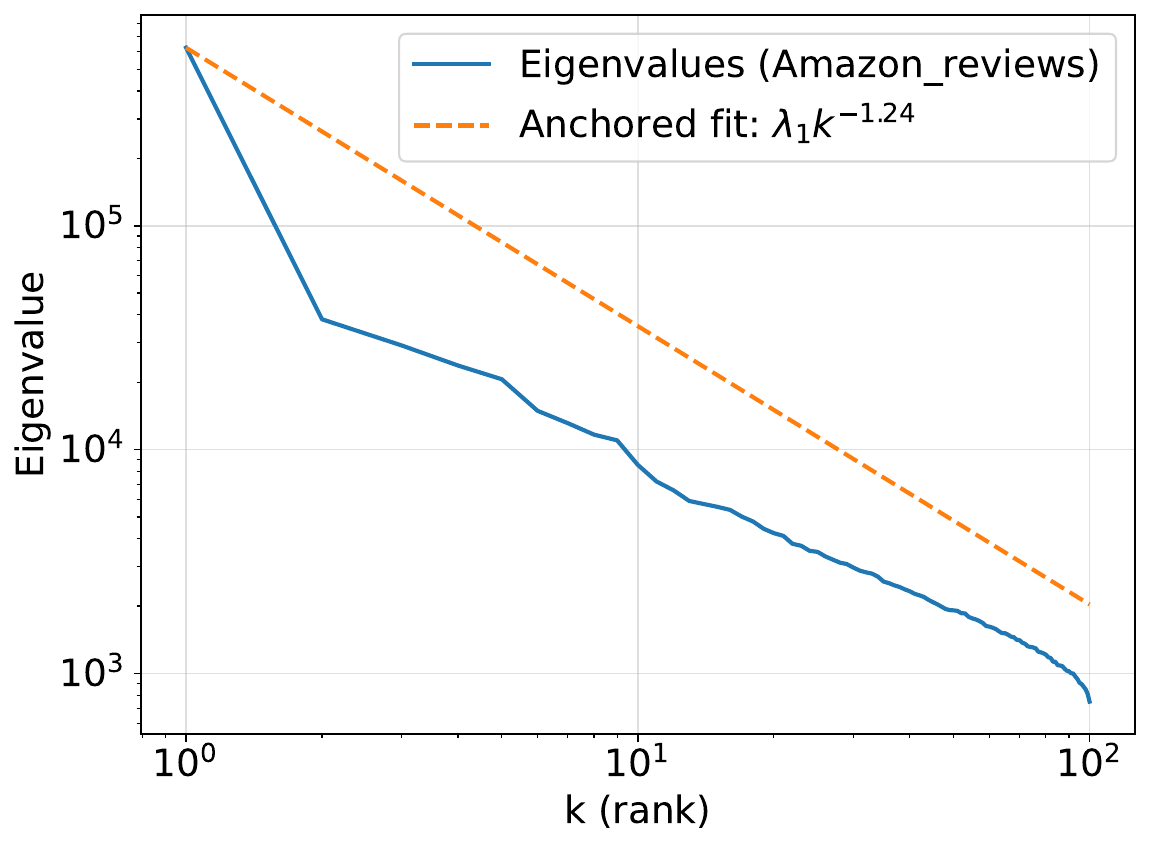}
        
     \end{subfigure}
     \begin{subfigure}[b]{0.3\textwidth}
         \centering
         \includegraphics[width=1\textwidth]{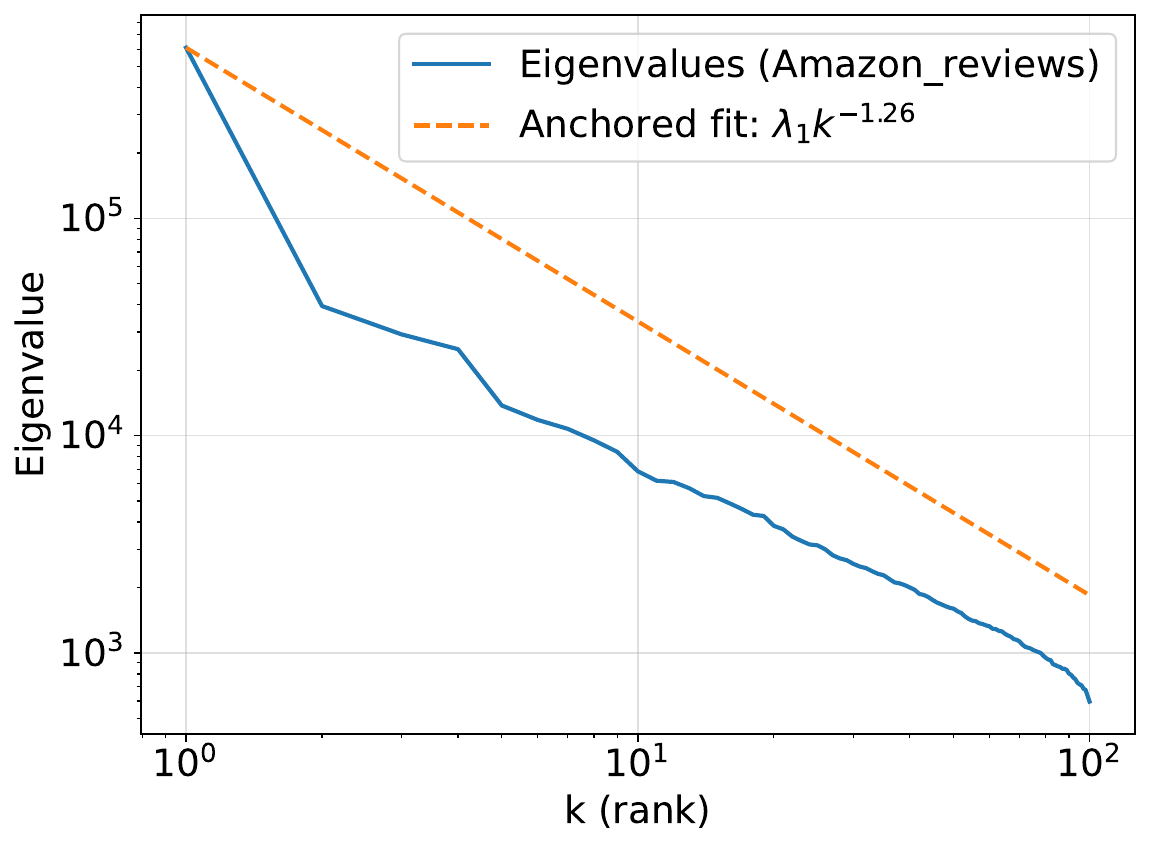}
     \end{subfigure}
\caption{Amazon reviews eigenvalues decay curve. 
}
\label{amazonreviews}
\end{figure}

\begin{figure}[htp!]
     \centering
     \begin{subfigure}[b]{0.3\textwidth}
         \centering
         \includegraphics[width=1\textwidth]{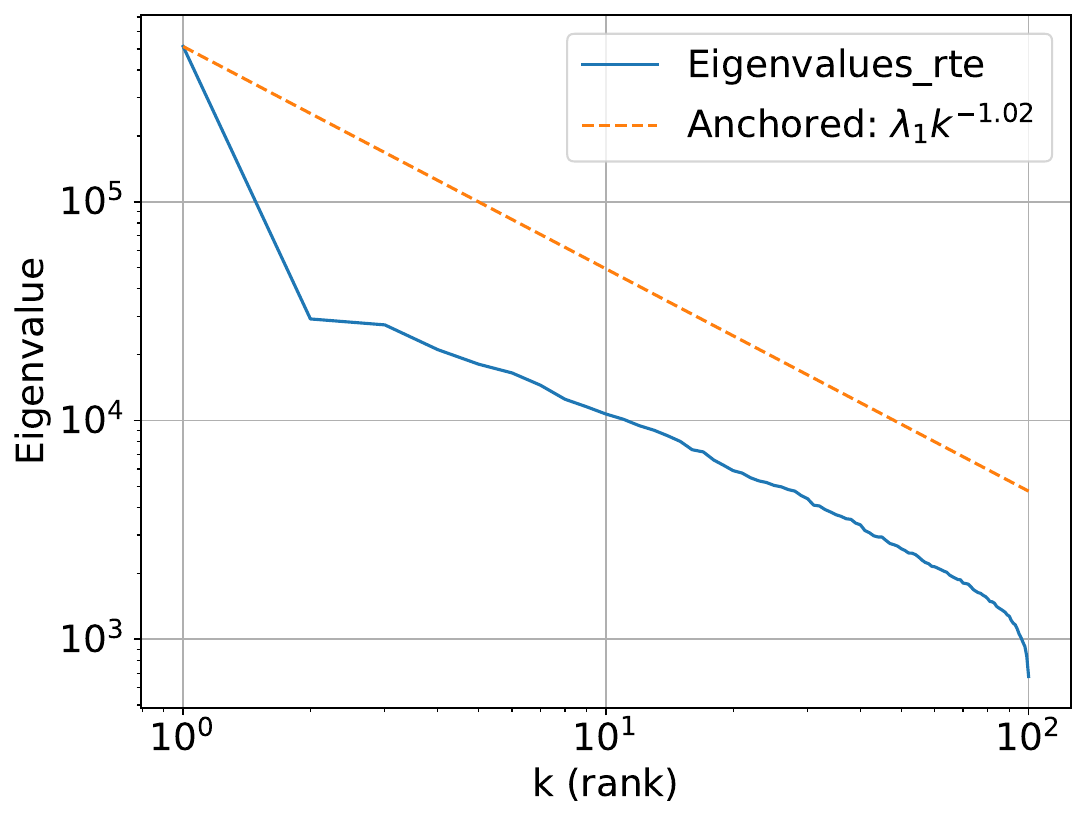}   
     \end{subfigure}
     \begin{subfigure}[b]{0.3\textwidth}
         \centering
         \includegraphics[width=1\textwidth]{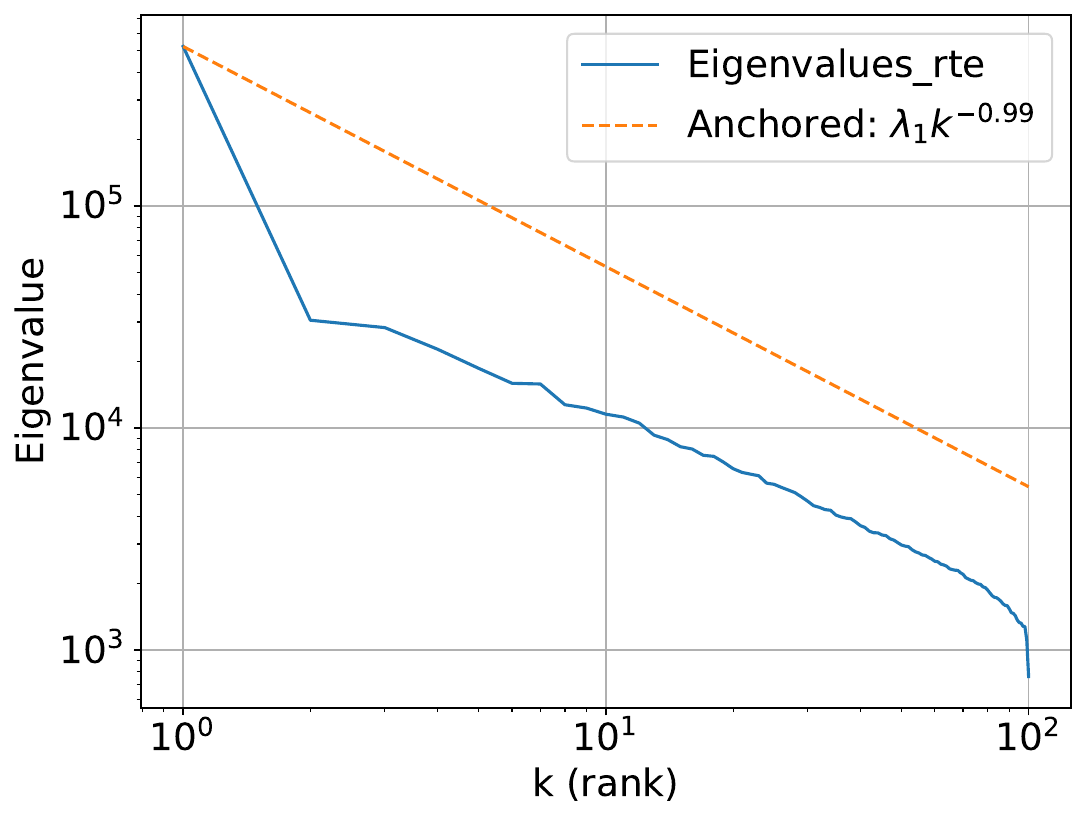}
        
     \end{subfigure}
     \begin{subfigure}[b]{0.3\textwidth}
         \centering
         \includegraphics[width=1\textwidth]{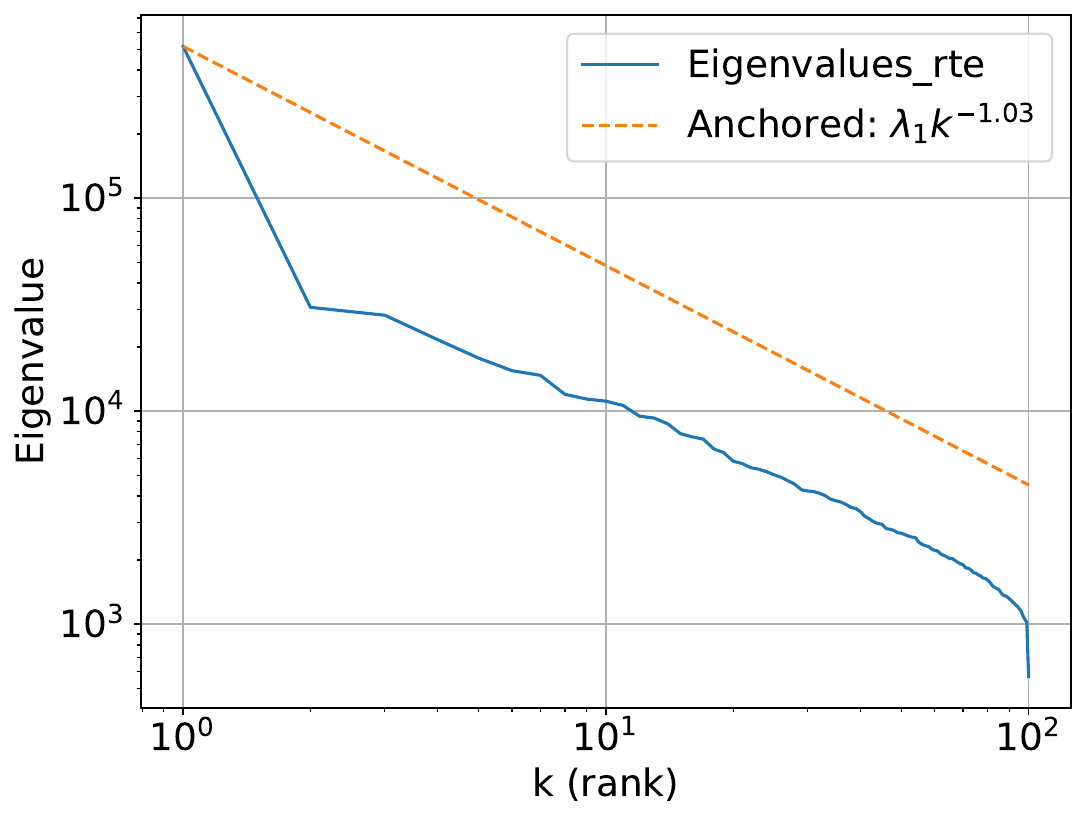}
     \end{subfigure}
\caption{RTE eigenvalues decay curve.
}
\end{figure}

\begin{figure}[htp!]
     \centering
     \begin{subfigure}[b]{0.3\textwidth}
         \centering
         \includegraphics[width=1\textwidth]{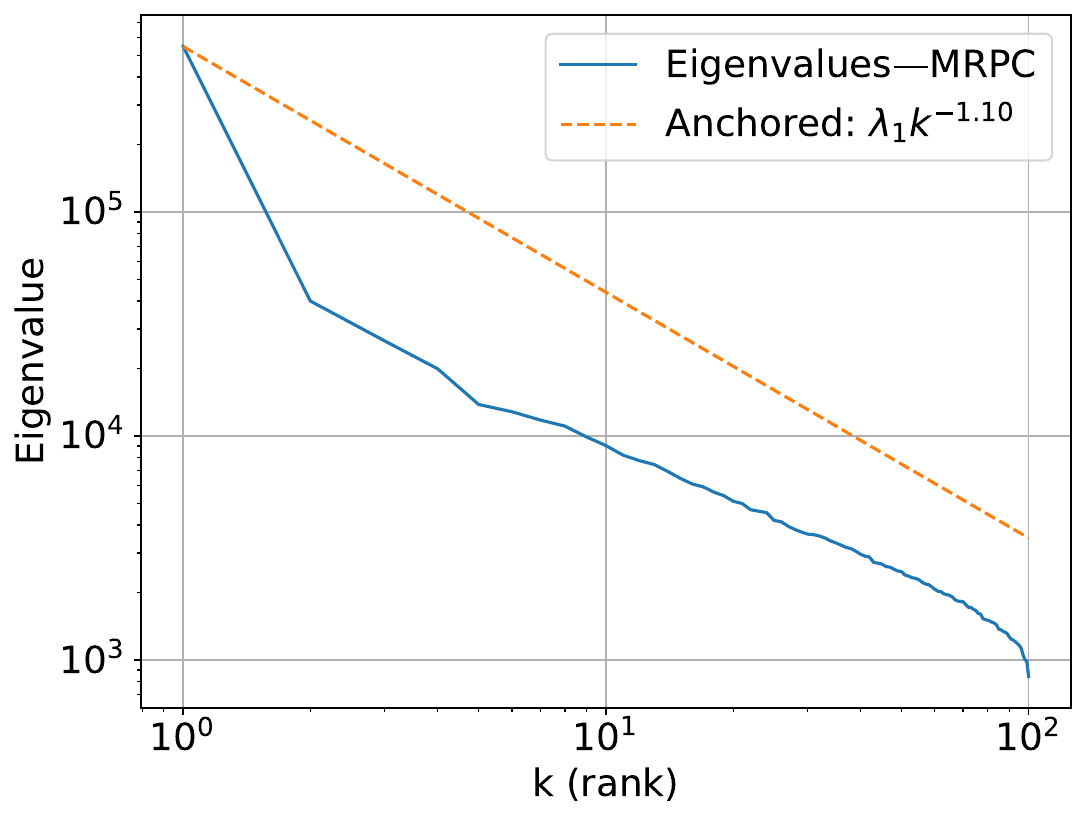}   
     \end{subfigure}
     \begin{subfigure}[b]{0.3\textwidth}
         \centering
         \includegraphics[width=1\textwidth]{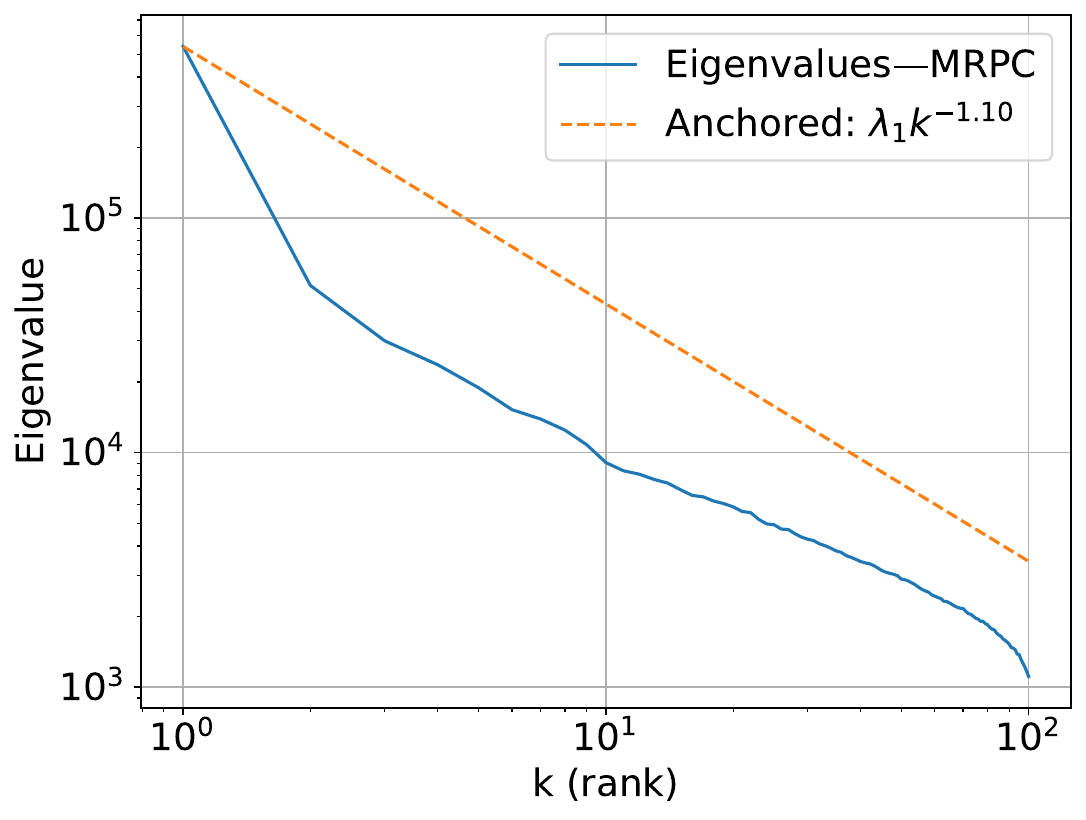}
        
     \end{subfigure}
     \begin{subfigure}[b]{0.3\textwidth}
         \centering
         \includegraphics[width=1\textwidth]{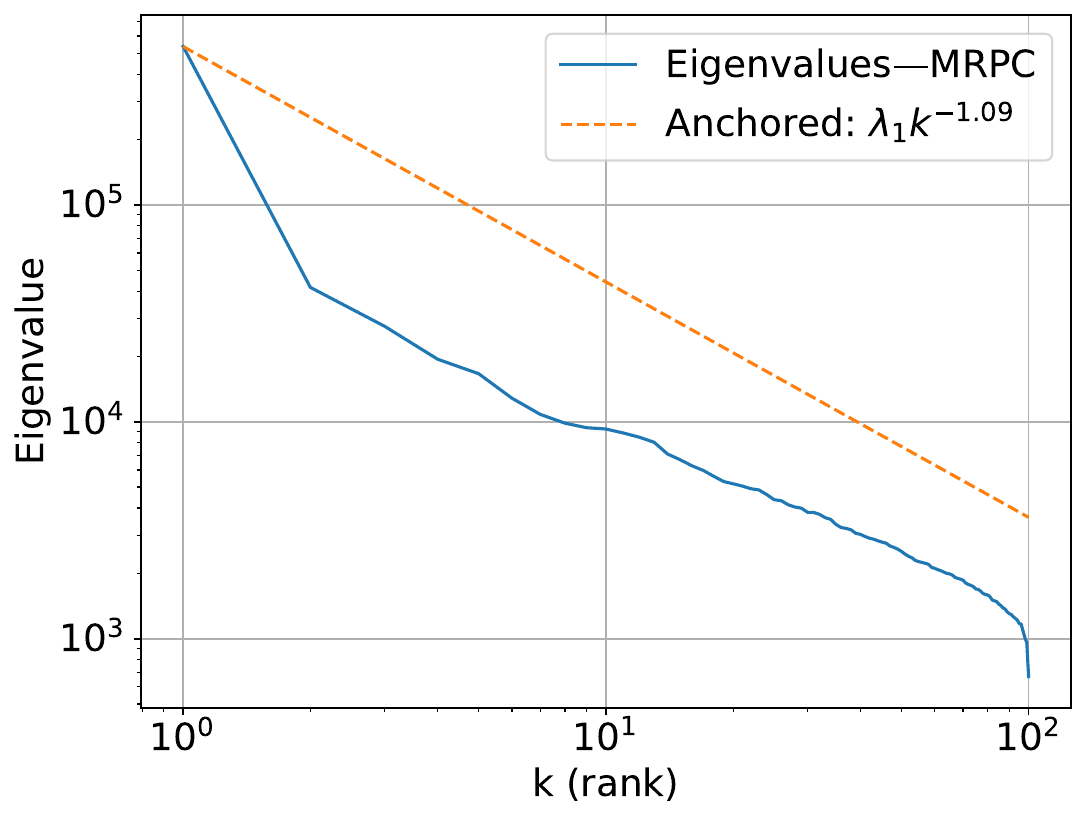}
     \end{subfigure}
\caption{MRPC eigenvalues decay curve.
}
\end{figure}

\begin{figure}[htp!]
     \centering
     \begin{subfigure}[b]{0.3\textwidth}
         \centering
         \includegraphics[width=1\textwidth]{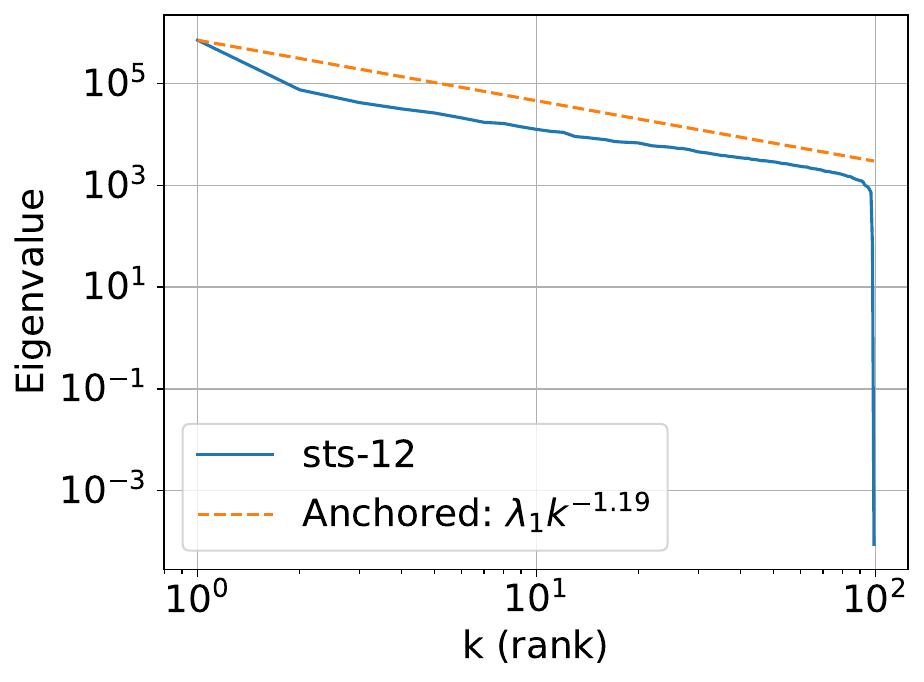}   
     \end{subfigure}
     \begin{subfigure}[b]{0.3\textwidth}
         \centering
         \includegraphics[width=1\textwidth]{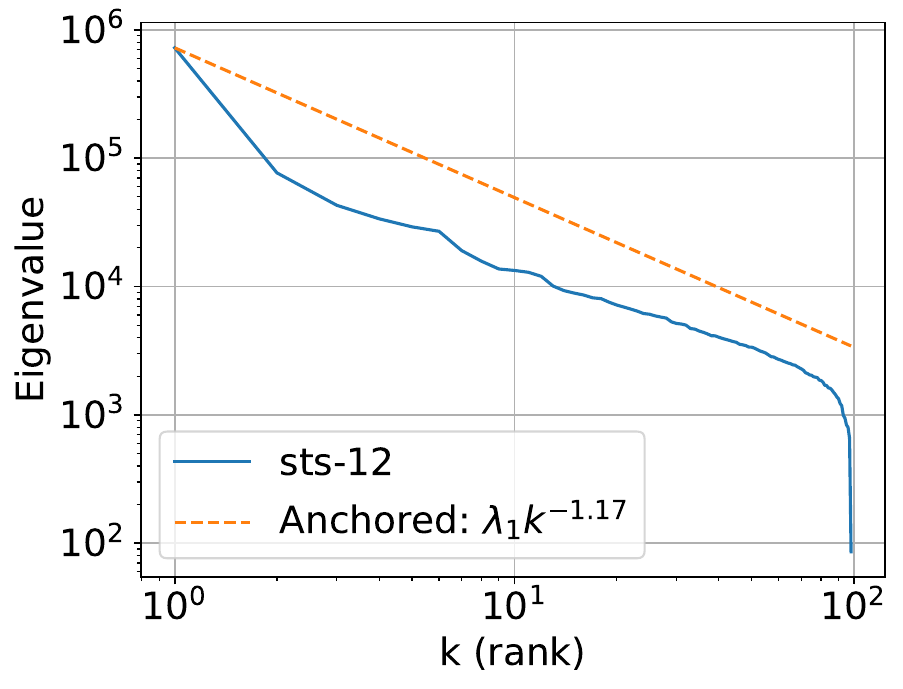}
        
     \end{subfigure}
     \begin{subfigure}[b]{0.3\textwidth}
         \centering
         \includegraphics[width=1\textwidth]{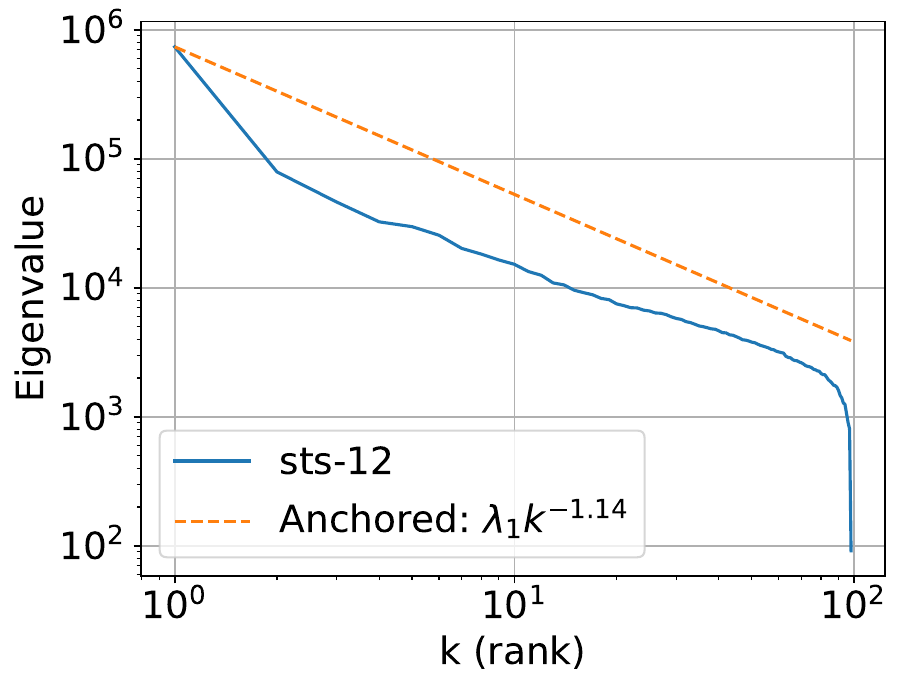}
     \end{subfigure}
\caption{STS-12 eigenvalues decay curve.
}
\end{figure}

\begin{figure}[htp!]
     \centering
     \begin{subfigure}[b]{0.3\textwidth}
         \centering
         \includegraphics[width=1\textwidth]{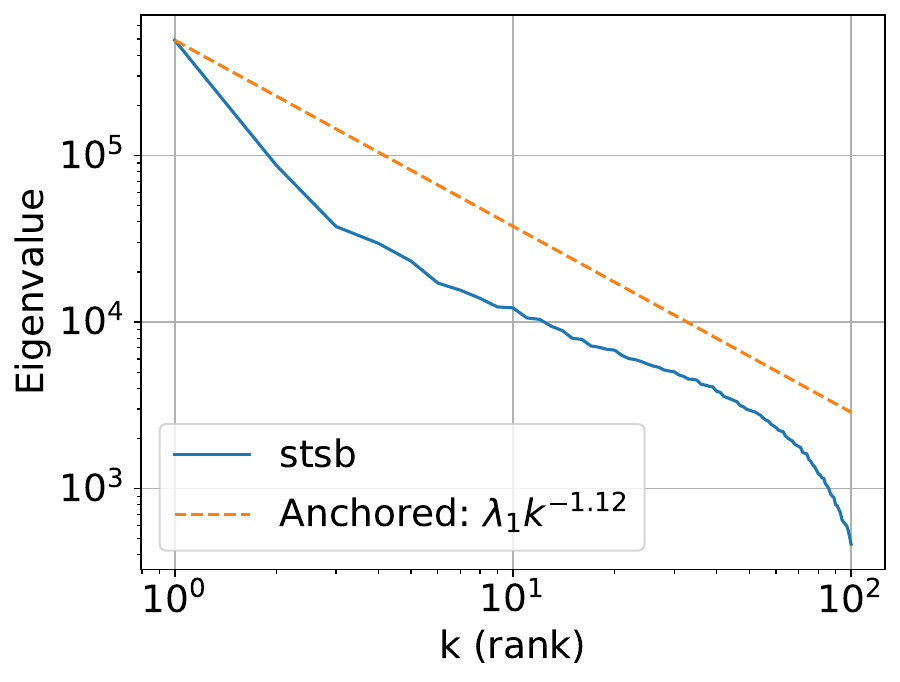}   
     \end{subfigure}
     \begin{subfigure}[b]{0.3\textwidth}
         \centering
         \includegraphics[width=1\textwidth]{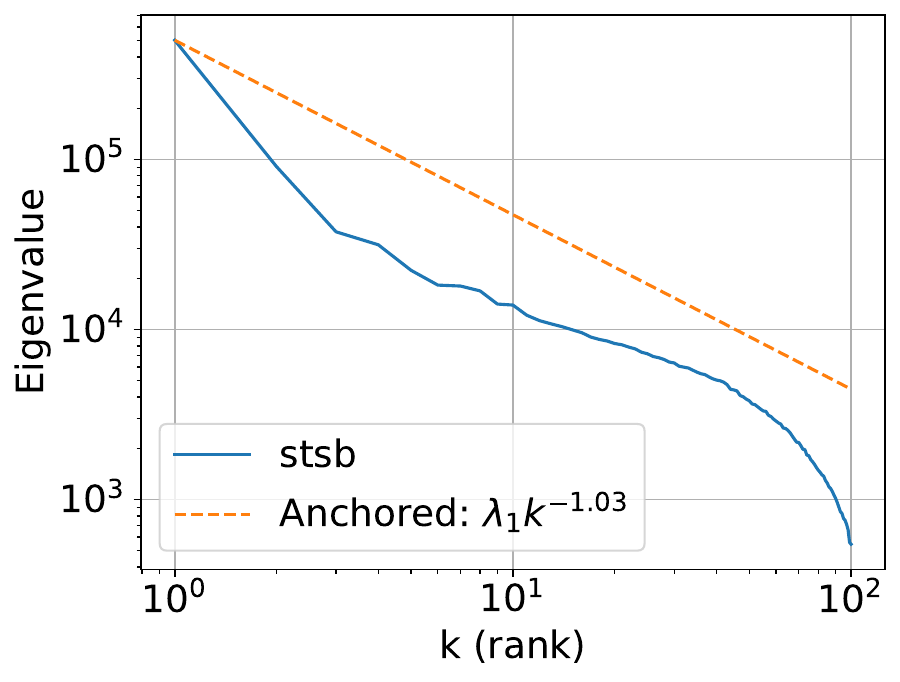}
        
     \end{subfigure}
     \begin{subfigure}[b]{0.3\textwidth}
         \centering
         \includegraphics[width=1\textwidth]{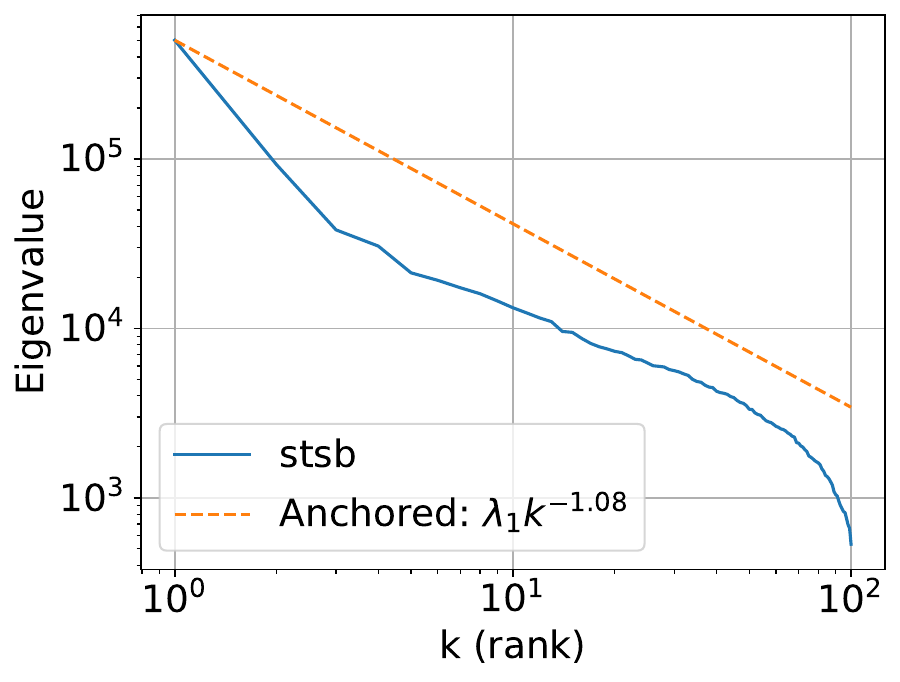}
     \end{subfigure}
\caption{STS-B eigenvalues decay curve.
}
\label{stsbeig}
\end{figure}

\begin{figure}[htp!]
     \centering
     \begin{subfigure}[b]{0.3\textwidth}
         \centering
         \includegraphics[width=1\textwidth]{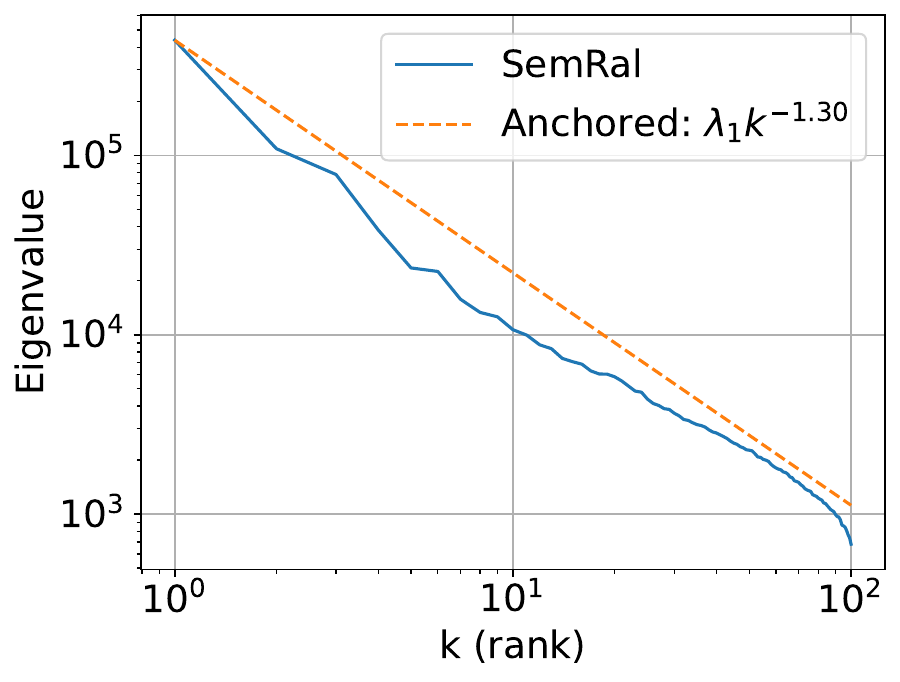}   
     \end{subfigure}
     \begin{subfigure}[b]{0.3\textwidth}
         \centering
         \includegraphics[width=1\textwidth]{figures/gptneo1.3B_sts_eigen_decay_1.pdf}
        
     \end{subfigure}
     \begin{subfigure}[b]{0.3\textwidth}
         \centering
         \includegraphics[width=1\textwidth]{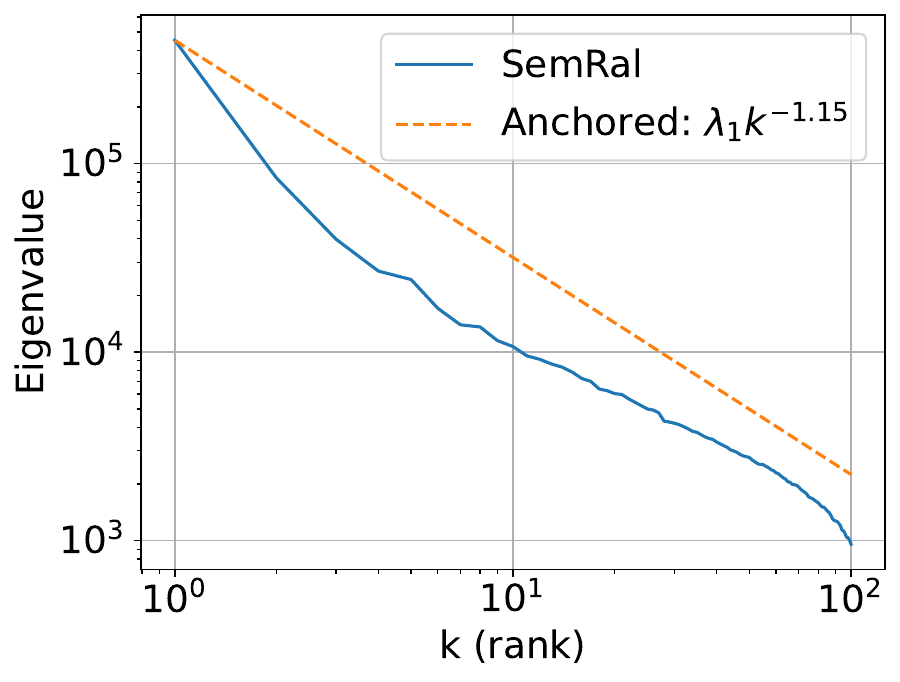}
     \end{subfigure}
\caption{SemRal eigenvalues decay curve.
}
\label{semralbeig}
\end{figure}


\begin{figure}[t!]
     \centering
     \begin{subfigure}[b]{0.45\textwidth}
         \centering
         \includegraphics[width=1\textwidth]{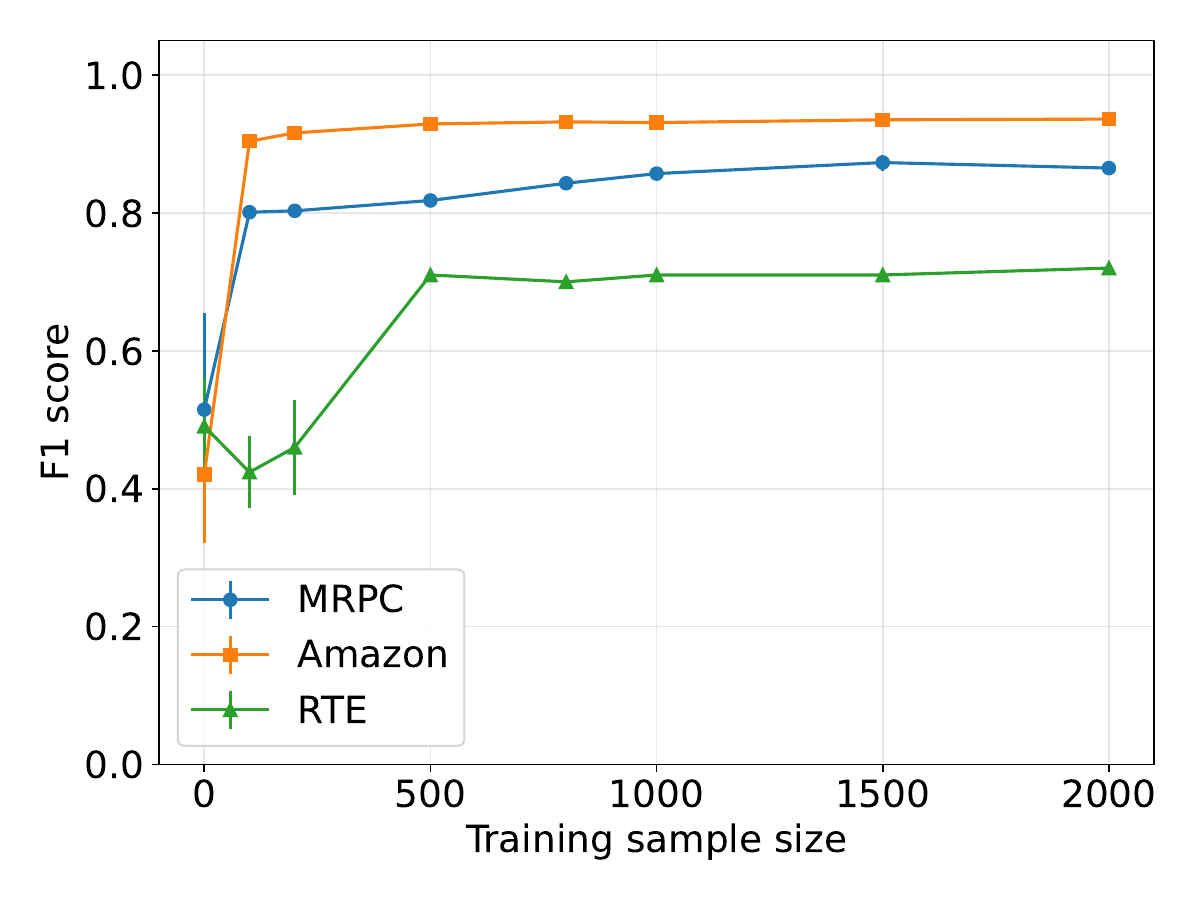}   
     \end{subfigure}
\caption{F1 score comparison of MRPC, RTE, and Amazon reviews datasets.}
\label{f1supp}
\end{figure}



\section{Theoretical Results}

\subsection{Optimal Stopping Rule Details}
The optimal stopping rule $\widehat{T}_{\text{op}}$ mentioned in Section
3.5 of the main text
is adopted directly from \cite{sunaistats}. Before discussing its details, we first define the 
 $L^2\left(P_N\right)$-norm as:
\begin{equation}
\left\|\hat{f}-f^*\right\|_N^2=\frac{1}{N} \sum_{i=1}^N\left(\hat{f}\left(\mathbf{X}_i\right)-f^*\left(\mathbf{X}_i\right)\right)^2
\end{equation}
The we can bound the  $L^2\left(P_N\right)$-norm
between $f_{\tau}$ (initialized with 
$f_{\text{PT}}$
) and $f_{\text{tgt}}$ as follows:
    \begin{equation}
        \left\|f_\tau-f_{\text{tgt}}\right\|_N^2 \leq 
      B_\tau^2 +  V_\tau + 
        D_\tau^2
        \label{erdecom}
    \end{equation}
where $B_\tau^2$ is the bias term, 
$V_\tau$
 is the variance term, and $D_\tau^2$
is the additional difference term.  The key difference between the early stopping framework in \cite{esnonpara} and that in \cite{sunaistats} is that the latter accounts for modern non-linear models with evolving kernels during training, which introduce the extra error component $D_\tau^2$.

According to \cite{esnonpara}, for any iteration $\tau \leq \widehat{T}_{\max }$,  with certain probability,  we can bound the sum of the bias and variance terms as
\begin{equation}
    B_\tau^2 + V_\tau \leq \frac{C}{\eta_{\tau}}. 
\end{equation}
Meanwhile, the difference term $D^2_{\tau}$ is non-decreasing in $\tau$. Suppose $D^2_{\tau}$ is upper bounded by a non-decreasing function $g(\tau)$.
Then the proposed optimal stopping time $\widehat{T}_{\text{op} }$ is defined as:
\begin{equation}
    \widehat{T}_{\text{op} }:=\arg \min \left\{\tau \leq \widehat{T}_{\max }  \left\lvert\, 
  \frac{C}{\eta_{\tau}} + 
        g(\tau)  \right.\right\}.
\end{equation}
For details of the proof of these intermediate
 results please refer to \cite{sunarxiv} and \cite{esnonpara}.

\subsection{Network specification Details}

For theoretical convenience, we set the embedding dimension for each layer $d^{l}$, and the scaling factor $d^{l, G}$
to be the same as the network width $n$. 
Same as \cite{sunaistats}, we consider $n$ to be sufficiently large, so we can bound the error 
introduced by the evolving kernel during training, i.e. , the difference term $D^2_{\tau}$. 
Note that we consider the number of heads $d^{l, H}$ to be finite, which is more realistic.

We use \textit{standard parameterization} in the main paper and give theoretical results accordingly.  But note that our results also hold for  \textit{NTK parameterization}, where  we implicitly treat each weight $W_{i j} \sim \mathcal{N}\left(0, \sigma^2 / n\right)$, i.i.d., as $W=\frac{\sigma}{\sqrt{n}} w $ where only $w$ is trainable.
For \textit{standard parameterization}, we should normalize the NTK  with the width or embedding dimension $n$:
\begin{equation}
   \frac{1}{n} \left\langle
     \nabla_{\theta} f\left(\theta(t), x\right),
     \nabla_{\theta} f\left(\theta(t), x^{\prime}\right)\right\rangle.
\end{equation}

And for the empirical kernel matrix $K$, we denote  $0<\lambda_{\min }:=\lambda_{\min }(K) \leq$ $\lambda_{\max }:=\lambda_{\max }(K)<\infty$. And let $\eta_{\text {critical }}=2\left(\lambda_{\min }+\lambda_{\max }\right)^{-1}$. Recall this empirical kernel matrix is defined on  the target data $\boldsymbol{X}^{\text{tgt}}$ and is the limit when 
the width and embedding dimension $n$ goes to $\infty$:
\begin{equation}
  \lim_{n \rightarrow \infty} \frac{1}{n} \left\langle
     \nabla_{\theta} f\left(\theta(t), x\right),
     \nabla_{\theta} f\left(\theta(t), x^{\prime}\right)\right\rangle.
\end{equation}
We use standard parameterization in our proof, but note that the proof is almost the same for NTK parameterization with minor changes.

\subsection{Extension to Transformer Decoder Based Network}

\subsubsection{Proof Overview}

As discussed in the main paper, we consider two fine-tuning settings:
\begin{enumerate}
    \item $f_{\text{FT}}$ and $f_{\text{PT}}$ share the same structure
    \item An additional linear head can be appended to hidden layers of  $f_{\text{PT}}$ 
\end{enumerate}

Consider setting 1 first, 
to extend results in \cite{sunarxiv} for fully-connected networks, we need to prove two main conditions: 
\begin{enumerate}[label=\roman*)]
\item The Jacobian is locally Lipschitz under random Gaussian initialization  
     \item The  NTK at  initialization, i.e., pretrained initialization in our setup, 
     converges in probability to some analytic kernel $\Theta_0$ as $n$ goes to $\infty$.
\end{enumerate}

Suppose we have already proven  conditions i). 
Then we use the same proof strategy as in \cite{sunarxiv,sunaistats} to derive the final  mean prediction error convergence bound.  The major steps of the proof strategy in  \cite{sunarxiv,sunaistats} is  as follows:
\begin{enumerate}
    \item With condition i) 
    and the fact that the NTK of the network we consider converges in probability to some static kernel 
         at random initialization  as $n$ goes to $\infty$ \citep{yang2020tensor2}\footnote{\cite{yang2020tensor2}  states a general result for any network structure with random initialization.}, we can follow the same strategy of Theorem
G.1 in \cite{linearnn} to get the following results under random initialization: 
\begin{theorem}[NTK convergence at random initialization]
    Under assumptions 4.1-4.7 in the main text, for $\gamma>0$ and   $\eta_0<\eta_{\text {critical }}$, there exist $R_0>0, M\in \mathbb{N}$ and $L>1$, such that for every $m \geq M$, the following holds with probability at least $\left(1-\gamma\right)$ starting from $f_{\text{PT}}$ or $f^k_{\text{PT}}$ with an additional linear head added
    when applying gradient descent with learning rate $\epsilon=\frac{\eta_0}{m}$ considering standard parameterization, 
\begin{equation}
    \left\{\begin{array}{l}
\left\|g\left(\theta_\tau\right)\right\|_2 \leq\left(1-\frac{\eta_0 N_1 \lambda_{\min }}{3}\right)^\tau R_1 \\
\sum_{j=1}^\tau\left\|\theta_j-\theta_{j-1}\right\|_2 \leq \frac{\eta_0 \tilde{K} R_1}{\sqrt{n}} \sum_{j=1}^\tau\left(1-\frac{\eta_0 N_2\lambda_{\min }}{3}\right)^{j-1} \leq \frac{3 \tilde{K} R_0}{N_1\lambda_{\min }} n^{-\frac{1}{2}}
\end{array}\right.
\label{pretainntk}
\end{equation}
and
\begin{equation}
    \sup_{\tau}\left\|K_{\text{random\_init}}-K_{\tau}\right\|_F \leq \frac{6 L^3 R_1}{{N_1}^2 \lambda_{\min }} n^{-\frac{1}{2}}
    \label{ptntk}
\end{equation}
where $K_0$ is the empirical kernel matrix induced by $f_\text{random\_init}$ and $K_\tau$ is the one induced by $f_{\tau}$ using pre-trained  data $\boldsymbol{X}^{\text{src}}$. 
\end{theorem}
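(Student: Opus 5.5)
We follow the template of Theorem G.1 in \cite{linearnn}, adapted to the attention architecture. The two inputs are condition i) and NTK convergence at random initialization. Condition i) is local Lipschitzness of the Jacobian on a ball $B(\theta_0, C n^{-1/2})$, with $\|J(\theta)\|_F \le \tilde K$ and $\|J(\theta)-J(\tilde\theta)\|_F \le \tilde K\|\theta-\tilde\theta\|_2$, after the standard-parameterization normalization by $\sqrt n$. NTK convergence, $K_{\text{random\_init}} \to \Theta_0$ in probability, we take from \cite{yang2020tensor2}.

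\textbf{Step 1: good initialization event.} Fix $\gamma>0$. Choose $M$ so that for $n\ge M$, with probability at least $1-\gamma$, three things hold simultaneously. First, $\|K_{\text{random\_init}}-\Theta_0\|_F \le \lambda_{\min}/3$, so $\lambda_{\min}(K_0)\ge 2\lambda_{\min}/3$. Second, the initial residual satisfies $\|g(\theta_0)\|_2 = \|f_0(\boldsymbol X^{\text{src}})-\boldsymbol Y^{\text{src}}\|_2 \le R_1$; this uses the boundedness assumption (Assumption \ref{xb}) and the fact that the outputs at Gaussian initialization converge to a Gaussian process, so they are tight. Third, the Lipschitz bounds of condition i) hold on the ball. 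Set $R_0$ through $R_1$ and the radius $C:=3\tilde K R_1/(N_1\lambda_{\min})$.

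\textbf{Step 2: joint induction.} We prove both displayed inequalities in \eqref{pretainntk} by induction on $\tau$. Assume they hold up to $\tau$. Then $\theta_\tau$ lies in the ball, because the geometric sum is bounded by $C n^{-1/2}$. The update is $\theta_{\tau+1}-\theta_\tau = -\epsilon J(\theta_\tau)^\top g(\theta_\tau)$ with $\epsilon=\eta_0/n$, so its norm is at most $\frac{\eta_0\tilde K}{\sqrt n}\|g(\theta_\tau)\|_2$, which gives the second inequality at step $\tau+1$.

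For the residual, apply the mean value theorem:
\begin{equation*}
g(\theta_{\tau+1}) = \bigl(I-\eta_0 \tilde\Theta_\tau\bigr) g(\theta_\tau), \qquad \tilde\Theta_\tau=\tfrac1n J(\tilde\theta_\tau)J(\theta_\tau)^\top ,
\end{equation*}
where $\tilde\theta_\tau$ lies on the segment between $\theta_\tau$ and $\theta_{\tau+1}$. By Lipschitzness, $\|\tilde\Theta_\tau - K_0\|_F \le 2\tilde K^2 C n^{-1/2}$, which is below $\lambda_{\min}/3$ for large $n$. Combined with $\eta_0<\eta_{\text{critical}}$, this gives $\|I-\eta_0\tilde\Theta_\tau\|_{op}\le 1-\eta_0\lambda_{\min}/3$; the factor $N_1$ in the statement comes from the $1/N$ normalization of $K$. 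This closes the induction.

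\textbf{Step 3: kernel stability.} We write
\begin{equation*}
K_\tau - K_0 = \tfrac1n\bigl[(J_\tau-J_0)J_\tau^\top + J_0(J_\tau-J_0)^\top\bigr].
\end{equation*}
Lipschitzness and the cumulative bound of Step 2 then give $\|K_\tau-K_0\|_F \le 2L^2\cdot L\, C n^{-1/2}$, uniformly in $\tau$, which is \eqref{ptntk}.

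For the variant that starts from $f^k_{\text{PT}}$ with a linear head, we rerun the same argument. The head weights are Gaussian, and the backbone weights lie in $B(\theta_0,Cn^{-1/2})$ by the above. The Jacobian bounds still hold because the head contributes a standard fully connected block.

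\textbf{Main obstacle.} The hard step is condition i) for the attention weights: bounding $J(W^{lh,Q})$ and $J(W^{lh,K})$ and proving they are Lipschitz. The softmax Jacobian $\Lambda_h$ is bounded, with entries at most 1, and Lipschitz in $S_h$. The difficulty is showing that $S_h=\frac1n Q_hK_h^\top$ stays $O(1)$ and is $O(n^{-1/2})$-Lipschitz on the ball. The $1/d^{l,G}$ scaling makes the Jacobian order work. We would establish this layer by layer by induction:
\begin{itemize}
\item $\|g^{l}\|=O(\sqrt n)$, and $\|W\|_{op}=O(1)$ on the ball via Gaussian operator-norm concentration;
\item backward quantities $\partial f^{L+1}/\partial f^l$ are $O(n^{-1/2})$ in norm;
\item each factor in \eqref{jatt} is then Lipschitz, and the product rule combines them, using that $T$ and $d^{l,H}$ are finite.
\end{itemize}
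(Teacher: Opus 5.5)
Your proposal is correct and follows the paper's own route: the paper also proves this theorem by running the argument of Theorem G.1 in \cite{linearnn} (joint induction on residual contraction and cumulative parameter movement, then kernel stability), taking the Jacobian local-Lipschitz lemma for the attention network as condition i) and NTK convergence at Gaussian initialization from \cite{yang2020tensor2}. One correction to your sketch of condition i): under standard parameterization $\delta^l=\partial f^{L+1}/\partial f^l$ is $O(1)$ in $\ell_2$ norm (already $\delta^{L}={W^{L+1}}^{\top}\odot\phi'(f^{L})$ has norm of order $\sigma$) and only $O(\sqrt{\log n}/\sqrt{n})$ entrywise; this entrywise bound, used on the Hadamard term $\delta^{l+1}{W^{l+1}}^{\top}\odot(\phi'(f^l)-\phi'(\tilde f^l))$ and together with $\|\gamma\|_\infty$ in LayerNorm, is what makes the paper's Lipschitz constant polylogarithmic in $n$, so your $\tilde K$ depends on $n$ and your bounds are really $\tilde{\mathcal{O}}(n^{-1/2})$.
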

The second inequality in equation (\ref{pretainntk}) shows that the parameter of $f_{\text{PT}}$ lies within $O(n^{-1/2})$ of some random initialization as illustrated in Figure \ref{proofsketchpic}.  
Note that the main difference between the above theorem and Theorem G.1 in \cite{linearnn} is that our result applies to network architectures containing transformer blocks, whereas Theorem G.1 in \cite{linearnn} is restricted to fully connected networks. 
 

\item  
We use the fact that the parameters of $f_{\text{PT}}$ are within $O(n^{-1/2})$
of some random initialization, which is derived in previous step , along with the local Lipschitz property of the Jacobian under random normal initialization, to show that the Jacobian is also locally Lipschitz under pre-trained initialization. This reasoning is illustrated in Figure~\ref{proofsketchpic} and is essentially the same as the proof of Lemma~8 in \cite{sunarxiv}.

\item  Because  $K_{\text{random\_init}}$  converges in probability to some static kernel \citep{yang2020tensor2} and equation (\ref{ptntk}),  condition ii) is satisfied. The key observation here is that for fine-tuning task we replace the data, but this is fine, as we assume both source dataset and target dataset are bounded, so even though the empirical NTK is evaluated on $\boldsymbol{X}^{\text{src}}$, 
the NTK convergence bound still holds for target dataset with some uniform constants changed. 
The detailed discussion regarding this can found in the proof of Theorem 3 in \cite{sunarxiv}. 
Then we can use the same proof technique in the proof of  Theorem 3 and 4 in \cite{sunarxiv} to derive the NTK convergence under pretrained 
initialization (Theorem \ref{suppntkconver}) and the network linearization (Lemma \ref{supplinlemma}) in our setup.

\item The final step is to derive a bound on the mean prediction error under the optimal stopping time $\widehat{T}_{\text{op}}$ by following the same line of reasoning as in Theorem~2 and Corollary~2 of \cite{sunarxiv}. The proof techniques developed in \cite{sunarxiv} extend the methods of \cite{esnonpara} to enable a precise analysis of early stopping behavior in neural networks.


\end{enumerate}



For setting 2, we need to  we need to do a bit more work to handle the mix of random and non-random  initialization, which is slightly different from the proof for setting 1. 
Additional proof details will be  presented in later sections.


\begin{remark}
\label{giaremark}
To apply the NTK results from \citep{yang2020tensor2}, we must first verify the following key assumption, which forms the theoretical foundation of those results:
    \begin{assumption}[Simple GIA\footnote{Gradient independence assumption: for any matrix $W$, we assume
$W^{\top}$
used in backprop is independent from $W$ used in the forward pass.} Check]
\label{GIAass}
    At initialization, the output layer of $f_{\text{PT}}$ is
sampled independently and with zero mean from all other parameters and is not used anywhere else
in the interior of the network.
\end{assumption}
Because our setting excludes architectures with shared weights (e.g., RNNs), the assumption holds automatically, allowing us to safely apply the results from 
\cite{yang2020tensor2}.

\end{remark}


\subsubsection{Local Lipschitzness for setting 1}
We use $\theta$ to denote the collections of all parameters for the network under consideration.
And we use $\mathcal{X}$ to denote the training dataset, which may be a general dataset and is not restricted to the pre-trained source dataset. 
And for $l \geq 1$, let
\begin{equation}
    \begin{aligned}
& \delta^l(\theta, x):=\nabla_{f^l(\theta, x)} f^{L+1}(\theta, x) \in \mathbb{R}^{n} \\
& \delta^l(\theta, \mathcal{X}):=\nabla_{f^l(\theta, x)} f^{L+1}(\theta, \mathcal{X}) \in \mathbb{R}^{|\mathcal{X}| \times (n  \times    | \mathcal{X}| )}
\end{aligned}
\end{equation}
  Recall that $f^l(\theta, x)$ is the pre-activation for layer $l$.



The statement of the Lemma is as follows:
\begin{lemma}[Standard parameterization]
There is a $\tilde{K} >0$ such that for every $C>0$, with high probability over random initialization the following holds:
\begin{equation}
   \left\{\begin{array}{ll}  \frac{1}{\sqrt{n}}
\|J(\theta)-J(\tilde{\theta})\|_F & \leq \tilde{H} \|\theta-\tilde{\theta}\|_2 \\

 \frac{1}{\sqrt{n}}
\|J(\theta)\|_F & \leq H
\end{array}, \quad \forall \theta, \tilde{\theta} \in B\left(\theta_0, C n^{-1/2}\right)\right.
\end{equation}
where $\tilde{H}$ means a quantity that is $O\left(\log^{\alpha}(n)\right)$.
\label{lipmain}
\end{lemma}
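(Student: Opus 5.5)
The plan is to follow the layer-by-layer induction used for fully connected networks in \cite{linearnn} (Lemma 1 there) and extended in \cite{sunarxiv}, and to treat each multi-head attention layer as an extra, more complicated ``layer type'' inside the same induction. The Jacobian with respect to any weight matrix in layer $l$ factors as a forward quantity (the post-activation $g^{l-1}$, or the head outputs $f^{lh}$, $Q_h$, $K_h$, $P_h$) times a backward quantity (the sensitivity $\delta^l$, or $\partial f^{L+1}/\partial f^{lh}$, $\partial f^{L+1}/\partial P_h$), possibly sandwiched by the softmax derivative $\Lambda_h$ and the scalars $1/d^{l,G}=1/n$, as displayed in \eqref{jatt}. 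It therefore suffices to prove, for all $\theta,\tilde\theta\in B(\theta_0,Cn^{-1/2})$ and uniformly over the bounded data $\mathcal{X}$ (Assumption \ref{xb}), two families of bounds.
\begin{enumerate}[label=(\alph*)]
\item \emph{Forward bounds:} $\frac{1}{\sqrt n}\|g^{l}(\theta)\|\le C$ and $\frac{1}{\sqrt n}\|g^l(\theta)-g^l(\tilde\theta)\|\le C\sqrt n\,\|\theta-\tilde\theta\|_2$.
\item \emph{Backward bounds:} $\|\delta^l(\theta)\|\le C$ and $\|\delta^l(\theta)-\delta^l(\tilde\theta)\|\le \tilde{C}\sqrt n\,\|\theta-\tilde\theta\|_2$.
\end{enumerate}
Here $\tilde C$ is polylogarithmic. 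Products of bounded and Lipschitz factors then give both claims of the lemma by the product rule $\|AB-\tilde A\tilde B\|\le\|A-\tilde A\|\|B\|+\|\tilde A\|\|B-\tilde B\|$, summed over the finitely many layers and the finitely many heads $d^{l,H}$.

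The base ingredient is operator-norm control of the weights at initialization. With high probability, every Gaussian matrix $W$ with entries of variance $\sigma^2/n$ satisfies $\|W\|_{op}\le 3\sigma$ (Gordon/Vershynin). Since $\|W-W_0\|_{op}\le\|W-W_0\|_F\le Cn^{-1/2}$ on the ball, the same bound, up to a constant, holds throughout $B(\theta_0,Cn^{-1/2})$. For fully connected layers, the forward induction uses that $\phi$ is Lipschitz with $|\phi(0)|<\infty$. The backward induction runs downward from the output, using boundedness and Lipschitzness of $\phi'$, exactly as in \cite{linearnn}.

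For an attention layer, the forward step proceeds as follows. Because of the $1/n$ scaling (rather than $1/\sqrt n$), the score matrix $S_h=\frac1n Q_hK_h^\top$ has entries bounded by $\frac1n\|g^{l-1}\|^2\|W^{Q}\|_{op}\|W^K\|_{op}=O(1)$ uniformly on the ball. Since $T$ is fixed, the softmax $P_h$ then has entries bounded away from $0$, and $\Lambda_h$ is bounded. Softmax is $1$-Lipschitz, so $\|S_h-\tilde S_h\|$ and hence $\|P_h-\tilde P_h\|$ are $O(\|\theta-\tilde\theta\|\cdot\sqrt n)/\sqrt n$-type quantities, inheriting Lipschitzness from $g^{l-1}$ and the weights. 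The value path $P_hg^{l-1}W^{V}$ and the projection $W^{O}$ are then handled like linear layers. Layer normalization is handled similarly: it is smooth and Lipschitz as long as the norm $\|f^l\|/\sqrt n$ stays bounded below, which holds w.h.p. at initialization and persists on the ball.

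The backward step through attention is where I expect the main obstacle. Writing out $\partial f^{L+1}/\partial g^{l-1}$ produces three terms, from the $V$, $Q$ and $K$ paths. In the $Q$ and $K$ terms the factor $\frac1n\Lambda_h^\top(\partial f^{L+1}/\partial P_h)K_h$ involves $\partial f^{L+1}/\partial P_h=\delta^{lh}(W^{V})^\top g^{l-1\top}$, which is of size $\sqrt n$ times $\|\delta\|$. The $1/n$ scaling must exactly cancel one factor of $\|K_h\|\sim\sqrt n$ to keep $\delta^{l-1}$ at $O(1)$. A naive operator-norm estimate loses a factor $\sqrt n$ here. My first attempt would be to bound these terms by $\frac1n\|g^{l-1}\|^2\|W^V\|\|W^K\|\|\delta^{lh}\|$, tracking that two norm factors of $\sqrt n$ pair with the $1/n$. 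If that fails, I would use Tensor-Program-style concentration: $\frac1n g^{l-1\top}(\cdot)$ contracted against nearly independent Gaussian rows, justified by the GIA check in Remark \ref{giaremark}. This yields $O(\log n)$ high-probability factors, and these are precisely what produce the $\tilde H=O(\log^\alpha n)$ loss in the Lipschitz constant. Once the attention backward bounds are in place, the remaining layers follow the standard induction, and assembling the Frobenius norms, each normalized by $1/\sqrt n$, completes the proof.
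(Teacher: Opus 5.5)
Your overall architecture matches the paper's proof: the forward/backward induction of \cite{linearnn}, Gaussian operator-norm bounds transferred to the ball, the $1/n$ score scaling that makes $S_h$, $P_h$, $\Lambda_h$ bounded and $O(1)$-Lipschitz, and the product rule over finitely many heads. The gap is quantitative, and the scaling is exactly what the lemma asserts. Both Lipschitz targets in (a) and (b) carry a spurious factor $\sqrt n$. Feeding them into $J(W^l)=g^{l-1\top}\delta^l$ gives
\[
\frac{1}{\sqrt n}\|J(\theta)-J(\tilde\theta)\|_F\le \frac{\|g^{l-1}(\theta)-g^{l-1}(\tilde\theta)\|\,\|\delta^l(\theta)\|+\|g^{l-1}(\tilde\theta)\|\,\|\delta^l(\theta)-\delta^l(\tilde\theta)\|}{\sqrt n}\le O(\tilde C\sqrt n)\,\|\theta-\tilde\theta\|_2 ,
\]
so you obtain only $\tilde H\sim\sqrt n$. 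Since $\|\Delta w\|_2=\sqrt n\,\|\Delta W\|_2/\sigma$ under $W=\sigma w/\sqrt n$, this is just the weaker NTK-parameterization version of the lemma rewritten in standard coordinates. It is too weak downstream: with $\|\theta_\tau-\theta_0\|_2=O(n^{-1/2})$ it bounds the kernel drift by $O(1)$ instead of $O(n^{-1/2})$. The forward half is easy to repair, because the operator-norm bounds give $\|g^l(\theta)-g^l(\tilde\theta)\|\le C\sqrt n\,\|\theta-\tilde\theta\|_2$ directly. Your attention estimate $\|S_h-\tilde S_h\|=O(\|\theta-\tilde\theta\|_2)$ in fact already relies on this corrected bound, not on (a).

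The backward half needs an idea your plan does not contain. In $\delta^l=(\delta^{l+1}W^{l+1\top})\odot\phi'(f^l)$ the increment contains the term $(\tilde\delta^{l+1}\tilde W^{l+1\top})\odot\big(\phi'(f^l(\theta))-\phi'(f^l(\tilde\theta))\big)$, and $\|f^l(\theta)-f^l(\tilde\theta)\|_2$ is of order $\sqrt n\,\|\theta-\tilde\theta\|_2$. If you bound $\|\delta^{l+1}W^{l+1\top}\|_\infty$ by its $\ell_2$ norm $O(1)$, which is what ``exactly as in \cite{linearnn}'' gives, you reproduce precisely your extra $\sqrt n$. The paper instead uses the delocalization bound $\|\delta^{l+1}W^{l+1\top}\|_\infty=O(\|\delta^{l+1}\|_2\sqrt{\log n/n})$, which follows from the conditional Gaussianity of the entries given $\delta^{l+1}$; this is where the GIA of Remark~\ref{giaremark} is used. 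The bound transfers from $\theta_0$ to the whole ball, because the radius $Cn^{-1/2}$ times the Lipschitz constant already established at layer $l+1$ is $\tilde O(n^{-1/2})$. This estimate is the actual source of $\tilde K_2=O(\sqrt{\log n})$ and hence of $\tilde H=O(\log^\alpha n)$; LayerNorm adds another $\sqrt{\log n}$ through $\|\gamma\|_\infty$. The same $\ell_\infty$ control is needed for $\delta^l\,\partial f^l/\partial g^{l-1}$ whenever a nonlinearity precedes an attention layer. Conversely, the attention step you flag as the main obstacle is not one, and your first attempt already closes it:
\[
\tfrac1n\|\Lambda_h\|\,\|\delta^{lh}\|\,\|V_h\|\,\|K_h\|\,\|W^{lh,Q}\|_{\mathrm{op}}=\tfrac1n\cdot O(1)\cdot O(1)\cdot O(\sqrt n)\cdot O(\sqrt n)\cdot O(1)=O(1).
\]
This direct estimate is how the paper bounds $\partial f^l/\partial g^{l-1}$ and its increment, with no Tensor-Program concentration. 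Attributing the $\log n$ losses to the attention step puts them in the wrong place.
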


\begin{proof}
The core idea of proving  this local lipschtiz is computing the gradient for each weight matrix and bound them directly. We adopt similar proof strategy as the proof of Lemma 1 in \cite{linearnn}.

We first consider adding only one attention layer without the residual connection (the ResNet structure). There are two main steps, the first is to bound the Jacobian of fully connected weights after introducing attention layer. The second is to bound to the Jacobian of attention weights 
themselves. 

The proof of this Lemma relies on the below theorem
to bound the operator norm of weight matrices:
\begin{theorem}[Corollary 5.35 \citep{vershynin2010} ] Let $A=A_{N, n}$ be an $N \times n$ random matrix whose entries are independent standard normal random variables. Then for every $t \geq 0$, with probability at least $1-2 \exp \left(-t^2 / 2\right)$ one has
\begin{equation}
    \sqrt{N}-\sqrt{n}-t \leq \lambda_{\min }(A) \leq \lambda_{\max }(A) \leq \sqrt{N}+\sqrt{n}+t.
\end{equation}
\end{theorem}
Let $\theta = \{  W^l, b^l, 
W^{l,O}, W^{l,Q}, W^{l,K}, W^{l,V}
\}$ and $\tilde{\theta} = \{  
\tilde{W}_l, \tilde{b}^l, 
\tilde{W}^{l,O}, \tilde{W}^{l,Q}, \tilde{W}^{l,K}, 
\tilde{W}^{l,V}
\}$ be any two points in
$B\left(\theta_0, C\right)$. 
By the above theorem  and the
triangle inequality, we have w.h.p. over random initialization for the first layer weight matrices the operator norm $\| \cdot \|_{\mathrm{op}}$ is bounded by  $3 \sigma_w \frac{\sqrt{n}}{\sqrt{n_0}}$, 
for $2 \leq l \leq L+1$,  $\| \cdot \|_{\mathrm{op}}$ is bounded by  $3 \sigma_w$. 

\paragraph{Fully connected weights.} To enable the same induction step for the fully connected layer weights, we first need to ensure that w.h.p. the following holds for 
the attention layer:
\begin{equation}
    n^{-\frac{1}{2}}\left\|f^l(\theta, \mathcal{X})\right\|_2, \quad\left\|\delta^l(\theta, \mathcal{X})\right\|_2  \leq K_1
\end{equation}
\begin{equation}
n^{-\frac{1}{2}}\left\|f^l(\theta, \mathcal{X})-f^l(\tilde{\theta}, \mathcal{X})\right\|_2 \leq K_1\|\tilde{\theta}-\theta\|_2 
\label{lpatt}
\end{equation}
\begin{equation}
    \left\|\delta^l(\theta, \mathcal{X})-\delta^l(\tilde{\theta}, \mathcal{X})\right\|_2  \leq \tilde{K}_2\|\tilde{\theta}-\theta\|_2
    \label{deltaatt}
\end{equation}
where $\theta$ and $\tilde{\theta}$ is any two points in
$B\left(\theta_0, C\right)$, $K_1$ is some universal constants depend on $|\mathcal{X}|$ and $L$, $\tilde{K}_2$ is some universal constant of order $O(\sqrt{\log(n}))$.  Note that in the following proofs, any constants related to $|\mathcal{X}|$ and $L$  are treated as $\mathcal{O}(1)$, we are only interested in and should be careful about the constants that grow w.r.t. $n$.

We start by bounding $\left\|f^l(\theta, x)\right\|_2$. 
Recall for each head, the output is calculated  as 
\begin{equation}
    f^{l h}(x)=\zeta\left(\frac{1}{d^{l, G}} g^{l-1}(x) W^{l h, Q}\left(g^{l-1}(x) W^{l h, K}\right)^{\top}\right) g^{l-1}(x) W^{l h, V}
\end{equation}
We have known that 
\begin{itemize}
    \item the softmax matrix is of finite dimension and each entry is bounded
    \item by the induction step $\left\|g^{l-1}(x)\right\|_2\leq n^{1/2} K_1$\footnote{Assume all preceding layers are fully connected layer and use the same induction step as the (S85) in \cite{linearnn}.}
    \item $\| W^{l h, V}  \|_{\mathrm{op}}\leq 3\sigma_w$  w.h.p.
\end{itemize}
then it is easy to see we can bound $\left\|f^l(\theta, x)\right\|_2$ as desired.

Next we bound $\delta^l(\theta, x)$. By back-prop recursion we know that 
\begin{equation}
    \delta^l(\theta, x)=\left(\delta^{l+1}(\theta, x) {W^{l+1}}^{\top}\right) \odot \phi^{\prime}\left(f^l(\theta, x)\right) 
\end{equation}
again, use the induction step results in (S85) in \cite{linearnn}, it is not hard to bound $\delta^l(\theta, x)$. 

We then proceed to show (\ref{lpatt}). By the definition of multi-head attention, we have 
\begin{equation}
    \begin{aligned}
        \|f^l(\theta, \mathcal{X})-f^l(\tilde{\theta}, \mathcal{X})\|_2 &= \|    \sum_h f^{lh}((\theta, x)W^{lh,O} - 
         \sum_h f^{lh}(\tilde{\theta}, \mathcal{X})\tilde{W}^{lh,O} 
        \|_2 \\
        &\leq \sum_h \|    
        f^{lh}((\theta, \mathcal{X})W^{lh,O} - f^{lh}(\tilde{\theta}, \mathcal{X})\tilde{W}^{lh,O} 
        \|_2
    \end{aligned}
\end{equation}
Since we consider the number of head to be finite, it suffices to prove the lipschtzness for one single head.  After using the add and subtract tricks
and all the proved bounds, the major thing left to prove is the lipschtzness of the pre-softmax quantity of the attention layer. 
Denote $A = \frac{1}{d^{l,G}} QK^T$, where
$Q = g^{l-1}(x)W^{lh,Q}, K = g^{l-1}(x)W^{lh,K}$. Let $\Delta Q := g^{l-1}W^{lh,Q} - \tilde{g}^{l-1}\tilde{W}^{lh,Q}$, we have
\begin{equation}
\begin{aligned}
        \| \Delta Q \|_2 &\leq \|g^{l-1}\|_2 \| W^{lh,Q} - \tilde{W}^{lh,Q} \|_F + \|W^{lh,Q}\|_{op} 
    \| g^{l-1} - \tilde{g}^{l-1}  \|_2  \\
    &\leq  O(n^{1/2}) \|\theta - \tilde{\theta} \|_2
\end{aligned}
\end{equation}
where we use the induction result on 
$\| g^{l-1} - \tilde{g}^{l-1}  \|$, which is simply the (S85) in \cite{linearnn}. Similarly, for $\Delta K$, we have the same bound. Combine, we have
\begin{equation}
    \begin{aligned}
        \| \Delta A \|_2 &\leq 
        \frac{1}{d^{l,G}}  \| \Delta Q \|_2 \cdot O(n^{1/2} )\cdot \|W^{lh,K}\|_{op} 
        + \frac{1}{d^{l,G}}  \| \Delta K \|_2 \cdot O(n^{1/2} )\cdot \|W^{lh,Q}\|_{op}
         \\
    &\leq  \frac{O(n)}{d^{l,G}} \|\theta - \tilde{\theta} \|_2 \\ & = O(1) \|\theta - \tilde{\theta} \|_2
\end{aligned}
\end{equation}
where we use the fact that $ \|g^{l-1}\|_2 \|$ is $O(n^{1/2})$
 and 
$d^{l,G} = n$. 

Finally, we prove (\ref{deltaatt}). The proof is the similar to the proof of (S86) in \cite{linearnn}. The only difference is that during the induction step and while bounding this term, we will encounter a term as below
\begin{equation}
    \delta^{l+1}(\tilde{\theta}) {W^{l+1}}^{\top}  \odot \left(  \phi^{\prime}\left(f^l(\theta, \mathcal{X}) \right)   -
    \phi^{\prime}\left(f^l(\tilde{\theta}, \mathcal{X}) \right
    ) \right) \quad   (\textcolor{red}{A})
\end{equation}
We bound it as follows 
\begin{equation}
    \begin{aligned}
   \| (\textcolor{red}{A}) \|_2 &\leq   
    \| \delta^{l+1}(\tilde{\theta})
     {W^{l+1}}^{\top}  \|_{\infty}  \| \phi^{\prime}\left(f^l(\theta, \mathcal{X}) \right)   -
    \phi^{\prime}\left(f^l(\tilde{\theta}, \mathcal{X}) \right
    ) \|_2
    \\  
    &\leq O\left(
    \|\delta^{l+1}\|_2
    \frac{\sqrt{\log(n)}}{n^{1/2}}\right)  O(n^{1/2}) \|\theta - \tilde{\theta}\|_2 \\
    &= \tilde{O}(1) \|\theta - \tilde{\theta}\|_2
    \end{aligned}
\end{equation}
where we use induction results regarding $\delta^l$ and $f^l$ and assumption about the activation function. 
For $\| \delta^{l+1}
     {W^{l+1}}^{\top}  \|_{\infty}$, observe that condition on $\delta^{l+1}$, each entry is a Gaussian random variable $\mathcal{N}(0, \frac{\| \delta^{l+1}\|_2^2}{n})$, so we can  
bound it using the tail probability of Gaussian  variable to get the $O\left(
\| \delta^{l+1}\|_2
\frac{\sqrt{\log(n)}}{n^{1/2}}\right)$ bound. Note that here we need to make use of the GIA assumption mentioned in Remark \ref{giaremark}. 

Besides these, we need to bound $\frac{\partial f^l}{\partial g^{l-1}}$ and prove its lipstchzness to enable the same induction proof on fully connected layer weights. This is crucial because for the layer before the attention layer
\begin{equation}
    \delta^{l-1}(\theta)  = \delta^l(\theta) \frac{\partial f^l}{\partial g^{l-1}}.
\end{equation}
We define the intermediate quantities to facilitate the 
calculation 
of this gradient:
\begin{equation}
    \begin{aligned}
& Q_h=g^{l-1} W^{lh,Q}, \quad K_h= g^{l-1}W^{lh,K}, \quad V=X W^{lh,V}, \\
& S_h=\frac{1}{d^{l,G}} Q_h K_h^{\top}, \quad P_h=\operatorname{softmax}_{\mathrm{rows}}(S_h), \quad A_h=P_h V_h,  \\
& \Lambda_h=\operatorname{block\_ diag}\left(\Lambda_{h,0}, \ldots, \Lambda_{h,T-1}\right) \in \mathbb{R}^{T^2 \times T^2}, \quad \Lambda_{h,i}=\operatorname{diag}\left(P_{h,i:}\right)-P_{h,i:} P_{h,i:}^{\top}.
\end{aligned}
\label{nta}
\end{equation}
Then we have for each head, we denote $\frac{\partial f^{lh}}{\partial g^{l-1}} \in \mathbb{R}^{\left(T n\right) \times\left(T n\right)}$ as $J^h$
\begin{equation}
J^h
=W_h^{V \top} \otimes P_h+\frac{1}{d^{l,G}}\left(I_{n} \otimes V_h^{\top}\right) \Lambda_h\left[\left(K_h \otimes I_T\right)\left(W^{lh, Q T} \otimes I_T\right)+\left(I_T \otimes Q_h\right)\left(W^{lh, K \top} \otimes I_T\right)\right] 
.
\end{equation}
Then the final gradient is 
\begin{equation}
    \frac{\partial f^l}{\partial g^{l-1}} = \left(W^{l,O \top} \otimes I_T\right)\left(\bigoplus_{h=1}^H J^h\right)
    \in \mathbb{R}^{\left(T n\right) \times\left(T n\right)}
\end{equation}
Use same argument as above, it is not hard to show that 
\begin{equation}
    \begin{aligned}
    \left\| \frac{\partial f^l}{\partial g^{l-1}} \right\|_F \leq O(1), 
  \left\| 
  \frac{\partial f^l}{\partial g^{l-1}} (\theta,x)  -  \frac{\partial f^l}{\partial g^{l-1}} (\tilde{\theta},x)
  \right\|_F   \leq O(1) \| \theta - \tilde{\theta}\|_2.
    \end{aligned}
\end{equation}

Note that we use pooling operation after the attention layer, since we assume they are simple deterministic reduction operation, and we only operate on finite dimension, this would not affect the above proved results.

\paragraph{Attention weights.} 
For each attention weights, the goal is to bound the Jacobian of itself and also prove the lipstchzness.

We start with $W^{l,O}$. Note that
\begin{equation}
    J(W^{l,O}) = \left[f^{l 1}(x), \ldots, f^{l d^{l, H}}(x)\right]^T \delta^l(\theta,x) 
\end{equation}
Then it is not hard to prove that
\begin{equation}
\begin{aligned}
 &\left\| \left[f^{l 1}(\mathcal{X}), \ldots, f^{l d^{l, H}}(\mathcal{X})\right]^T \delta^l(\theta,\mathcal{X})   \right\|_F \leq O(n^{1/2}) \\
& \left\| J(W^{l,O})(\theta,\mathcal{X}) - J(W^{l,O})(\tilde{\theta},\mathcal{X}) \right\|_F \leq O(n^{1/2} \sqrt{\log (n)}) \| \theta - \tilde{\theta}\|_2.
 \end{aligned}
\end{equation}

Next for $W^{lh,V}$, we have
\begin{equation}
    J(W^{lh,V}) = {g^{l-1}}^{\top} P_h^T \frac{\partial f^{L+1}}{\partial f^{lh}} W^{lh,O T}
\end{equation}
where $\frac{\partial f^{L+1}}{\partial f^{lh}} = \delta^l(\theta, x) {W^{lh,O}}^{\top}$.

Similarly, we can show that
\begin{equation}
\begin{aligned}
 &\left\| J(W^{lh,V})   \right\|_F \leq O(n^{1/2}) \\
& \left\| J(W^{lh,V})(\theta,x) - J(W^{lh,V})(\tilde{\theta},x) \right\|_F \leq O(n^{1/2} \sqrt{\log (n)}) \| \theta - \tilde{\theta}\|_2.
 \end{aligned}
\end{equation}

Finally, for query and key matrices, we have
\begin{equation}
\begin{aligned}
& J(W^{lh,Q}) = {g^{l-1}}^{\top} \cdot\left(\frac{1}{ d^{l,G}} \cdot
{\frac{\partial P_h}{\partial S_h}}^{\top} \cdot \frac{\partial f^{L+1}}{\partial P_h}
\cdot K^h\right) \\
& J(W^{lh,K}) ={g^{l-1}}^{\top} \cdot\left(\frac{1}{d^{l,G}} \cdot {\frac{\partial f^{L+1}}{\partial P_h}}^{\top} \cdot {\frac{\partial P_h}{\partial S_h}}^{\top}
\cdot Q^h\right)
\end{aligned}
\end{equation}
where $\frac{\partial f^{L+1}}{\partial P_h} = \frac{\partial f^{L+1}}{\partial f^{lh}} {V^h}^{\top}$
and ${\frac{\partial P_h}{\partial S_h}}^{\top}$ is simply $\Lambda_h$ defined in (\ref{nta}).

We can show that the following holds 
for these two types of weights 
by similar reasoning as above:
\begin{equation}
\begin{aligned}
 &\left\| J(W^{lh,Q})   \right\|_F \leq O(n^{1/2}) \\
& \left\| J(W^{lh,Q})(\theta,\mathcal{X}) - J(W^{lh,Q})(\tilde{\theta},\mathcal{X}) \right\|_F \leq O(n^{1/2} \sqrt{\log (n)}) \| \theta - \tilde{\theta}\|_2 \\
 &\left\| J(W^{lh,K})   \right\|_F \leq O(n^{1/2}) \\
& \left\| J(W^{lh,K})(\theta,\mathcal{X}) - J(W^{lh,K})(\tilde{\theta},\mathcal{X}) \right\|_F \leq O(n^{1/2} \sqrt{\log (n)}) \| \theta - \tilde{\theta}\|_2.
 \end{aligned}
\end{equation}


The results of Lemma \ref{lipmain} follows from the above estimates, we just sum all together to get the desired bound
\begin{equation}
\begin{aligned}
\frac{1}{\sqrt{n}}
\|J(\theta)\|_F^2&=\underbrace{\sum_l\left\|\frac{1}{\sqrt{n}}J\left(W^l\right)\right\|_F^2+\left\|J\left(b^l\right)\right\|_F^2}_{\text{fully connected layers}} + \underbrace{
\sum_l \left\|\frac{1}{\sqrt{n}}J\left(W^{l,O}\right)\right\|_F^2 + \sum_{\substack{l, h,  \bullet = \{K, Q, V\}}}
 \left\|\frac{1}{\sqrt{n}}J\left(W^{lh, \bullet}\right)\right\|_F^2}_{\text{attention layer weights}}
\\
& = \frac{1}{n}
\sum_l \sum_{x \in \mathcal{X}}\left\|g^{l-1}(\theta, x) \delta^l(\theta, x)^T\right\|_F^2+\left\|\delta^l(\theta, x)^T\right\|_F^2  + O(1)  \\
& \leq\frac{1}{n}  \sum_l \sum_{x \in \mathcal{X}}\left(1+\left\|g^{l-1}(\theta, x)\right\|_F^2\right)\left\|\delta^l(\theta, x)^T\right\|_F^2  + O(1)  \\
&  \leq \sum_l \frac{1}{n}\left(1+K_1^2 n\right) \sum_x\left\|\delta^l(\theta, x)^T\right\|_F^2 + O(1) \\
& \leq \sum_l \frac{1}{n} K_1^2\left(1+K_1^2 n\right) + O(1)  \\
 &\leq  2(L+1) K_1^4   + O(1)  \\
 & = O(1)
\end{aligned}
\end{equation}
Similarly, for the lipschitzness
\begin{equation}
    \begin{aligned}
 \|J(\theta)-J(\tilde{\theta})\|_F^2 
&=  \underbrace{\sum_l \sum_{x \in \mathcal{X}}\left\|\frac{1}{\sqrt{n}}g^{l-1}(\theta, x) \delta^l(\theta, x)^T- \frac{1}{\sqrt{n}} g^{l-1}(\tilde{\theta}, x) \delta^l(\tilde{\theta}, x)^T\right\|_F^2+\left\|\delta^l(\theta, x)^T-\delta^l(\tilde{\theta}, x)^T\right\|_F^2}_{\text{fully connected layers}} \\
& +  \underbrace{
\sum_l \frac{1}{n} \left\|J\left(W^{l,O},\theta\right)
-J\left(W^{l,O},\tilde{\theta}\right)
\right\|_F^2 + \sum_{\substack{l, h,  \bullet = \{K, Q, V\}}} \frac{1}{n}
 \left\|J\left(W^{lh, \bullet},\theta\right)
 - J\left(W^{lh, \bullet},\tilde{\theta}\right)
 \right\|_F^2
}_{\text{attention layer weights}}
\\
&\leq  \left(\sum_l 4K_1^2
\tilde{K}_1^2 +\tilde{K}_1^2\right)\|\theta-\tilde{\theta}\|_2 
+ O(\log(n)) \|\theta-\tilde{\theta}\|_2 
\\
&\leq  3(L+1) K_1^2 \tilde{K}_1^2 \|\theta-\tilde{\theta}\|_2  + O(\log(n)) \|\theta-\tilde{\theta}\|_2  \\
&= O(\log(n)) \|\theta-\tilde{\theta}\|_2.
\end{aligned}
\end{equation}

We then extend the results to transformer block. By definition, the transformer output is
\begin{equation}
    \begin{aligned}
        z^{l} &=  g^{l-1} + \text{multi-head attention layer}(g^{l-1}) \\
        f^l(x) &=  z^{l}  + \text{fully connected layer}(z^{l}).
    \end{aligned}
\end{equation}
Since we already prove the results for multi-head attention layer, by the above formula, it is trivial to extend all these results to transformer block using the same proof strategy.

Also, we can easily extend the above results to network with multiple transformer blocks by induction. The only left component in the transformer decoder is the LayerNorm. We consider the specific steps to deal with is as below.

\paragraph{LayerNorm.}  For theoretical convenience, we give the proof for the case where the sequence length is 1, since $T$  is finite, the result can be easily extended to standard situation. Given an LayerNorm input vector $x \in \mathbb{R}^n$ (for one token), LayerNorm first computes mean and variance as follows:
\begin{equation}
     \mu = \frac{1}{n} \sum_{i=1}^n x_i, \  \  \sigma^2 = \frac{1}{n } \sum_{i=1}^n (x_i  - \mu)^2
\end{equation}
and then normalize and add learnable parameters:
\begin{equation}
     \hat{x}_i=\frac{x_i-\mu}{\sqrt{\sigma^2+\epsilon}}, \  \ y_i=\gamma_i \hat{x}_i+\beta_i.
\end{equation}
The final output is given by 
\begin{equation}
    y=\gamma \odot \frac{x-\mu}{\sqrt{\sigma^2+\epsilon}}+\beta
\end{equation}
where $\gamma, \beta \in \mathbb{R}^n$ are learnable parameters, and 
$\epsilon$
is a small constant for numerical stability. Note $x$ is the output of some preceding layers. 
After introducing LayerNorm to the network, the Jacobian of some of the weights will be affected, so
in order to maintain
 the same Lipschitz norm, we need to prove the following two main conditions hold w.h.p.:
\begin{enumerate}
    \item 
    \begin{equation}
         \left\| 
    \frac{\partial f^{L+1}}{ \partial x} 
    \right\|_2\leq \mathcal{O}(\sqrt{\log n})
    \label{lncon1}
    \end{equation}

    \item  
    \begin{equation}
        \left\|  \frac{\partial f^{L+1}}{ \partial x}(
       \theta 
        )  - \frac{\partial f^{L+1}}{ \partial x}(
       \tilde{\theta }
        ) 
        \right\|_2   \leq \mathcal{O}(\sqrt{\log n}) \| \theta - \tilde{\theta }   \|_2
        \label{lncon2}
    \end{equation}
\end{enumerate}
 We start by proving equation (\ref{lncon1}). Let $\delta_i$  denote upstream gradient $\frac{\partial f^{L+1}}{\partial y_i}$, and $\tilde{u}_i$ denote $u_i \gamma_i$, then we can write:
 \begin{equation}
     \frac{\partial f^{L+1}}{\partial x} = \frac{1}{\sqrt{\sigma^2 + \epsilon}} 
\left[  \tilde{\delta} - \text{mean}(\tilde{\delta}) \cdot \mathbf{1}
 -  \text{mean}( \tilde{\delta} \odot \tilde{x} )\cdot \tilde{x}
\right]
 \end{equation}
where $\tilde{x}, \delta, \tilde{\delta} \in \mathbb{R}^n $
,$\mathbf{1}$ is an all-one vector of length $n$, and 
$\text{mean}(v) = \frac{1}{n} \sum_j v_j$.
Observe that 
\begin{equation}
    \frac{1}{\sqrt{\sigma^2 + \epsilon}} \leq \frac{1}{\sqrt{\epsilon}} 
    \label{first_69}
\end{equation}
and 
\begin{equation}
    \| \tilde{\delta}\|_2 =  \| \gamma \|_{\infty} \| \delta  \|_2    \leq \mathcal{O}(\sqrt{\log n}) \mathcal{O}(1)
    \label{sec_69}
\end{equation}
where we use the fact that $\gamma$ are initialized with random Gaussian and the induction result from upstream gradient. 

Next for $\text{mean}(\tilde{\delta}) \cdot \mathbf{1}$, we apply Cachy-Schwarz inequality:
\begin{equation}
    | \text{mean}(\tilde{\delta}) =
| \frac{1}{n} \mathbf{1}^{\top} \tilde{\delta} | \leq 
\frac{1}{n} \|\mathbf{1} \|_2 \|\tilde{\delta} \|_2  = \frac{1}{\sqrt{n}} \|\tilde{\delta} \|_2
\end{equation}
This leads to 
\begin{equation}
 \| \text{mean}(\tilde{\delta}) \cdot \mathbf{1} \|_2 \leq \| \tilde{\delta} \|_2
 \leq  \| \gamma \|_{\infty} \| \delta  \|_2    \leq \mathcal{O}(\sqrt{\log n}) \mathcal{O}(1) .
 \label{third_69}
\end{equation}

For the last term, let $T:= \text{mean}( \tilde{\delta} \odot x )\cdot \tilde{x}$, then we have
\begin{equation}
    \| T \|_2 = \| \tilde{x} \|_2 |          
    \text{mean}( \tilde{\delta} \odot \tilde{x} )
    |
\end{equation}
and 
\begin{equation}
    |   \text{mean}( \tilde{\delta} \odot \tilde{x} ) | =   | \frac{1}{d}     \langle \tilde{\delta} ,  \tilde{x} \rangle|       \leq \frac{1}{d}  \|  \tilde{\delta} \|_2 \|\tilde{x} \|_2.
\end{equation}
Combine, we have
\begin{equation}
\begin{aligned}
     \|T\|_2 &\leq \|\tilde{x}  \|_2 \cdot \frac{1}{n} \|   \tilde{\delta}  \|_2 \|\tilde{x}  \|_2 \\
     &= \|   \tilde{\delta}  \|_2
     \cdot 
     \frac{1}{n} \|\tilde{x}  \|_2^2  \\
     &= \|   \tilde{\delta}  \|_2 \cdot 
     \frac{1}{n} \sum_j \tilde{x}_j^2  \\
     &=\|   \tilde{\delta}  \|_2 \cdot  \frac{1}{n} \sum_j  \frac{
     (x_j-\mu)^2
     }{\sigma^2+\epsilon}\\
     &= \|   \tilde{\delta}  \|_2 \cdot 
    \frac{1}{\sigma^2+\epsilon} \cdot \frac{1}{d} \sum_j  (x_j - \mu)^2 
    \\
    &= \|   \tilde{\delta}  \|_2 \cdot \frac{1}{\sigma^2+\epsilon} \cdot \sigma^2 \\  &\leq 
    \|   \tilde{\delta}  \|_2 \\
    &  \leq  \| \gamma \|_{\infty} \| \delta  \|_2    \leq \mathcal{O}(\sqrt{\log n}) \mathcal{O}(1).
    \label{fourth_69}
\end{aligned}
\end{equation}
Combine results (\ref{first_69})
 (\ref{sec_69})
  (\ref{third_69})  (\ref{fourth_69}), we can get   (\ref{lncon1}).

Now we turn to prove (\ref{lncon2}). We use the same add and subtract trick which is commonly used in this proof. And there is a term related to $\sigma$, and we need to show that 
\begin{equation}
    \left| \frac{1}{\sqrt{\sigma^2 + \epsilon}} -    
    \frac{1}{\sqrt{\tilde{\sigma}^2 + \epsilon}} 
    \right| \leq O(1) \| \theta - \tilde{\theta} \|_2 .
\end{equation}
Note that there are some other terms we need to bound, but we basically use the same proof trick so we omit here. 
We first use mean value theorem to get
\begin{equation}
    \left| \frac{1}{\sqrt{\sigma^2 + \epsilon}} -    
    \frac{1}{\sqrt{\tilde{\sigma}^2 + \epsilon}} 
    \right| \leq \frac{1}{2\epsilon^{\frac{3}{2}}} |\sigma^2 - \tilde{\sigma}^2  |.
\end{equation}
Next we write $\sigma$ with projection matrix $P := \mathbf{1} - \frac{1}{n} \mathbf{1} \mathbf{1}^{\top}$, then
we have
\begin{equation}
    \sigma^2(x) = \frac{1}{n} \sum_{j} (x_j - \mu(x))^2 = \frac{1}{n} \|Px\|_2^2. 
\end{equation}
Then we have
\begin{equation}
    \begin{aligned}
  |\sigma^2 - \tilde{\sigma}^2  | &= \frac{1}{n} \left|  \|Px\|_2^2 - \|P \tilde{x} \|_2^2     \right|      \\
  &= \frac{1}{n} \left| 
  {(Px + P\tilde{x})}^{\top} (  Px - P\tilde{x}    )
  \right| \\
  & \leq  \frac{1}{n}\| P(x-\tilde{x})\|_2(  \|
  Px
  \|_2 + \| P\tilde{x}  \|_2 ) \\
  & \leq  \frac{1}{n} \| x -  \tilde{x}       \|_2
  (     \|
  Px
  \|_2 + \| P\tilde{x}  \|_2        )
    \end{aligned}
\end{equation}
where we use the fact that $P$ is orthogonal projection and $\|Pz\|_2 \leq \|z\|_2$.
Next since $\|Px\|_2 = \sqrt{n\sigma^2(x)}$, we can write
\begin{equation}
     |\sigma^2 - \tilde{\sigma}^2  | \leq \frac{1}{\sqrt{n}} 
\| x - \tilde{x}\|_2 \left(
\sqrt{\sigma^2(x)}
  +   \sqrt{\sigma^2(\tilde{x})}       \right).
\end{equation}
Use the fact that $\sigma^2(x) \leq \frac{1}{n} \|x\|_2^2$ and $\|x\|_2 \leq O(\sqrt{n})$, we have
\begin{equation}
    \sqrt{\sigma^2(x)}
  +   \sqrt{\sigma^2(\tilde{x})} \leq \frac{2O(\sqrt{n})}{\sqrt{n}} = O(1)
\end{equation}
finally, we get
\begin{equation}
     |\sigma^2 - \tilde{\sigma}^2  | \leq  \frac{O(\sqrt{n})}
     {\sqrt{n}} \| \theta - \tilde{\theta} \|_2 = O(1) \| \theta - \tilde{\theta} \|_2.
\end{equation}
where we use the induction result w.r.t. $x$.

The gradient w.r.t. $\gamma$ and $\beta$ is given by 
\begin{equation}
\begin{aligned}
    \frac{\partial f^{L+1}}{\partial \gamma} &= 
    \frac{\partial f^{L+1}}{ \partial y}    
\odot 
    \tilde{x}
    \\
    \frac{\partial f^{L+1}}{\partial \beta} &=
     \frac{\partial f^{L+1}}{ \partial y}
\end{aligned}
\end{equation}
we can prove the desired bound for these gradients and their Lipschitzness with similar reasoning steps as above.

Note that here we consider full batch gradient descent, and we do not write out the summation over training samples $\mathcal{X}$  
explicitly 
as this will only add a constant related to $|\mathcal{X}|$ in the final bound, wich we simply view them as $O(1)$. 
And this concludes our proof.




\end{proof}

The local Lipschitzness for the NTK parameterization is as follows: 
\begin{lemma}[NTK parameterization]
There is a $\tilde{L} >0$ such that for every $C>0$, with high probability over random initialization $\theta_0$ the following holds:
\begin{equation}
   \left\{\begin{array}{ll}  
\|J(\theta)-J(\tilde{\theta})\|_F & \leq \tilde{H} \|\theta-\tilde{\theta}\|_2 \\

\|J(\theta)\|_F & \leq H
\end{array}, \quad \forall \theta, \tilde{\theta} \in B\left(\theta_0, C\right)\right.
\end{equation}
\label{lipmain}
\end{lemma}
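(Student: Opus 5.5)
The plan is to derive the NTK-parameterization statement from the standard-parameterization Lemma just proved, through the change of variables described in the network specification section. Under NTK parameterization every weight is written as $W=\frac{\sigma}{\sqrt{n}}w$, and only $w$ (with i.i.d. $\mathcal{N}(0,1)$ entries) is trained. For the attention weights $W^{lh,\bullet}$ with $\bullet\in\{Q,K,V\}$ and for $W^{l,O}$, the scaling uses the corresponding fan-in ($d^{l-1}$ or $d^{l,H}d^{l}$). Since $d^{l,H}$ is finite and all widths equal $n$, these scalings agree with $1/\sqrt{n}$ up to constants. For LayerNorm's $\gamma,\beta$ and for biases the scaling is $O(1)$.

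I would write $\theta^{\mathrm{std}}=D\theta^{\mathrm{ntk}}$, where $D$ is a block-diagonal matrix with entries of order $n^{-1/2}$ on the weight blocks and $O(1)$ on the bias and LayerNorm blocks. The network functions agree, so the chain rule gives $J^{\mathrm{ntk}}(\theta)=J^{\mathrm{std}}(D\theta)D$. The ball $B(\theta_0,C)$ in NTK coordinates maps into $B(D\theta_0,C\|D\|_{\mathrm{op}})$ in standard coordinates. On the weight directions this is a ball of radius $Cn^{-1/2}$, which is exactly the domain in Lemma~\ref{lipmain} (standard parameterization).

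The key step is to check the scaling blockwise rather than through the crude bound $\|D\|_{\mathrm{op}}=O(1)$. For a weight block, $\|J^{\mathrm{std}}_W D_W\|_F=n^{-1/2}\|J^{\mathrm{std}}_W\|_F$. This matches the normalization $\frac{1}{\sqrt n}\|J\|_F$ in the standard Lemma, whose proof gives $\|J(W^{lh,\bullet})\|_F\le O(n^{1/2})$ for $\bullet\in\{Q,K,V,O\}$ and the analogous bound for fully connected weights. Bias and LayerNorm gradients are $\delta^l$-type quantities. These were bounded without the $n^{-1/2}$ factor, by $O(1)$ or $O(\sqrt{\log n})$ via (\ref{lncon1}), so they need no rescaling.

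For the Lipschitz part I would compute
\[
\|J^{\mathrm{ntk}}(\theta)-J^{\mathrm{ntk}}(\tilde\theta)\|_F\le \|J^{\mathrm{std}}(D\theta)-J^{\mathrm{std}}(D\tilde\theta)\|_F\,\|D\|_{\mathrm{op}}.
\]
I would then apply the blockwise bounds from the standard proof, for example $\|J(W^{lh,\bullet})(\theta)-J(W^{lh,\bullet})(\tilde\theta)\|_F\le O(n^{1/2}\sqrt{\log n})\|\theta-\tilde\theta\|_2$, together with $\|D\theta-D\tilde\theta\|_2\le O(n^{-1/2})\|\theta-\tilde\theta\|_2$ on weight blocks. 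Multiplied by the outer factor $n^{-1/2}$, each weight block gives $\tilde O(n^{-1/2})\|\theta-\tilde\theta\|_2$. This leaves the $\tilde H=O(\log^\alpha n)$ constant, up to relabeling as $\tilde L$.

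The main obstacle is the mixed terms, where a perturbation in an $O(1)$-scaled block (LayerNorm $\gamma,\beta$, biases) moves the Jacobian of an $n^{-1/2}$-scaled block, or the reverse. Here the exponents do not cancel automatically. To handle them I would go back to the induction in the standard proof and redo the estimates (\ref{lpatt}), (\ref{deltaatt}) and (\ref{lncon2}) directly in NTK coordinates, where activations are $O(\sqrt n)$ and all trainable entries are $O(1)$. I expect every layer's forward and backward Lipschitz constants to be $\tilde O(1)$ once the $1/\sqrt n$ sits explicitly in the forward map. For attention, the $1/d^{l,G}=1/n$ scaling of $S_h$ combines with the two $n^{-1/2}$ factors of $W^{Q},W^{K}$, which keeps $\Delta S_h=O(1)\|\theta-\tilde\theta\|_2$ as before.
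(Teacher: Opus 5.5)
\textbf{The gap is in the change-of-variables reduction.} Your closing fallback, redoing the forward/backward induction of the standard-parameterization proof directly in NTK coordinates, is the route the paper intends. The paper gives no separate proof, only the remark that the NTK case is ``almost the same with minor changes.'' The reduction you put in front of it, however, cannot be used on the domain the statement requires, and the failure is not confined to mixed terms.

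The standard-parameterization lemma is a statement about points of $B(D\theta_0,Cn^{-1/2})$ as a whole. You cannot invoke it ``on the weight directions'' while other coordinates sit elsewhere, because each block $J^{\mathrm{std}}_W$ depends on all parameters. Yet $D\,B(\theta_0,C)$ contains points displaced by $\Theta(C)$ along every coordinate with an $O(1)$ scale factor. These are the biases, the LayerNorm $\gamma,\beta$, and also the input-layer weights: their fan-in $d^0$ (the $n_0$ in the paper's bound $3\sigma_w\sqrt{n}/\sqrt{n_0}$) does not grow with $n$, so $D$ is not of order $n^{-1/2}$ on every weight block.

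At such points the standard lemma gives no information. So three of your steps are unjustified there: the blockwise check of $\|J(\theta)\|_F\le H$, the weight-block Lipschitz estimate built from the $O(n^{1/2}\sqrt{\log n})$ bounds, and the appeal to the LayerNorm gradient bound for the bias/LayerNorm blocks. The reduction is clean only when every trainable block is scaled by $\Theta(n^{-1/2})$. In that case $D\,B(\theta_0,C)=B(D\theta_0,\sigma Cn^{-1/2})$, and you even gain a factor $n^{-1/2}$ in the Lipschitz constant, as your computation shows. Here it only covers the sub-ball on which the $O(1)$-scaled coordinates stay within $O(n^{-1/2})$ of $\theta_0$, which is not the statement.

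So the direct induction has to carry the entire proof on $B(\theta_0,C)$, and it should be written out rather than stated as an expectation. Two remarks on it.

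First, in NTK coordinates forward differences are $\tilde O(1)\|\theta-\tilde\theta\|_2$ rather than $O(\sqrt n)\|\theta-\tilde\theta\|_2$. Hence the Hadamard term $(\delta^{l+1}W^{l+1\top})\odot(\phi'(f^l(\theta))-\phi'(f^l(\tilde\theta)))$ of the standard proof can be bounded crudely by $\|\delta^{l+1}W^{l+1\top}\|_2\,\|\phi'(f^l(\theta))-\phi'(f^l(\tilde\theta))\|_\infty$. The Gaussian $\ell_\infty$ (GIA) estimate is not needed.

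Second, the genuinely new work sits at the $O(1)$-scaled parameters, which may now move by order one. For example, in the LayerNorm, $\|(\gamma-\tilde\gamma)\odot\hat x\|_2$ with $\|\gamma-\tilde\gamma\|_2\le C$ is not controlled by $\|\hat x\|_2=O(\sqrt n)$ alone; it needs an entrywise bound on $\hat x$. Similarly, order-one moves of the first-layer weights and biases have to be propagated through $S_h$, $\Lambda_h$ and $\partial f^l/\partial g^{l-1}$ in the attention layers.
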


\subsection{Proof for Setting 2}
We formally state the fine-tuning setting 2 as follows:
take the $k$-th hidden layer
of the pretrained network $f_{\text{PT}}$ as backbone denoted by $f_{\text{PT}}^k$, 
we append a linear head after it. We denote the parameter of this linear head as $\{W, b\}$, with $W_{i j} \sim \mathcal{N}\left(0, \sigma^2 / n\right), \  b \sim \mathcal{N}\left(0, \sigma^2\right)$.

\begin{lemma}
    There is a $L >0$ such that for every $C>0$, with high probability the following holds over the special mixed  initialization described above
    :
\begin{equation}
   \left\{\begin{array}{ll}  \frac{1}{\sqrt{n}}
\|J(\theta)-J(\tilde{\theta})\|_F & \leq \tilde{L} \|\theta-\tilde{\theta}\|_2 \\

 \frac{1}{\sqrt{n}}
\|J(\theta)\|_F & \leq L
\end{array}, \quad \forall \theta, \tilde{\theta} \in B\left(\theta_0, C n^{-1/2}\right)\right.
\end{equation}
where $\theta_0$ denote the initial parameter values of linear head, which is sampled from random Gaussian, and the backbone, which inherit
 the pretrained parameters.
\end{lemma}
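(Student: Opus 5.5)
The plan is to reduce the mixed-initialization statement to the random-initialization case (Lemma \ref{lipmain}, standard parameterization) by the same ``neighbourhood-of-a-random-point'' device used in Step 2 of the proof overview, now applied only to the backbone. I split the parameter vector as $\theta=(\theta^{\text{bb}},\theta^{\text{head}})$, where $\theta^{\text{bb}}$ contains the parameters of the first $k$ layers of $f_{\text{PT}}$ and $\theta^{\text{head}}=\{W,b\}$. The Jacobian then splits column-wise as $J(\theta)=[J_{\text{bb}}(\theta),J_{\text{head}}(\theta)]$. Hence $\|J(\theta)\|_F^2=\|J_{\text{bb}}\|_F^2+\|J_{\text{head}}\|_F^2$, and the Lipschitz difference splits the same way. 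It therefore suffices to bound each block, for all $\theta,\tilde\theta\in B(\theta_0,Cn^{-1/2})$.

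\textbf{Step 1: locating the backbone near a random point.} By the NTK convergence theorem at random initialization (inequality (\ref{pretainntk})), $\theta_{\text{PT}}$ lies in $B(\theta^{\text{rand}}_0,C_1n^{-1/2})$ for some Gaussian $\theta^{\text{rand}}_0$, with probability at least $1-\gamma$. Restricting to the first $k$ layers, $\theta^{\text{bb}}_0$ lies within $C_1n^{-1/2}$ of the backbone part of $\theta^{\text{rand}}_0$. Now form the auxiliary point $\bar\theta_0=(\theta^{\text{rand},\text{bb}}_0,\theta^{\text{head}}_0)$. The head $\theta^{\text{head}}_0$ is freshly sampled Gaussian, independent of everything used in pretraining, so $\bar\theta_0$ is distributed as a standard random initialization of the truncated network ``$k$ layers $+$ linear head''. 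This network is just a shallower instance of the architecture in Section \ref{nnsetup}. Therefore Lemma \ref{lipmain} applies to it at $\bar\theta_0$, with radius $C_3n^{-1/2}$ where $C_3=C+C_1$. By the triangle inequality, $B(\theta_0,Cn^{-1/2})\subset B(\bar\theta_0,C_3n^{-1/2})$, so both bounds transfer directly. This gives $L$ of order $H$ and $\tilde L$ of order $\tilde H=O(\log^\alpha n)$.

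\textbf{Step 2: checking that the independence is legitimate.} The subtle point, which I expect to be the main obstacle, is that $\theta^{\text{rand}}_0$ and the data-dependent $\theta_{\text{PT}}$ are coupled. The high-probability event of Lemma \ref{lipmain} is, however, an event about $\bar\theta_0$ alone: operator-norm bounds from Corollary 5.35 of \cite{vershynin2010}, plus the $\sqrt{\log n}$ sup-norm bounds on $\delta^{l+1}W^{l+1\top}$ that use the GIA (Remark \ref{giaremark}). The head enters these bounds only through $\|W\|_{\mathrm{op}}\le 3\sigma$ and through the Gaussian tail bound for $\|\delta^{k+1}W^\top\|_\infty$. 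Both require only that $W$ be independent of the backbone activations, which holds because $W$ is sampled after pretraining. Assumption \ref{GIAass} is exactly the condition that the output layer be independent, zero-mean and not reused. A union bound over the event of Step 1 and the event of Lemma \ref{lipmain} then gives the claim with probability at least $1-2\gamma$.

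\textbf{Step 3: explicit head-block check.} As a sanity check and fallback, I also bound the head block directly. We have $J_{\text{head}}=[g^k(\theta,\mathcal X),\mathbf 1]$, so $n^{-1/2}\|J_{\text{head}}\|_F\le n^{-1/2}\|g^k\|+O(1)=O(1)$, using the forward induction bound $n^{-1/2}\|f^k\|\le K_1$. Its Lipschitz constant follows from (\ref{lpatt}) together with the Lipschitz property of $\phi$. For the backbone block, $\delta^k=(W^\top)\odot\phi'(f^k)$ (for the last layer) has norm $O(1)$ w.h.p. and is $O(\sqrt{\log n})$-Lipschitz. The backward induction through attention, LayerNorm and fully connected layers is then identical to the one in the proof of Lemma \ref{lipmain}, starting from this $\delta^k$.
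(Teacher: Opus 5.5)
Your proposal is correct and follows the paper's proof. The paper also places the pretrained backbone within $C_1 n^{-1/2}$ of its random initialization using the pretraining-movement bound, and pairs that random backbone with the freshly sampled Gaussian head to form a random initialization $\theta^{r.i.}_0$. It then uses the triangle inequality to get $B(\theta_0, Cn^{-1/2}) \subset B(\theta^{r.i.}_0, (C+C_1)n^{-1/2})$ and reapplies Lemma \ref{lipmain}. Your Step 2 (union bound over the two high-probability events, and independence of the fresh head) spells out a point the paper leaves implicit, and your Step 3 is not needed.
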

\begin{proof}
    The proof idea is similar to the proof of Lemma 8 in \cite{sunarxiv}. Denote the parameter of the initial backbone part  as 
    $\theta^0_{\text{BackBone}}$ and the initial 
    parameter of linear head as $\{W^{r.i.}, b^{r.i.}\}$. 
So $\theta_0$ is $\{\theta^0_{\text{BackBone}}, W^0, b^0\}$. 

Let $\theta^{r.i.}_{\text{BackBone}}$ be the corresponding random normal
initialization  part for the backbone network.
Based on the corollary
 of Lemma \ref{lipmain}, the distance between them can be bounded as 
 \begin{equation}
     \| \theta^{r.i.}_{\text{BackBone}} - \theta^0_{\text{BackBone}}\| \leq C_1 n^{-\frac{1}{2}}
 \end{equation}

 Then for any $\theta \in B\left(\theta_0, C n^{-1/2}\right)$, we have w.h.p.
 \begin{equation}
 \begin{aligned}
     \| W - W^{r.i.}\| + \|b - b^{r.i.}\| + \| \theta_{\text{BackBone}} - \theta^{r.i.}_{\text{BackBone}}  \| &\leq 
 \| W - W^{r.i.}\| + \|b - b^{r.i.}\| \\ & + 
 \| \theta_{\text{BackBone}} - \theta^{0}_{\text{BackBone}}  \| +  \| \theta^{r.i.}_{\text{BackBone}} - \theta^0_{\text{BackBone}}\| \\
 &\leq  (C_1 + C) n^{-1/2}
 \end{aligned}
 \end{equation}
 Same argument holds for $\tilde{\theta}$. Therefore we have shown that both $\tilde{\theta}, \theta \in B\left(\theta^{r.i.}_0, C n^{-1/2}\right)$, where $\theta^{r.i.}_0$ is the random initialization of this network we have considered in this paper.  So make use of 
Lemma  \ref{lipmain} again, we concludes our proof.

\end{proof}

\begin{lemma}
    For this particular setting described above, the NTK converges to some analytical kernel as $n\rightarrow \infty$. 
\end{lemma}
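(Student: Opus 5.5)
The plan is to reduce the mixed-initialization case to the purely random-initialization case by the same ``nearby random network'' device used in the preceding lemma, and then transfer the kernel limit.

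Let $\theta_0=\{\theta^0_{\text{BackBone}},W^0,b^0\}$ be the mixed initialization. Let $\theta^{r.i.}_0=\{\theta^{r.i.}_{\text{BackBone}},W^0,b^0\}$ be the auxiliary network in which the backbone is replaced by the random Gaussian weights from which $f_{\text{PT}}$ was pretrained (Assumption \ref{linea4}). The head $\{W^0,b^0\}$ is Gaussian, independent of the backbone, and appears nowhere else in the network, so the simple GIA check of Remark \ref{giaremark} still holds. The truncated backbone $f^k$ composed with a linear head is itself a transformer-decoder architecture of the type in Section \ref{nnsetup}. Hence \cite{yang2020tensor2} applies to $\theta^{r.i.}_0$: its normalized empirical NTK $\frac1n J(\theta^{r.i.}_0)J(\theta^{r.i.}_0)^\top$ converges in probability to a deterministic analytic kernel $\Theta$ on any bounded input set, and in particular on $\boldsymbol X^{\text{tgt}}$ by Assumption \ref{xb}.

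Next I would bound the distance between the two parameter vectors. By the second inequality of (\ref{pretainntk}), applied to the full pretrained network, which contains the backbone coordinates, we get $\|\theta_0-\theta^{r.i.}_0\|_2=\|\theta^0_{\text{BackBone}}-\theta^{r.i.}_{\text{BackBone}}\|_2\le C_1n^{-1/2}$ with probability at least $1-\gamma$. Both points therefore lie in $B(\theta^{r.i.}_0,Cn^{-1/2})$. The local Lipschitz lemma for the mixed setting, or equivalently Lemma \ref{lipmain} at $\theta^{r.i.}_0$, then gives
\begin{equation}
\Big\|\tfrac1n J(\theta_0)J(\theta_0)^\top-\tfrac1n J(\theta^{r.i.}_0)J(\theta^{r.i.}_0)^\top\Big\|_F
\le \tfrac{1}{\sqrt n}\big(\|J(\theta_0)\|_F+\|J(\theta^{r.i.}_0)\|_F\big)\cdot\tfrac{1}{\sqrt n}\|J(\theta_0)-J(\theta^{r.i.}_0)\|_F
\le 2L\tilde L C_1 n^{-1/2}.
\end{equation}
Since $\tilde L=O(\log^\alpha n)$, the right-hand side tends to $0$. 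Combining this with the convergence of the random-initialization kernel, the triangle inequality yields convergence in probability of the mixed-initialization NTK to $\Theta$ as $n\to\infty$.

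The main obstacle is the mismatch of datasets. The $O(n^{-1/2})$ drift in (\ref{pretainntk}) was obtained while training on $\boldsymbol X^{\text{src}}$ with the full architecture, whereas the kernel is evaluated on $\boldsymbol X^{\text{tgt}}$ with a truncated backbone and a new head. The drift bound concerns parameters only, so it transfers directly. The Lipschitz constants, however, must hold uniformly over inputs in the bounded set containing both datasets. I would handle this as in the proof of Theorem 3 of \cite{sunarxiv}: the induction bounds in Lemma \ref{lipmain} depend on the data only through $\|x\|$ and $|\mathcal X|$, so boundedness absorbs the change of data into constants. The other point to check is that truncating at layer $k$ does not break the induction. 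It does not, because the head bounds are those of a final fully connected layer.
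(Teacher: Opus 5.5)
Your proposal is correct and follows essentially the paper's argument: the $O(n^{-1/2})$ drift of the pretrained backbone from its random initialization, combined with local Lipschitzness of the Jacobian, puts the mixed-initialization NTK within a vanishing error of the NTK of a fully random-initialized network, and \cite{yang2020tensor2} then supplies the limit. The only difference is packaging: the paper perturbs the head block $n^{-1}f_{\text{BackBone}}f_{\text{BackBone}}^{\top}$ and the backbone block $n^{-1}WJ_{\text{BackBone}}J_{\text{BackBone}}^{\top}W^{\top}$ separately, whereas you perturb the full Jacobian in a single Frobenius-norm estimate, which is a cleaner way to write the same bound.
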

\begin{proof}
    Denote the backbone part as  $f_{\text{BackBone}}$, then the NTK w.r.t the linear head $W$ is 
    \begin{equation}
        n^{-\frac{1}{2}} f_{\text{BackBone}}  \cdot n^{-\frac{1}{2}}f_{\text{BackBone}}^{\top}
    \end{equation}
According to  estimates (S86) in \cite{linearnn} estimates, we have
\begin{equation}
   n^{-\frac{1}{2}}  f_{\text{BackBone}}  = 
   \frac{
   x^0_{r.i.}
    }{ n^{\frac{1}{2}}} + {\mathcal{O}}\left(  \frac{\sqrt{\log n }}{n^{\frac{1}{2}}}\right)
\end{equation}
where $x^0_{r.i.}$ denote the 
random initialized version of 
$f_{\text{BackBone}}$.

NTK w.r.t.  $f_{\text{BackBone}}$ is 
\begin{equation}
\frac{1}{n}
    W J_{\text{BackBone}}(\theta_{\text{PT}})    
    J^{\top}_{\text{BackBone}}(\theta_{\text{PT}})    
    W^\top
\end{equation}
For $J_{\text{BackBone}}(\theta_{\text{PT}})$, we have
\begin{equation}
    J_{\text{BackBone}}(\theta_0) + \Delta
\end{equation}
where $\theta_0$ is the random 
initialization when training $f_{\text{PT}}$, and 
$\Delta$ is the differece between these two, i.e., 
$
J_{\text{BackBone}}(\theta_{\text{PT}}) - J_{\text{BackBone}}(\theta_0)
$. This term can also be controlled
by ${\mathcal{O}}\left(  \frac{\sqrt{\log n }}{n^{\frac{1}{2}}}\right)$
using because of the local  lipschtiz result we proved above. Putting these all together, the NTK here becomes 
\begin{equation}
\label{erreaf}
    \frac{1}{n} W J_{\text{BackBone}}(\theta_0)J^{\top}_{\text{BackBone}}(\theta_{0})    
    W^\top + \frac{x^0_{r.i.} {x^0_{r.i.}}^{\top}}{n} 
    + e 
\end{equation}
where the norm of error term $e$ vanishes to 0 with rate
${\mathcal{O}}\left(  \frac{\sqrt{\log n }}{n^{\frac{1}{2}}}\right)$.  

Note that the part excluding $e$ in (\ref{erreaf}) recovers the NTK for a random initialized network, 
then according to \cite{yang2020tensor2}, which proves the NTK for any network architecture under random normal initialization. 
And this concludes our proof.

\end{proof}
\subsection{Main results}
Based on the  the Local 
    Lipschitzness property proved above, we can get the following NTK convergence results under the two types of fine-tuning settings.

\begin{theorem}[NTK convergence]
\label{suppntkconver}
    Under assumptions 4.1-4.7 in the main text, for $\gamma>0$ and   $\eta_0<\eta_{\text {critical }}$, there exist $R_0>0, M\in \mathbb{N}$ and $L>1$, such that for every $m \geq M$, the following holds with probability at least $\left(1-\gamma\right)$ starting from $f_{\text{PT}}$ or $f^k_{\text{PT}}$ with an additional linear head added
    when applying gradient descent with learning rate $\epsilon=\frac{\eta_0}{m}$ considering standard parameterization, 
\begin{equation}
    \left\{\begin{array}{l}
\left\|g\left(\theta_\tau\right)\right\|_2 \leq\left(1-\frac{\eta_0 N_2 \lambda_{\min }}{3}\right)^\tau R_0 \\
\sum_{j=1}^\tau\left\|\theta_j-\theta_{j-1}\right\|_2 \leq \frac{\eta_0 \tilde{K} R_0}{\sqrt{n}} \sum_{j=1}^\tau\left(1-\frac{\eta_0 N_2\lambda_{\min }}{3}\right)^{j-1} \leq \frac{3 \tilde{K} R_0}{N_2\lambda_{\min }} n^{-\frac{1}{2}}
\end{array}\right.
\end{equation}
and
\begin{equation}
    \sup_{\tau}\left\|K_0-K_{\tau}\right\|_F \leq \frac{6 L^3 R_0}{{N_2}^2 \lambda_{\min }} n^{-\frac{1}{2}}
\end{equation}
where $K_0$ is the empirical kernel matrix induced by $f_{\text{PT}}$ and $K_\tau$ is the one induced by $f_{\tau}$ using target data $\boldsymbol{X}^{\text{tgt}}$. 
\end{theorem}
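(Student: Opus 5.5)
The plan is to run the argument of Theorem G.1 in \cite{linearnn} (as adapted in Theorems 3--4 of \cite{sunarxiv}), but centered at the pretrained point $\theta_{\text{PT}}$ (or at the mixed initialization $\theta_0=\{\theta^0_{\text{BackBone}},W^0,b^0\}$ in setting 2) rather than at a Gaussian draw. The only properties of the initialization that argument uses are three. First, the Jacobian must be bounded and locally Lipschitz on a ball $B(\theta_0, Cn^{-1/2})$. Second, the empirical kernel $K_0$ on $\boldsymbol{X}^{\text{tgt}}$ must be close to the limiting kernel, so that $\lambda_{\min}(K_0)\ge \lambda_{\min}/2$ for large $n$; Assumption \ref{linea1} gives $\lambda_{\min}>0$. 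Third, the initial residual $g(\theta_0)=f_{\text{PT}}(\boldsymbol{X}^{\text{tgt}})-\boldsymbol{Y}^{\text{tgt}}$ must be bounded by some $R_0$. Assumption \ref{dropb} supplies $R_0$, possibly after the normalization by $N_2$ used in the kernel.

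For setting 1, the first property follows exactly as in Figure \ref{proofsketchpic}. The pretraining run obeys (\ref{pretainntk}), so $\|\theta_{\text{PT}}-\theta_0^{r.i.}\|_2\le C_1 n^{-1/2}$. Any ball $B(\theta_{\text{PT}},Cn^{-1/2})$ therefore sits inside $B(\theta_0^{r.i.},(C+C_1)n^{-1/2})$, and Lemma \ref{lipmain} applies there with constants $H,\tilde H$. For setting 2, the first property is the mixed-initialization lemma above. For the second property, I would combine (\ref{ptntk}) with the NTK convergence at random initialization \citep{yang2020tensor2}. This gives $\|K_{\text{PT}}-K_{r.i.}\|_F=O(n^{-1/2})$, after transporting the bound from $\boldsymbol{X}^{\text{src}}$ to $\boldsymbol{X}^{\text{tgt}}$ using the boundedness in Assumption \ref{xb}. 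In setting 2, the kernel-convergence lemma just proved plays this role.

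Next comes the induction on $\tau$. The hypothesis is that $\theta_j\in B(\theta_0,Cn^{-1/2})$ for $j\le\tau$ and that the residual bound holds. Write $g(\theta_{\tau+1})=(I-\epsilon J(\tilde\theta_\tau)J(\theta_\tau)^\top)g(\theta_\tau)$ via the mean value theorem, with $\epsilon=\eta_0/n$. Then $\frac1n J(\tilde\theta_\tau)J(\theta_\tau)^\top$ differs from $K_0$ by at most $2H\tilde H\cdot Cn^{-1/2}$ in operator norm, by the local Lipschitz bound. Since $\eta_0<\eta_{\text{critical}}$, the contraction factor is at most $1-\eta_0N_2\lambda_{\min}/3$ once $n\ge M$, which gives the first inequality. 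The parameter step is $\|\theta_{j}-\theta_{j-1}\|_2\le \epsilon\|J\|\|g\|\le \eta_0 \tilde K n^{-1/2}\|g(\theta_{j-1})\|_2$. Summing the geometric series gives the second inequality, with the constant $3\tilde KR_0/(N_2\lambda_{\min})$. Choosing $C$ at least this constant closes the induction, because the iterates never leave the ball. Finally, $\|K_0-K_\tau\|_F\le \frac1n(\|J(\theta_0)\|+\|J(\theta_\tau)\|)\|J(\theta_0)-J(\theta_\tau)\|\le 2L\cdot L\,\|\theta_\tau-\theta_0\|_2$, with $L$ dominating $H$ and $\tilde H$, which gives the stated $O(n^{-1/2})$ kernel bound.

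The union of failure events has probability at most $\gamma$: the Lipschitz lemma, the kernel-concentration event, and the pretraining-trajectory event each hold with probability at least $1-\gamma/3$ for $n\ge M$.

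I expect the main obstacle to be the $\log$ factors. The Lipschitz constant $\tilde H$ is only $O(\log^\alpha n)$, not a constant. Absorbing it requires either a slightly weaker rate $\tilde O(n^{-1/2})$ or an argument that the $\sqrt{\log n}$ factors appear only in the $\|\delta^{l+1}W^{l+1\top}\|_\infty$ terms, so that $\tilde H\, n^{-1/2}\to0$ still yields the contraction. The stated constant $L$ should then be read as allowing this polylogarithmic dependence. A second delicate point is the transfer of the pretraining bound from $\boldsymbol{X}^{\text{src}}$ to $\boldsymbol{X}^{\text{tgt}}$. I would handle it by noting that the parameter-distance bound is data-free, and the Lipschitz lemma holds uniformly over a bounded input set.
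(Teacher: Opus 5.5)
Your proposal is correct at the paper's level of rigor and follows essentially the same route. The paper's proof simply defers to Theorem~3 of \cite{sunarxiv}, i.e.\ the Theorem~G.1 induction of \cite{linearnn} run from $\theta_{\text{PT}}$ (or from the mixed initialization). It obtains local Lipschitzness from the same ball-inclusion step $\theta_{\text{PT}}\in B(\theta_0^{r.i.},C_1n^{-1/2})$, and convergence of $K_0$ from the pretraining kernel-stability bound plus \cite{yang2020tensor2}, transferred from $\boldsymbol{X}^{\text{src}}$ to $\boldsymbol{X}^{\text{tgt}}$. Your version of that transfer, using the data-free parameter-distance bound and the Lipschitz lemma evaluated on $\boldsymbol{X}^{\text{tgt}}$, is more explicit than the paper's.

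Your caveat about the $O(\log^{\alpha} n)$ Lipschitz constant is also legitimate. It does not affect the contraction for large $n$, but it makes the kernel-drift bound $\tilde{\mathcal{O}}(n^{-1/2})$ rather than $6L^3R_0n^{-1/2}/(N_2^2\lambda_{\min})$ with a constant $L$, a discrepancy the paper's proof does not address.
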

\begin{proof}
    See proof of Theorem 3 in \cite{sunarxiv}. 
    \end{proof}

\begin{lemma}[Lemma 4.1 in the main text]
\label{supplinlemma}
Under assumptions 4.1 - 4.7 in the main text,  we are able to linearize the fine-tuned network $f_{\text{PT}}$ around the pretrained model $f_{\text{PT}}$, i.e., we have
\begin{equation}
\begin{aligned}
      f_{\text{FT}} &=   f_{\text{PT}} + \Delta \theta \nabla f_{\text{PT}}(\theta) + e \\
      & = f_{\text{PT}}  + \sum_i  \zeta_i \mathbb{K} (\cdot,\mathbf{X}_i ) + e
\end{aligned}
\end{equation}
   where  the $L_2$-norm of the error term $e$ is bounded by
   $\tilde{\mathcal{O}}\left( \frac{1}{{n^l}^{1/2}}\right)$.
\end{lemma}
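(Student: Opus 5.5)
The plan is to follow the standard linearization argument of \cite{linearnn} (their Theorem H.1), with random initialization replaced by the pretrained point $\theta_{\text{PT}}$. Both ingredients are already available at $\theta_{\text{PT}}$. The first is the local Lipschitzness of the Jacobian on $B(\theta_{\text{PT}}, C n^{-1/2})$, obtained by enlarging the ball around the random initialization $\theta_0$, as in Lemma \ref{lipmain} and the mixed-initialization lemma for setting 2. The second is Theorem \ref{suppntkconver}, which keeps the whole gradient descent trajectory $\{\theta_\tau\}$ inside $B(\theta_{\text{PT}}, \tfrac{3\tilde K R_0}{N_2\lambda_{\min}} n^{-1/2})$ and keeps $\sup_\tau\|K_0-K_\tau\|_F = O(n^{-1/2})$.

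First I define the linearized model
\[
f^{\text{lin}}_\tau(x) = f_{\text{PT}}(x) + \nabla_\theta f(\theta_{\text{PT}},x)(\theta^{\text{lin}}_\tau-\theta_{\text{PT}}),
\]
where $\theta^{\text{lin}}_\tau$ follows gradient descent on the squared loss with the frozen kernel $K_0$. Its training residual then satisfies the linear recursion $g^{\text{lin}}_{\tau+1} = (I-\eta_0 K_0) g^{\text{lin}}_\tau$. The true residual satisfies $g_{\tau+1} = (I-\eta_0 \bar K_\tau) g_\tau$, where $\bar K_\tau$ is the averaged kernel $\frac{1}{n}J(\theta_\tau)\int_0^1 J(\theta_\tau + s(\theta_{\tau+1}-\theta_\tau))^\top ds$. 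By the Lipschitz bound and the ball containment, $\|\bar K_\tau - K_0\|_F = \tilde O(n^{-1/2})$.

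Next I subtract the two recursions and apply a discrete Gr\"onwall argument. Since $\eta_0<\eta_{\text{critical}}$, the operator $I-\eta_0K_0$ is a contraction with rate $1-\eta_0\lambda_{\min}$, and the residual norm decays geometrically by the first line of Theorem \ref{suppntkconver}. Summing the resulting geometric series gives
\[
\sup_\tau \|g_\tau - g^{\text{lin}}_\tau\|_2 = \tilde O(n^{-1/2}).
\]
The parameter paths are handled the same way. Writing $\theta_{\tau+1}-\theta_\tau = -\frac{\eta_0}{n}J(\theta_\tau)^\top g_\tau$ and comparing it with $-\frac{\eta_0}{n}J(\theta_{\text{PT}})^\top g^{\text{lin}}_\tau$ gives $\|\theta_\tau - \theta^{\text{lin}}_\tau\|_2 = \tilde O(n^{-1})$.

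For a test point $x$ in the bounded domain (Assumption \ref{xb}), I expand $f(\theta_\tau,x)$ by Taylor's theorem around $\theta_{\text{PT}}$. The remainder is bounded by the Lipschitz constant times $\|\theta_\tau-\theta_{\text{PT}}\|_2^2$, and the test-point Jacobian at $\theta_{\text{PT}}$ multiplies the $\tilde O(n^{-1})$ parameter discrepancy. This yields $f_{\text{FT}} = f_{\text{PT}} + \Delta\theta\,\nabla f_{\text{PT}} + e$ with $\|e\|=\tilde O(n^{-1/2})$ under the standard-parameterization scaling. Because the Lipschitz constant is only $O(\log^\alpha n)$ rather than $O(1)$, the bound carries the $\tilde O$.

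Finally, for the kernel form I note that $\Delta\theta$ for the linear model lies in the row space of $J(\theta_{\text{PT}})$, namely $\Delta\theta = J(\theta_{\text{PT}})^\top \alpha$ with $\alpha = -\frac{\eta_0}{n}\sum_{j<\tau} g^{\text{lin}}_j$. Hence $\Delta\theta\,\nabla f_{\text{PT}}(x) = \sum_i \alpha_i \frac{1}{n}\langle \nabla f(\theta_{\text{PT}},x), \nabla f(\theta_{\text{PT}},\mathbf X_i)\rangle$, which is a combination of the empirical kernel at $\theta_{\text{PT}}$. I replace this empirical kernel by the stationary $\mathbb K$ using the convergence of the NTK at pretrained initialization (condition ii, via \cite{yang2020tensor2} and the $n^{-1/2}$ perturbation from $\theta_0$). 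I expect the error from this replacement to be of the same order, with $\zeta_i = N_2 \alpha_i$ absorbing the $1/N_2$ normalization. Since $\|\alpha\|_2$ is $O(1)$ by the geometric decay of $g^{\text{lin}}$, this error is controlled and folds into $e$.

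The main obstacle is the test-point and off-trajectory control. Lipschitzness was proved on $\mathcal X$ (the training set), but the linearization must hold in $L_2(P)$. I would first try extending the Jacobian bounds uniformly over the compact input set by a covering argument together with Lipschitzness in $x$, at the cost of extra $\log n$ factors.
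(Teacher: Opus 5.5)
Your proposal is correct and follows essentially the paper's route. The paper's proof combines the local Lipschitzness of the Jacobian on a ball around $\theta_{\text{PT}}$ (from the ball-enlargement argument) with the NTK-stability theorem, and defers the comparison with the frozen-kernel linear model of \cite{linearnn} to Theorem 4 and Lemma 2 of \cite{sunarxiv}; those are exactly the Gr\"onwall and Taylor steps you write out, and your covering argument for test points is more careful than anything in the paper. The one step you only assert is the $\tilde{\mathcal{O}}(n^{-1/2})$ rate for replacing the empirical NTK at $\theta_{\text{PT}}$ by the stationary $\mathbb{K}$ in the second equality: \cite{yang2020tensor2}, which the paper relies on, gives convergence in probability without a rate, so this step needs a separate fluctuation bound unless $\mathbb{K}$ there is read as the empirical kernel at $\theta_{\text{PT}}$.
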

\begin{proof}
     This can be derived  based on the Local 
    Lipschitzness property obtained 
    for the particular network considered in this paper. See proof of Theorem 4 and Lemma 2 in \cite{sunarxiv}. 
\end{proof}

\begin{theorem}[Theorem 4.1 in the main text]
\label{mainthmapp}
    Under assumptions 4.1-4.7 in the main text,
consider the case where $f_{\text{PT}}$ and 
    $f_{\text{FT}}$ share the same architecture. 
For any $\gamma > 0$, there exists $M \in \mathbb{N}$, a full connected neural network with width and hidden dimension $n \geq M$, the following holds with probability at least $1- \gamma - c_1 \exp \left(-c_2 N \widehat{\varrho}_N^2\right)$ over pre-trained initialization when applying gradient descent with learning rate $\epsilon = \frac{\eta_0}{m}$, and $\eta_0 $ for standard parameterization and NTK parameterization, respectively 

\begin{equation}
     \|f_{\widehat{T}_{op}} - f_{\text{tgt}}\|_2^2       \leq \mathcal{O}\left( N_2^{-\frac{1}{2}}\right).
\end{equation}
where $c_1, c_2$ are some universal positive constants.
\end{theorem}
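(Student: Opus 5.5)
The plan is to reduce the statement to the early-stopping analysis for kernel gradient descent of \cite{esnonpara}, applied to the residual $f_{\text{tgt}}-f_{\text{PT}}$. The extra cost of the evolving kernel is charged to the discrepancy term $D_\tau^2$, which is controlled using Theorem \ref{suppntkconver} and Lemma \ref{supplinlemma}.

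\textbf{Step 1 (reparametrization).} By Lemma \ref{supplinlemma}, on the high-probability event of Theorem \ref{suppntkconver} (probability at least $1-\gamma$), the iterates on $\boldsymbol{X}^{\text{tgt}}$ satisfy
\begin{equation}
f_{\tau+1}(\boldsymbol{X}^{\text{tgt}})-f_{\tau}(\boldsymbol{X}^{\text{tgt}}) = -\eta_0 K_\tau\bigl(f_\tau(\boldsymbol{X}^{\text{tgt}})-\boldsymbol{Y}^{\text{tgt}}\bigr)+r_\tau ,
\end{equation}
with $\|r_\tau\|$ of order $\tilde{\mathcal{O}}(n^{-1/2})$. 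Here $K_\tau$ is the empirical NTK at step $\tau$ normalized as in the standard parameterization. The comparison process is kernel gradient descent with the fixed matrix $K$ of the stationary kernel $\mathbb{K}$, started at $f_{\text{PT}}$. The NTK convergence at pretrained initialization gives $K_0\to K$ in probability. Theorem \ref{suppntkconver} gives $\sup_\tau\|K_0-K_\tau\|_F=\mathcal{O}(n^{-1/2})$. Together these yield $\sup_\tau\|K_\tau-K\|_F\to0$.

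\textbf{Step 2 (decomposition).} Split $\|f_\tau-f_{\text{tgt}}\|_{N_2}^2\le B_\tau^2+V_\tau+D_\tau^2$, as in (\ref{erdecom}).
\begin{enumerate}[label=(\roman*)]
\item Work in the eigenbasis of $K$. By Assumption \ref{inass}, the bias of the fixed-kernel recursion satisfies $B_\tau^2\le C_{\mathcal H}^2/(e\eta_\tau)$, where $\eta_\tau=\tau\eta_0$ is the running step sum.
\item By Assumption \ref{rmdassum} and sub-Gaussian concentration of quadratic forms, $V_\tau\lesssim \sigma^2\widehat{\mathcal R}_K(1/\sqrt{\eta_\tau})^2\eta_\tau$. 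This holds with probability $1-c_1\exp(-c_2N_2\widehat\varrho_{N_2}^2)$ for all $\tau\le\widehat T_{\max}$, by the argument of \cite{esnonpara}.
\item The discrepancy $D_\tau^2$ is the difference between the true and the fixed-kernel trajectories. Unroll both recursions, take their difference, and telescope the products $\prod(I-\eta_0K_j)$ versus $(I-\eta_0K)^\tau$. The contraction factor $(1-\eta_0N_2\lambda_{\min}/3)$ from Theorem \ref{suppntkconver} and Assumption \ref{linea1} keeps the accumulated error summable. This gives $D_\tau^2\le g(\tau)$ with $g$ nondecreasing and $g(\tau)=\mathcal{O}(\tau\, n^{-1/2}\,\mathrm{polylog}\, n)$. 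Assumption \ref{dropb} bounds the initial residual used in this step.
\end{enumerate}

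\textbf{Step 3 (stopping and rate).} At $\widehat T_{\text{op}}$, the definition of the critical radius in (\ref{epdef}) balances bias and variance. This gives $B^2+V\lesssim\widehat\varrho_{N_2}^2$. Choosing $n\ge M$ large enough makes $g(\widehat T_{\max})\le \widehat\varrho_{N_2}^2$. Next, only bounded trace and boundedness of the kernel are available, from Assumption \ref{xb}. The crude bound $\widehat{\mathcal R}_K(\varrho)\le \sqrt{\mathrm{tr}(K)/N_2}$ then gives $\widehat\varrho_{N_2}^2\lesssim N_2^{-1/2}$, with no spectral decay assumption needed. Finally, I pass from $\|\cdot\|_{N_2}$ to $\|\cdot\|_2$ by a uniform law over the ball of radius $\mathcal{O}(C_{\mathcal H})$ in $\mathcal H$, via localized Rademacher complexity. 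This costs another $\mathcal{O}(\widehat\varrho_{N_2}^2)$ on the same event. The linearization error $e$ from Lemma \ref{supplinlemma} is $\tilde{\mathcal O}(n^{-1/2})$ and is absorbed for large $n$.

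\textbf{Main obstacle.} The hard part is Step 2(iii). It requires controlling products of time-varying kernel matrices uniformly over $\tau\le\widehat T_{\max}$ while keeping $g$ monotone. It also requires checking that the $n^{-1/2}$ kernel perturbation stays dominated by $\widehat\varrho_{N_2}^2$. My first attempt is a Grönwall-type discrete bound, $\|u_\tau\|\le\sum_{j<\tau}\eta_0\|K_j-K\|\,\|f_j-\boldsymbol Y\|$, where $u_\tau$ is the gap between the two trajectories. I then insert the geometric decay of $\|f_j-\boldsymbol Y\|$ from Theorem \ref{suppntkconver}.
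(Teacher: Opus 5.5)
Your proposal follows essentially the same route as the paper. The paper's proof combines the NTK stability bound of Theorem~\ref{suppntkconver} with the linearization of Lemma~\ref{supplinlemma}. It then runs the bias/variance/discrepancy decomposition of \cite{esnonpara} as in Corollary~1 of \cite{sunarxiv}, and finishes with the empirical-to-population norm comparison. Your trace bound $\widehat{\mathcal{R}}_K(\varrho)\le\sqrt{\mathrm{tr}(K)/N_2}$ correctly accounts for the spectrum-free $N_2^{-1/2}$ rate.

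One small correction: $K_0\to K$ is known only in probability, with no rate. So state $g$ as a nondecreasing bound that tends to $0$ as $n\to\infty$ for fixed $N_2$, rather than as $\mathcal{O}(\tau n^{-1/2}\,\mathrm{polylog}\,n)$. Also bound the error at $\widehat{T}_{\text{op}}$ through its defining minimization, evaluated at $\widehat{T}_{\max}$. This is all Step~3 needs.
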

\begin{proof}
This can be proved based on the proved NTK convergence result (Theorem \ref{suppntkconver} )
and network linearlization result under the pre-trained initialization (Lemma \ref{supplinlemma}). See detailed steps in the proof of  Corollary 1   in \cite{sunarxiv}.
\end{proof}

\begin{theorem}[Theorem 5.2 in the main text]
\label{suppeasiythm}
    Under assumptions 4.1-4.7 in the main text,
    consider the case when we append a linear head to the hidden layer of pretrained network. 
for any $\gamma > 0$, there exists $M \in \mathbb{N}$, a full connected neural network with width  and hidden dimension $n \geq M$, the following holds with probability at least $1- \gamma - c_3 \exp \left(-c_4 N \widehat{\varrho}_N^2\right)$ over warm-start initialization when applying gradient descent with learning rate $\epsilon = \frac{\eta_0}{m}$, and $\eta_0 $ for standard parameterization and NTK parameterization, respectively 
\label{cor1}
\begin{equation}
     \|f_{\widehat{T}_{op}} - f_{\text{tgt}}\|_2^2       \leq \mathcal{O}\left( N_2^{-\frac{1}{2}}\right).
\end{equation}
where $c_3, c_4$ are some universal positive constants.
\end{theorem}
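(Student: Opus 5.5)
The proof follows the same route as Theorem \ref{mainthmapp}. The only new ingredient is the mixed initialization, and the two lemmas proved in the subsection on setting 2 already handle it. The final step is the early-stopping analysis of \cite{sunarxiv,esnonpara}, with $C_{\mathcal{H}}$ replaced by $\|\text{LinearHead}(f^k_{\text{PT}})-f_{\text{tgt}}\|_{\mathcal{H}}$.

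\textbf{Step 1: Jacobian regularity.} By the setting-2 Lipschitz lemma, the backbone parameters $\theta^0_{\text{BackBone}}$ lie in $B(\theta^{r.i.}_{\text{BackBone}},C_1n^{-1/2})$. This containment comes from the second inequality of Theorem \ref{suppntkconver} applied to pretraining. The head $\{W^0,b^0\}$ is itself a Gaussian draw. Hence every $\theta\in B(\theta_0,Cn^{-1/2})$ lies in a ball of radius $(C+C_1)n^{-1/2}$ around a genuine random initialization of the composite network, and Lemma \ref{lipmain} gives $n^{-1/2}\|J(\theta)\|_F\le L$ and $n^{-1/2}\|J(\theta)-J(\tilde\theta)\|_F\le\tilde L\|\theta-\tilde\theta\|_2$.

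\textbf{Step 2: kernel convergence and stability.} The second setting-2 lemma shows the initial NTK equals the random-initialization NTK plus an $O(\sqrt{\log n}/\sqrt n)$ error. That NTK converges to an analytic kernel $\mathbb{K}$ by \cite{yang2020tensor2}, with the GIA check of Remark \ref{giaremark} holding because the head is fresh and unshared. Assumption \ref{linea1} then yields $\lambda_{\min}>0$ for the limit, so w.h.p. also for $K_0$ at large $n$. With $\epsilon=\eta_0/n$ and $\eta_0<\eta_{\text{critical}}$, the induction in Theorem \ref{suppntkconver} (proof of Theorem 3 in \cite{sunarxiv}) carries over verbatim. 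It gives geometric decay of the residual, total parameter movement $O(n^{-1/2})$ (so iterates stay in the ball of Step 1), and $\sup_\tau\|K_0-K_\tau\|_F=O(n^{-1/2})$. The analogue of Lemma \ref{supplinlemma} then linearizes $f_\tau$ around $\text{LinearHead}(f^k_{\text{PT}})$ with $\tilde O(n^{-1/2})$ error.

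\textbf{Step 3: early-stopping bound.} Decompose $\|f_\tau-f_{\text{tgt}}\|_{N_2}^2\le B_\tau^2+V_\tau+D_\tau^2$. Here $B_\tau^2+V_\tau\le C/\eta_\tau$ for $\tau\le\widehat T_{\max}$, which holds with probability $1-c_3\exp(-c_4N_2\widehat\varrho_{N_2}^2)$ by \cite{esnonpara}. This uses Assumptions \ref{inass} and \ref{rmdassum} and the initial residual bound of Assumption \ref{dropb}; that assumption now applies to the head-augmented initialization, since the random head output is $O(1)$ per sample w.h.p. The term $D_\tau^2\le g(\tau)$ collects the kernel drift and linearization error from Step 2. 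Choose $n\ge M$ large enough that $g(\widehat T_{op})$ is dominated by $\widehat\varrho_{N_2}^2$. At $\widehat T_{op}$ the empirical error is then $O(\widehat\varrho_{N_2}^2)\le O(N_2^{-1/2})$, using the generic bound $\widehat\varrho_{N_2}^2\lesssim N_2^{-1/2}$ valid for any bounded kernel. Finally, pass from $\|\cdot\|_{N_2}$ to $\|\cdot\|_2$ by the uniform-deviation argument of Corollary 1 in \cite{sunarxiv}, exactly as in Theorem \ref{mainthmapp}. A union bound with the probability $\gamma$ from Step 2 completes the argument.

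\textbf{Main obstacle.} The hardest part is Step 2. The initial network is not a random draw, so concentration of $K_0$ around $\mathbb{K}$, and in particular a uniform lower bound on $\lambda_{\min}(K_0)$, must be inherited through the perturbation bound on the target data $\boldsymbol{X}^{\text{tgt}}$ rather than the source data. I would handle this as in the proof of Theorem 3 of \cite{sunarxiv}: first invoke boundedness of both datasets (Assumption \ref{xb}) to make the Lipschitz constants uniform in the input, then apply Weyl's inequality to $K_0-K^{r.i.}$.
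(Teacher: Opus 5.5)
Your proposal is correct and follows essentially the paper's route. You place the mixed initialization inside a ball around a genuine random initialization (the setting-2 Lipschitz lemma), get kernel convergence, stability and linearization from the setting-2 NTK lemma and Theorem \ref{suppntkconver}, and then run the early-stopping analysis of \cite{sunarxiv,esnonpara} exactly as in Theorem \ref{mainthmapp}, while also supplying details the paper leaves implicit, such as why the initial residual bound survives the random head and why $\widehat\varrho_{N_2}^2 \lesssim N_2^{-1/2}$ for any bounded kernel.
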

\begin{proof}
The proof strategy is the same as that of  Theorem \ref{mainthmapp}.      
\end{proof}

\section{Task Arithmetic}

\subsection{Proof of Corollary 5.1.1}
Apply Lemma \ref{supplinlemma}, consider the operation of negating a task vector $\tau$, for the resulting network and $f_{\text{FT}}$, we have
\begin{equation}
    f_{\text{neg}} =  f_{\text{PT}} - \tau \nabla f_{\text{PT}}(\theta)
    + e_1
    , \quad    f_{\text{FT}} \approx f_{\text{PT}}  + \tau \nabla f_{\text{PT}}(\theta) + e_2
\end{equation}
Consider evaluate $f_{\text{neg}}$
and $f_{\text{FT}}$
  on test sets using the MSE loss. We can compute
  $MSE(f_{\text{neg}}) - 
 MSE(f_{\text{FT}})$:
  \begin{equation}
 4 \left(Y^{\text{test}} - f_{\text{PT}}(X^{\text{test}})\right)^{\top} \tau \nabla f_{\text{PT}}(\theta,
 X^{\text{test}}
 ).
  \end{equation}
Then apply Theorem \ref{suppeasiythm}, we can approximate  $Y^{\text{test}}$ as 
\begin{equation}
    Y^{\text{test}}  
      =  f_{\text{PT}}(X^{\text{test}})  + \tau \nabla f_{\text{PT}}(\theta,
 X^{\text{test}}) + w^{\text{test}} + e_3
\end{equation}
Then the expectation of the above difference, we have
\begin{equation}
\begin{aligned}
 \mathbb{E}\left[  MSE(f_{\text{neg}}) - 
 MSE(f_{\text{FT}}) \right] = 
    \mathbb{E}\left[ 
 4  {\nabla f_{\text{PT}}(\theta,
 X^{\text{test}})}^{\top} \tau^{\top} \tau \nabla f_{\text{PT}}(\theta,
 X^{\text{test}}) \right] + \sum_{i=1}^3\mathbb{E} e_i
\end{aligned}
\end{equation}
Since the error of $e_i$ can be controlled by $\tilde{\mathcal{O}}\left( \frac{1}{{n}^{1/2}}\right)$ with sufficiently large width and ${\nabla f_{\text{PT}}(\theta,
 X^{\text{test}})}$ converges to some stationary Jacobian in probability, we can conclude that w.h.p. 
 $\mathbb{E}\left[  MSE(f_{\text{neg}}) - 
 MSE(f_{\text{FT}}) \right] >0$
when the width and hidden dimension $n$ is sufficiently large. 
\subsection{Proof of Corollary 5.1.2}

For the corresponding  fine-tuning model, apply Lemma \ref{supplinlemma}, we have
\begin{equation}
\begin{aligned}
     f_{\text{FT}_1} & = f_{\text{PT}} + 
     \underbrace{
     \sum_{\mathbf{X}_i \in 
    \boldsymbol{X}^{\text{task}_1}
    } \gamma^{t}_{i} \mathbb{K}(\cdot, \mathbf{X}_i)}_{g_1} + e_1 \\
      f_{\text{FT}_2} &= f_{\text{PT}} + 
     \underbrace{ 
      \sum_{\mathbf{X}_i \in 
    \boldsymbol{X}^{\text{task}_2}
    } \lambda^{t}_{i} \mathbb{K}(\cdot, \mathbf{X}_i)}_{g_2} + e_2
\end{aligned}
\end{equation}
And this indicates the multi task model obtained by applying task vector can be written as 
\begin{equation}
    f_{\text{MultiTask}} = f_{\text{PT}}
    +
    g_1 + g_2  + e_1  +e_2
\end{equation}
According to assumption 5.1 in the main text and the reproducing property of RKHS, for any $\mathbf{X}_i \in X^{\text{task}_1},  \mathbf{X}_j \in X^{\text{task}_2}$
\begin{equation}
    \mathbb{K}( \mathbf{X}_i, \mathbf{X}_j ) = 0.
\end{equation}
And since error terms $e_i$ can be controlled for sufficiently large $n$, we can derive the desire high probability estimation results.

\end{document}